\def\subsection{\@startsection{subsection}{2}%
  \z@{.5\linespacing\@plus.7\linespacing}{.5\linespacing}%
  {\normalfont\bfseries}}
\newcommand{\inner}[2]{\langle#1,#2\rangle}
\newlength\mylen
\newtheorem{theorem}{Theorem}[section]
\newtheorem{proposition}[theorem]{Proposition}
\newtheorem{remark}{Remark}
 \newcommand{\abs}[1]{\left\vert #1\right\vert}
\renewcommand{\d}{\ensuremath{\mathrm{d}}}
\newcommand{\dt}{ \ensuremath{\mathrm{d} t } }
\newcommand{\dx}{ \ensuremath{\mathrm{d} x} }
\newcommand{\dy}{ \ensuremath{\mathrm{d} y } }
\newcommand{\dz}{ \ensuremath{\mathrm{d} z } }
\newcommand{\dW}{ \ensuremath{\mathrm{d} W } }
\def\H_1_u{H_u^{-1}}
\newtheorem{assumption}{Assumption}
\begin{document}
 
\title {  Weak Generative Sampler to Efficiently  Sample Invariant Distribution of Stochastic Differential Equation}

 \author{ Zhiqiang Cai$^\dag$, Yu Cao$^\ddag$, Yuanfei Huang$^\S$, Xiang Zhou$^\star$$^\P$}

 \address{$^\star$Corresponding author.}
 \address{$^\dag$Department of Data Science, City University of Hong Kong, Kowloon, Hong Kong SAR, zqcai3-c@my.cityu.edu.hk}
 \address{$^\ddag$Institute of Natural Sciences and School of Mathematical Sciences, Shanghai Jiao Tong University, Shanghai 200240, China, yucao@sjtu.edu.cn. YC is supported by NSFC grant No. 12401573 and is sponsored by Shanghai Pujiang Program (23PJ1404600).}
 \address{$^\S$Department of Data Science, City University of Hong Kong, Kowloon, Hong Kong SAR, yhuan26@cityu.edu.hk}
 \address{$^\P$ Department of Mathematics and Department of Data Science, City University of Hong Kong, Kowloon, Hong Kong SAR, xizhou@cityu.edu.hk.
 XZ acknowledges the support from 
 Hong Kong General Research Funds  (11308121, 11318522,   11308323),  and  the NSFC/RGC Joint Research Scheme [RGC Project No. N-CityU102/20 and NSFC Project No.
12061160462]. 
}
\address{{The authors thank the anonymous referees for their helpful comments that improved the
quality of the manuscript.}}


\begingroup
\def\uppercasenonmath#1{} 
\let\MakeUppercase\relax 
\maketitle
\endgroup

\begin{abstract}
Sampling invariant distributions from an It\^o diffusion process presents a significant challenge in stochastic simulation. Traditional numerical solvers for stochastic differential equations require both a fine step size and a lengthy simulation period, resulting in biased and correlated samples. The current deep learning-based method solves the stationary Fokker--Planck equation to determine the invariant probability density function in the form of deep neural networks, but they generally do not directly address the problem of sampling from the computed density function.  In this work, we introduce a framework that employs a weak generative sampler (WGS) to directly generate independent and identically distributed (iid) samples induced by a transformation map derived from the stationary Fokker--Planck equation. Our proposed loss function is based on the weak form of the Fokker--Planck equation, integrating normalizing flows to characterize the invariant distribution and facilitate sample generation from a base distribution. Our randomized test function circumvents the need for min-max optimization in the traditional weak formulation. Our method necessitates neither the computationally intensive calculation of the Jacobian determinant nor the invertibility of the transformation map.  A crucial component of our framework is the adaptively chosen family of test functions in the form of Gaussian kernel functions with centers related to the generated data samples. Experimental results on several benchmark examples demonstrate the effectiveness and scalability of our method, which offers both low computational costs and excellent capability in exploring multiple metastable states.

\end{abstract}


 {\noindent {{\bf Keywords}:} stochastic differential equation, Fokker--Planck equation, invariant distribution sampling, 
 deep learning for PDEs,  generative model}\\

 {\noindent {\bf MSC2020}: 65N75, 65C20, 68T07

\section{Introduction}
Stochastic differential equations (SDEs) have been widely employed to model the evolution of dynamical systems under uncertainty. They arise in many disciplines such as physics, chemistry, biology, and finance. For many realistic models, the system will reach a dynamical equilibrium in the long run, that is, the probability distribution of the system will reach an invariant measure.
Computing and sampling this invariant distribution is a long-standing computational problem with applications across diverse disciplines: for instance, sampling from the invariant measure facilitates more efficient exploration of phase space, thereby enhancing our comprehension of rare events \cite{cousins2016reduced, mohamad2016probabilistic,zhang2022koopman} and aiding in the estimation of physical quantities for certain distributions \cite{cui2024multilevel, mathias_rousset_free_2010} and studying the free energy \cite{darve2001calculating}, the Bayesian data assimilation \cite{reich2015probabilistic}  as well as studying structural biology matter \cite{wuttke2014temperature}.

In this work, we consider the following SDE on $\mathbb{R}^d$:
\begin{equation}\label{eqn:sde}
\d X_t=b(X_t)\dt+\sqrt{2}\sigma(X_t) \dW_t,
\end{equation}
where the vector-valued function $b: \mathbb{R}^d\rightarrow \mathbb{R}^d$ is continuous, and $\sigma:\mathbb{R}^{d} \to \mathbb{R}^{d\times w}$  is a matrix-valued function, and $W_t$ is a $ w$-dimensional standard Brownian motion. 

The probability density of the SDE at time $t$, denoted as $p_t$, is known to evolve according to the Fokker--Planck equation $\partial_t p_t = \mathcal{L} p_t$, where the differential operator 
\begin{equation}
\label{eqn::fp}
   \mathcal{L}p := \nabla\cdot\big(-bp\big) + \nabla^2: \big( Dp\big) ,
\end{equation}
and where  $\nabla^2:(Dp)=\sum_{ij}\partial_{ij}(D_{ij}p)$ and the diffusion matrix $D=\sigma\sigma^\top=(D_{ij})$ satisfies the uniform ellipticity that $\lambda I \le D(x) \le \lambda^{-1} I$ for all $x$ with a positive constant $\lambda$. The invariant distribution $p$ is the long time limit of the distribution of $X_t$, 
$p:=\underset{t\to +\infty}{\lim}p_t$.
If the invariant distribution $p$  exists uniquely regardless of the initial distribution $p_0$, this SDE  \eqref{eqn:sde}  is called ergodic. 
 Under the mild assumption of the drift $b(x)$ and the uniform elliptic diffusion $\sigma(x)$ (see Assumption \ref{assum:SDE} in Section \ref{s21} below), the ergodicity holds.  For example, if there is a compact domain $A \subset \mathbb{R}^d$ such that all the flows associated with the vector field $b$ are attracted \footnote{This means that for any $x_0 \in A$, the solution $x_t$ of the ODE $\dot{x}_t=b(x_t)$ is always in $A$ for any $t>0$.} to $A$ and 
$b(x)\cdot n(x) <0$ for all $x \in \partial A$, where $n(x)$ is the exterior normal of the boundary of $A$, then the SDE  \eqref{eqn:sde} 
 is ergodic.

 Estimating the invariant distribution $p$ can be achieved by finding the zero eigen-state of the operator $\mathcal{L}$:
\begin{equation}\label{eqn:sfpe}
   \mathcal{L}p = 0.
\end{equation}
which is also known as the \emph{stationary Fokker--Planck equation} (SFPE).

 Traditional techniques in numerical PDEs such as the finite difference method or finite element method \cite{filbet2002numerical, kumar2006solution}, can be utilized to solve SFPE.
However, these methods encounter challenges due to the \emph{curse of dimensionality}.
Moreover, directly estimating the invariant distribution does not inherently provide a scheme for efficiently generating samples from this distribution, which is also a core task in many applications.

To overcome the dimensionality limitation and to achieve the goal of sample generation, Monte Carlo-based methods have been extensively studied in the literature. 
A typical approach is to adopt
numerical schemes \cite{darve2009computing,ermak_numerical_1980,gilsing2007sdelab, hutzenthaler2015numerical} to evolve the SDE for a sufficiently long time to generate the samples of the invariant distribution.  
There is an important class of reversible process where the invariant distribution has the known expression up to a constant:
when the drift term of the SDE has the gradient form $b = -\nabla U$ for some given potential function $U$ and the diffusion coefficient is constant, the associated SDE is known as the overdamped Langevin dynamics \cite{cheng_underdamped_2018, pavliotis_stochastic_2014};
the invariant measure is then simply the Boltzmann distribution $\propto \exp(-U(x)/k_B T)$ where $k_B$ is the Boltzmann constant and $T$ is the thermodynamic temperature.
Under certain assumptions of confining potential,   the density function $p_t$ will converge to its unique equilibrium $p$ exponentially fast as $t$ goes to infinity \cite{pavliotis_stochastic_2014}.
This  fast mixing property is an important ingredient for the efficiency of
many Langevin-based sampling algorithms by adopting various numerical discretization schemes \cite{brunger1984stochastic, cheng_underdamped_2018, dalalyan_sampling_2020, ermak_numerical_1980, gronbech2013simple, leimkuhler2013rational, shen_randomized_2019}.
However, when the stochastic system with non-convex potential exhibits meta-stability,
it takes an extremely large time for $X_t$ to converge to the equilibrium  under low temperature \cite{pavliotis_stochastic_2014}.
This challenge has attracted much attention, which has been addressed via e.g., parallel tempering method \cite{lu_methodological_2019, yu_multiscale_2016}, annealing-based methods \cite{neal_annealed_2001}. As a remark, different from the Langevin-type dynamics, our approach below is broad and we study the general form of the SDE or the Fokker--Planck equation, provided that the invariant distribution exists, without requiring that the drift term $b$ should be in the gradient form.

 With the unprecedented success of deep neural networks in powerful expressiveness,
many machine learning techniques have been developed in the past few years to parameterize and solve high-dimensional partial differential equations \cite{bruna2024neural, weinan2021algorithms, han2018solving}.
Notably, the \emph{deep Ritz method} \cite{e_deep_2018} is an early pioneering work in this area by exploiting the variational formulation of Poisson equations. 
The \emph{physics-informed neural network} (PINN) \cite{raissi2019physics} proposed to directly incorporate the structure of PDE into the loss function.
For the particular problems to study in this work, deep learning like PINN  is currently the backbone for many methods to tackle the solution of SFPE \cite{gu2023stationary, krishnapriyan2021characterizing, sun2022data, xu2020solving, zhai2022deep}. 
Another important method, called  \emph{weak adversarial network} (WAN), was proposed to replace the $L^2$ loss in PINN via a min-max problem, whose flexibility is an important feature to develop our methods below. The \emph{weak collocation regression} \cite{lu2024weak} utilized the weak form of the Fokker--Planck equation but focused on the inverse stochastic problems.

As discussed above, these deep learning-based PDE solvers cannot inherently sample the invariant distribution. 
For the machine-learning accelerated sampling methods, generative models play a significant role. 
Many of these models, such as variational autoencoders \cite{kingma2013auto}, generative adversarial networks \cite{goodfellow2014generative}, normalizing flows \cite{dinh2016density,rezende2015variational}, and score-based models \cite{song2020score} aim to learn a mapping that transports a base distribution to a target distribution based on a given dataset of samples. For instance, the samples of the target distribution could be a set of images. 
By sampling from the base distribution, we can readily generate samples from the target distribution using the trained mapping. 
A common situation for applying generative models to sampling is that the potential energy for the Boltzmann distribution is known, and normalizing flow-based methods have been used to facilitate the sampling of the target distribution \cite{gabrie_adaptive_2022,noe_boltzmann_2019}. To save the cost of computing
  Jacobian determinant for full matrix,
\cite{marzouk2016sampling,parno2018transport, zech2022sparse} and 
\cite{tang2022adaptive, KRnet-Wan2022, zeng2023adaptive} adopted the triangular building blocks  for  generative maps. 
We emphasize that our task in this paper is not to generate samples from the given dataset, but from a given stochastic differential equation.

The collective power of deep learning methods like PINN and generative models enable us to simultaneously achieve estimating the density and sample from invariant distribution, while avoiding the curse of dimensionality. Recently,  \cite{tang2022adaptive,zeng2023adaptive} proposed the method called \emph{Adaptive Deep Density Approximation} (ADDA), which is based on PINN to utilize normalizing flow to parameterize the invariant measure, and subsequently employ the PINN loss to train the normalizing flow. However, this method requires the time-consuming computation of the Jacobian determinant because it uses the expression of probability density function $p_\theta$. This issue is worsened in the PINN formalism due to the need to take higher-order derivatives of $p_\theta$ in the differential operator $\mathcal{L}$ in \eqref{eqn::fp}.

\subsection*{Our contributions}

In this paper, we focus on estimating the density distribution $p$ and sampling the invariant measure of the stochastic differential equations \eqref{eqn:sde}. We consider the case where the drift term $b$ and the diffusion matrix $\sigma$ are known, but we do not assume any data either from the SDE simulation or the observation measurement  is available. Our primary goal is to sample the invariant measure of the SDE \eqref{eqn:sde}. It is important to note that the drift term $b$ is {\it not} assumed to have a gradient form, which implies that  the invariant distribution is unlikely to have a simple, closed-form expression.

To address this, we propose a novel method called the \emph{weak generative sampler} (WGS), which samples according to  the invariant measure based on the weak formalism of the stationary Fokker--Planck equation. This allows the loss function to be expressed as an expectation with respect to the invariant measure. We employ normalizing flow  (NF) to parameterize the transport map from the base distribution to the invariant distribution. This approach enables us to approximate the loss function using sample data points generated by the normalizing flow, without the need to compute the pdf and its gradient/Hessian during the training. 

The main contributions of this work are as follows:
\begin{itemize}[leftmargin=1cm]
 
 \item \textbf{Robust and efficient method}
 \begin{enumerate}

    \item \textbf{Jacobian-free generative map for PDE}: 
For our problem of invariant measure associated with the PDE \eqref{eqn:sfpe}, the computation of the loss function does not involve calculating the  Jacobian determinant of the normalizing flow, as no explicit expression of the density function is needed. This can generally accelerate the training process by an order of magnitude.

    \item \textbf{Randomized and adaptive test functions}:  
Our method leverages the weak formulation of SFPE 
but does not rely on min-max optimization.
By randomizing the test function, the algorithm becomes more efficient and opens the door to adaptive training design. This not only reduces computational costs but also helps enhance the robust exploration for the SDE with a multi-modal invariant distribution.
        \end{enumerate}
    \item \textbf{Theoretical interpretation}: We provide a rigorous theoretical analysis to establish the bounds on the squared $L^2$ error between the estimated density function and the true density function; see Theorems \ref{theo:U} and \ref{theo:R}.
    
    \item \textbf{Numerical verification}: We conduct numerical experiments on both low and high-dimensional problems, with or without the presence of meta-stable states. We compare our  WGS with the method in \cite{tang2022adaptive} and demonstrate that the WGS achieves comparable accuracy with a significantly lower computational cost. Furthermore, the  WGS can explore all the meta-stable states in both low temperature and high temperature scenarios.  \end{itemize}

The rest of the paper is organized as follows. In Section \ref{s2}, we will discuss the framework of WGS, including the network structure and, in particular, the construction of the loss function and the test functions. In Section \ref{s3}, we develop the theoretical error analysis for WGS, and in Section \ref{s4}, we present four numerical examples to illustrate the efficacy of WGS. Finally, we conclude the paper in Section \ref{s5}.

\section{Numerical Methods}\label{s2}

First, we explain the weak formalism and the motivations behind our proposed method in Section~\ref{s21}. Next, in Section~\ref{s22}, we develop the weak generative sampler, covering both the theoretical aspects and the empirical loss function that is used in practice. The network structure and algorithms are detailed in Sections~\ref{s23} and \ref{s24}, respectively.

\subsection{Fokker--Planck equations and test functions}
\label{s21}
We assume that the drift term $b$ and the diffusion coefficient $\sigma$ satisfy the following conditions.
\begin{assumption}\label{assum:SDE}
    \begin{enumerate}
        \item There is a positive constant $K_1$ such that
        $2x\cdot b(x) + |\sigma(x)|\leq K_1(|x|^2 + 1).$ for all $x\in\mathbb{R}^d$ and $b$ is locally uniformly continuous.
        \item There is a positive constant $\delta_0$ such that, $\forall R>0$ and $x,z\in\mathbb{R}^d$ satisfying $|x|\leq R$, $|z|\leq R$ and $|x-z|\leq \delta_0$,
        $2(x-z)\cdot(b(x)-b(z)) + |\sigma(x)-\sigma(z)|^2\leq K_R|x-z|^2$ holds,  where $K_R$ is a positive constant.
        \item There is a $\lambda_0>0$ such that $\xi\cdot D(x)\xi\geq \lambda_0|\xi|^2$ for all $x,\xi\in\mathbb{R}^d$. Denote by $\sigma_{\lambda_0}$ the unique symmetric nonnegative definite matrix-valued 
function such that $\sigma^2_{\lambda_0}= D -\lambda_0 I$. And $\sigma_{\lambda_0}$ is locally H\"{o}lder continuous with exponent $\delta_{\lambda_0}>\frac{1}{2}$.
\item There is a positive constant $\alpha$, a compact $C\subset\mathbb{R}^d$, a measurable $f:\mathbb{R}^d\mapsto[1,\infty)$,  and twice continuously differentiable function $V:\mathbb{R}^d\mapsto\mathbb{R}_+$ satisfying $\mathcal{L}V(x)\leq -\alpha f(x) + 1_C(x)$ for all $x\in\mathbb{R}^d$.
    \end{enumerate}
\end{assumption}
Under such an assumption, the existence and uniqueness of the solution of It\^{o} SDE \eqref{eqn:sde} can be promised, and it has a unique invariant distribution $p$; see e.g. \cite[Theorem 2.2, Theorem 2.4, Remark 5.5, Lemma 6.4]{xi2019jump}. The invariant distribution $p$, with the properties $p(x)\geq 0$ and $\int_{\mathbb{R}^d} p(x) \dx =1$,  is governed by the stationary Fokker--Planck equation (SFPE):
\begin{align*}
\mathcal{L}p(x) = 0,\quad  {\forall} x\in \mathbb{R}^d.
\end{align*}
Given any test function $\varphi\in C_c^\infty(\mathbb{R}^d)$, 
  the SFPE then gives
\begin{align}\label{SFPE2}
\inner{\varphi}{\mathcal{L}p} = 0,\quad\forall \varphi\in C_c^\infty(\mathbb{R}^d),
\end{align}
where $\inner{f}{g} := \int_{\mathbb{R}^d} f(x) g(x)\ \dd x$ is the standard $L^2$ inner product $L^2(\mathbb{R}^d)$,
and $C_c^\infty(\mathbb{R}^d)$ is the set of  smooth functions with compact support on 
$\mathbb{R}^d$.

To solve this SFPE, \cite{zang2020weak} proposed the form of the min-max optimization problem involving two neural network functions, one for the solution and another for the test function, by solving
\begin{equation}\label{losswan}
    \min_{p}\max_{\varphi: \|\varphi\|_2=1} \abs\big{\inner{\varphi}{\mathcal{L} p}}^2\quad\text{with}\quad p\geq 0\quad\text{and}\quad\int_{\mathbb{R}^d}p\dx=1,
\end{equation}
where $\|\cdot\|_2$ denotes the $L^2$-norm in $\mathbb{R}^d$.
Note that in theory, the optimal $\varphi$ here is simply $\mathcal{L}p/\|\mathcal{L}p\|_2$ and \eqref{losswan}  essentially   minimizes the $L^2$ loss $\|\mathcal{L}p\|_2$ as in the PINN. More generally, as \cite{RQAIEEE2022} pointed out, 
 if the $L^2$ norm for the test function in \eqref{losswan} is 
replaced by $L^r$ norm, then the loss function \eqref{losswan} becomes $\|\mathcal{L}p\|_s$ with $1/s+1/r=1$  by   H\"{o}lder's inequality. In \cite{zang2020weak}, the test functions $\varphi$ is explicitly optimized within the family of neural network functions, so it can be heuristically seen as the discriminator in the traditional Generative Adversarial Network (GAN)\cite{goodfellow2014generative}.
Note that the loss function \eqref{losswan} still applies
the operator $\mathcal{L}$ on the solution, not 
its adjoint on the test function.

In this formalism, the optimal test function $\varphi$ needs to be approximated in the form of neural network first as a subroutine 
in each iteration of the outer minimization problem.
However, the min-max problem is generally prone to instability and solving the maximization problem typically requires substantial computational resources, which we would like to avoid. Moreover, estimating $\mathcal{L}(p)$ still requires applying the differential operator to the density function $p$.
If one employs a normalizing flow structure to parameterize $p$, calculating the Jacobian determinant—often the most computationally intensive step—becomes unavoidable.

\subsection{Weak Generative Sampler}
\label{s22}

Our method, Weak Generative Sampler, is based on the adjoint operator of the Fokker--Planck operator $\mathcal{L}$ and the representation of the probability by a generative flow map.
Instead of considering \eqref{SFPE2} where    calculation of partial  derivatives of $p$ in  $\mathcal{L}p$ is challenging,  we work with  the  following {\it weak formulation} of the SFPE \cite{FPKbook2015, evans2022partial}
\begin{equation}\label{eqn:wsfpe}
\inner{\mathcal{L}^*\varphi}{p} = 0, \qquad \forall \varphi\in C_c^\infty(\mathbb{R}^d),
\end{equation}
where
\begin{align*}
\mathcal{L}^*\varphi:=b\cdot \nabla \varphi+D:\nabla\nabla \varphi.
\end{align*}
is the infinitesimal generator of the stochastic process in \eqref{eqn:sde} and $D:\nabla\nabla \varphi=\sum_{ij}D_{ij}\partial^2_{ij}\varphi$.  In the above equations, $\mathcal{L}^*$ is the $L^2$ adjoint of $\mathcal{L}$, i.e.,
$\inner{v}{ \mathcal{L}u}=\inner{\mathcal{L}^*v}{u}$ for $u,v \in C_c^2(\mathbb{R}^d)$.

Equation \eqref{eqn:wsfpe} is a system of linear equations (with infinite dimension) for $p$ to satisfy, each associated with a test function $\varphi$.
In parallel to \eqref{losswan}, one can solve the following minimax problem 
\begin{equation} \label{327}
\min_{p}\max_{\varphi: \|\varphi\|_2=1} \abs{ \inner{  \mathcal{L}^* \varphi} {p}  } ^2\quad\text{with}\quad p\geq 0\quad\text{and}\quad\int_{\mathbb{R}^d}p(x) \dx=1.
\end{equation}
However, we shall solve this system using a probabilistic approach by randomizing the test function,  which both circumvents the challenging min-max optimization and facilitates the adaptivity.

More specifically,  we consider the Banach  space $\Omega:= C_c^\infty(\mathbb{R}^d)$ for the test function, and suppose $\mathbb{P}$ is a non-degenerate probability distribution on this Banach space $\Omega$, then solving \eqref{eqn:wsfpe} can be   rewritten as
\begin{equation*}
   \min_{p} \int_{\Omega} \abs\big{\inner{\mathcal{L}^*\varphi}{p}}^2 \dd \mathbb{P}(\varphi)\quad\text{with}\quad p\geq 0\quad\text{and}\quad\int_{\mathbb{R}^d}p(x) \dx=1.
\end{equation*}
It can be written more intuitively in the expectation form:
\begin{equation}\label{eqn:wsfpe_e}
    \min_p\mathbb{E}_{\varphi\sim\mathbb{P}}\Big[\mathbb{E}_{x\sim p}\big[\mathcal{L}^*\varphi(x)\big]\Big]^2\quad\text{with}\quad p\geq 0\quad\text{and}\quad\int_{\mathbb{R}^d}p(x) \dx=1.
\end{equation}
This family of randomized test functions is the key formalism for our proposed method and we call the objective function in  \eqref{eqn:wsfpe_e} as the {\it randomized  weak loss function} in contrast to 
\eqref{losswan} and \eqref{327}.
This formalism relaxes the worse-case error in \eqref{losswan} or \eqref{327} to the averaged-case error in \eqref{eqn:wsfpe_e},  which is extensively adopted in information-based complexity (see e.g., \cite{traub_information-based_2003}). This relaxation does not affect the global minimizer since $\mathbb{P}$ is non-degenerate ( the exact meaning of non-degeneracy is specified later),  but in fact is helpful to improve training stability.

Note that, in \eqref{eqn:wsfpe_e}, we only need the sample data points of $p$ (without its function expression), which facilitates
the application of generative methods. 
Therefore, we introduce a transport map $G_\theta$, with $\theta$ denoting the generic parameter, to map the base distribution (e.g., Gaussian distribution or uniform distribution) to the target distribution $p$. Then for any $z\sim\rho$, we can obtain the associated samples $x=G_\theta(z)$. Therefore, the minimization problem \eqref{eqn:wsfpe_e} is rewritten as 
 \begin{equation}\label{eqn:eloss}
\min_{G_\theta}\mathbb{E}_{\varphi\sim\mathbb{P}}\bigg[\mathbb{E}_{z\sim \rho}\Big[\mathcal{L}^*\varphi\big(G_\theta(z)\big)\Big]\bigg]^2.
\end{equation}
By the Monte Carlo method, the empirical loss function of the minimization problem \eqref{eqn:eloss} becomes 
\begin{equation}\label{eqn:oldloss}
    \min_{G_\theta}\frac{1}{N_\varphi}\sum_{j=1}^{N_\varphi}\left[\frac{1}{N}\sum_{i=1}^N \mathcal{L}^*\varphi_j\big(G_\theta(z_i)\big)\right]^2,
\end{equation}
where $\varphi_j$ are sampled from the distribution $\mathbb{P} (\varphi)$ and $N_\varphi$ is the number of the test function $\varphi_j$; $z_i$ are sampled from the base distribution $\rho$ and $N$ is the number of sample points $\{z_i\}$.
Unlike  GANs, in our method, the test functions are selected through sampling rather than maximization. We shall show the principled guidance of  adaptively selecting these test functions  in the data-driven approach based on the current $G_\theta$. We name this method the \emph{Weak Generative Sampler} (WGS). 

We now discuss the \emph{non-degenerate} condition for the probability measure $\mathbb{P}$ on the test functions, which should ensure  \eqref{eqn:wsfpe_e}   implies \eqref{eqn:wsfpe}. The equivalent condition of this non-degeneracy condition is  
the full support property that  the probability of any ball, $\mathbb{P}(B(\varphi; r))>0$, for every $\varphi\in \Omega$ and any radius  $r>0$.
Probabilities with full support are typically constructed on locally compact spaces, 
but  the infinite dimensional space $C_0^\infty$ is not locally compact, to ensure the existence of this full-support property, one needs to consider a larger  space, for instance, the  $H^2$ space. Then it can be proven the existence of such probability measures satisfying full support on the $H^2$ space. One can refer to the Appendix \ref{appendix:a} of the construction of such Gaussian probability measures on $H^2$.   In addition, if  $\mathbb{P}$ is absolutely continuous with respect to a non-degenerate probability with full support, then $\mathbb{P}$ has full support too and thus is non-degenerate.
In practice, one can choose the test function in the space $\sum_k c_k\phi_k$ spanned by a set of the orthonormal basis functions  $\{\phi_k\}$ in $\Omega$, such as the Hermite polynomials \cite{da2006introduction}, or eigen-functions of $\mathcal{L}$.
 Then the probability $\mathbb{P}$  supported  on the unit ball  $\{\|\vec{c}\|=1: \vec{c}=(c_k)\}$ is sufficient. Using some reproducing kernel Hilbert space as a dense subset of $\Omega$ 
with kernel mean embedding of distribution is also feasible \cite{MAL060}.

However, if the choice of these test functions are pre-determined and static,  the number of test functions used in computation 
will be huge and the training efficiency will not be satisfactory at all, 
since it is unable to provide the informative guidance during the training of the map $G_\theta$. We propose the following adaptive idea based on the data-informed test functions.

Firstly, 
our numerical scheme here uses the family of  Gaussian kernel functions given by:
\begin{equation}
    \varphi_j(x)=\exp\left(-\frac{1}{2\kappa^2}\|x-\zeta_j\|_2^2\right),\quad j\in\{1,2,\cdots,N_\varphi\},
\end{equation}
where $\zeta_j$ represents the centers and the hyper-parameter $\kappa$ is the scale length determining the width of the kernel. The Gaussian kernel gives the infinitely differentiable functions $\varphi$ that decay at infinity.
Then the distribution  $\mathbb{P}$ on the space of the test function is determined by the distribution of the $N_\varphi$ center points  $\{\zeta_j\}_{j=1}^{N_\varphi}$.
$\kappa$ is a hyper-parameter in the training which can be fixed or adaptively chosen. We consider two extreme cases of $\kappa$.
If $\kappa\to \infty$, then $\varphi_j$ become   constant functions, in the null space of $\mathcal{L}^*$, and 
therefore the weak loss function \eqref{eqn:wsfpe_e} is zero.
If $\kappa\to 0$, then $\inner{\mathcal{L}^*\varphi_j}{p}=\inner{\varphi_j}{\mathcal{L}p}\to \mathcal{L}p(\zeta_j)$
and the loss \eqref{eqn:wsfpe_e} becomes the least-square loss used in the PINN: $ \mathbb{E}_{\zeta}\abs{\mathcal{L}p(\zeta)}^2$.
Many  acceleration  techniques for training the PINN are  based on the intuitive choice of the distribution for the  training sample $\zeta$ \cite{gao2023failure, RQAIEEE2022, Mao2023adaptive, tang2022adaptive, zeng2023adaptive}.
Our  method uses a finite value of $\kappa$ so that the test function $\varphi_j$ reflects the information in a neighbor of size $\kappa$ around, but not strictly limited to,  a point $\zeta_j$;
the rational choice of $\kappa$ will be discussed and validated in later text.  

Secondly, our adaptive choice of the centers $\zeta$ in the test functions $\varphi$ shares the same distribution as  the generated data, which approximates the true solution $p$.
Specifically, $N_\varphi$ is set to be less than $N$ and the collection of $\{\zeta_j\}_{j=1}^{N_\varphi}$ is 
uniformly selected without replacement from  the total  number of $N$  data points $\{x_i = G_\theta(z_i)\}$ used in training the loss \eqref{eqn:oldloss}. To introduce   variability for better exploration, we also add an independent small Gaussian noise to these selected data points, giving 
$$\zeta_j = x_{(j)} + \gamma \mathcal{N}(0, \boldsymbol{I}_d)$$ 
with a small parameter $\gamma>0$, where $x_{(j)}$, $1\le j\le N_\varphi$, are sampled without replacement from the collection $\{x_i, 1\le i\le N\}$.
The size $N_\varphi$ is usually only a small fraction of the total particle numbers $N$. In summary, the choice of the center points for the test function 
is adaptive and informative since it is based on the samples generated from the map $G_\theta$ during the training. 
We illustrate our scheme in the Figure \ref{fig:framework}.
As a side remark, we clarify that this adaptive selection of our test functions $\{\varphi_j\}$ in \eqref{eqn:oldloss} implies the adaptive choice of $\mathbb{P}$ in \eqref{eqn:eloss} during the model training of $\theta$ for the minimum of the loss function. This works because the global minimum of   \eqref{eqn:eloss}   for {\it any} non-generate $\mathbb{P}$ is the desired invariant measure. 

\begin{figure}[ht]
    \centering
    \includegraphics[scale=0.5]{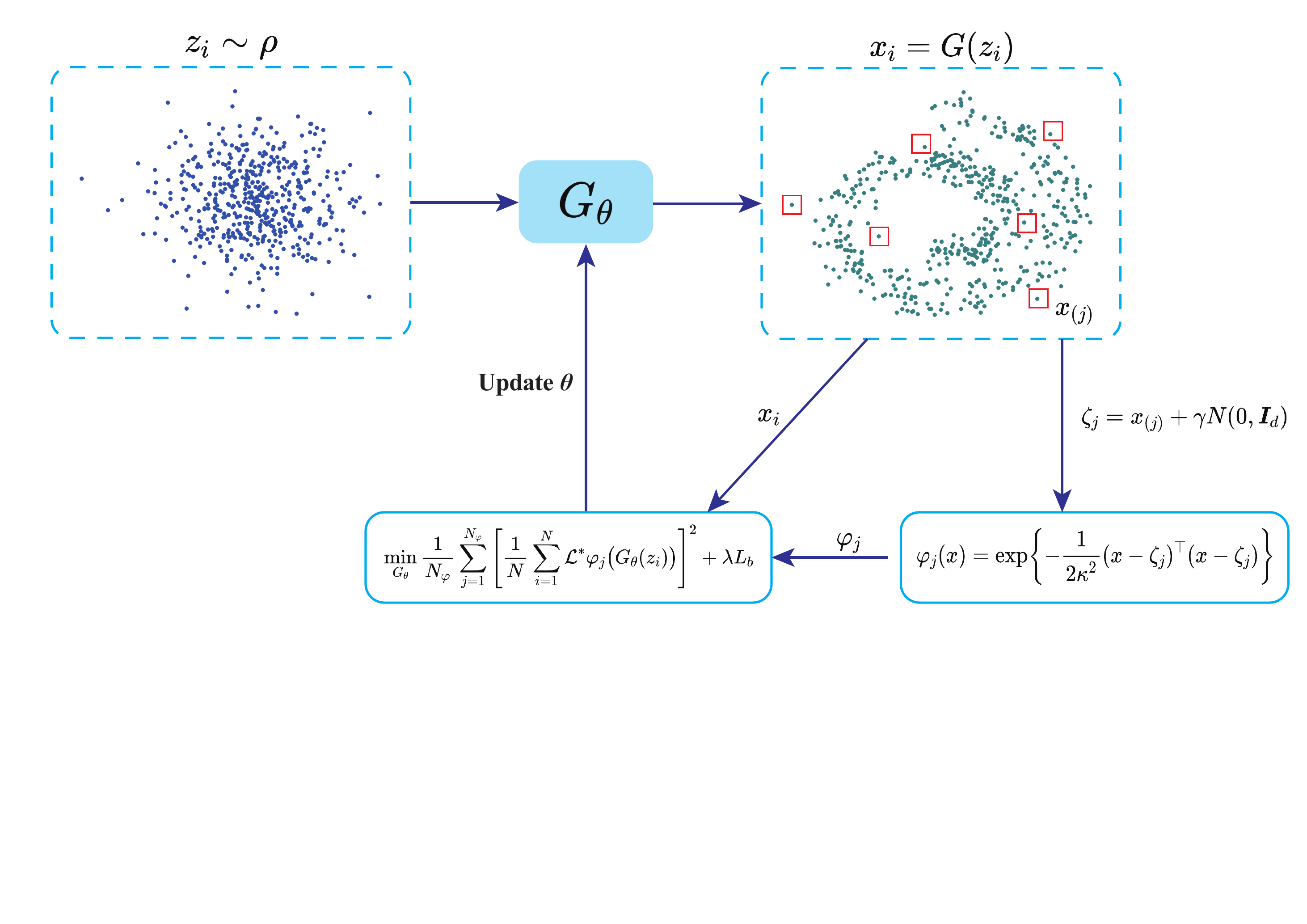}
    \caption{The framework of WGS. In the WGS, data points $\{z_i\}_{i=1}^N$ are sampled from the base distribution $\rho$ and transformed to $\{x_i\}_{i=1}^N$ via the transport map $G_\theta$. Then, $\{x_{(j)}\}_{j=1}^{N_\varphi}$ are selected uniformly from $\{x_i\}_{i=1}^N$, and Gaussian noise is added to obtain $\{\zeta_j\}_{j=1}^{N_\varphi}$. The loss function is derived from data points $\{x_i\}_{i=1}^N$ and test functions $\{\varphi_j\}_{j=1}^{N_\varphi}$. The transport map $G_\theta$ is updated through gradient descent. The term $\lambda L_b$ is a penalty term from the boundary; see  \eqref{eqn:loss} for details.}
    \label{fig:framework}
\end{figure}

\subsection{Normalizing flow and network structure of \texorpdfstring{$G_\theta$}{}}\label{s23}
The transport map $G_\theta$ is a differentiable transformation transporting the base distribution to the target distribution. By the change-of-variable formula \cite{kobyzev2020normalizing}, the probability density function associated with an invertible map $G_\theta$ is 
$$
p_\theta(x)=\rho\big(G_\theta^{-1}(x)\big) \abs{\text{det}\nabla G_\theta^{-1}(x)}.
$$
So the computation of the density function $p_\theta$ itself involves calculating the determinant of the Jacobians and does not allow the degeneracy of the Jacobian matrix.
However,  in our method, neither the Jacobian determinant nor invertibility is necessary, because the weak loss \eqref{eqn:oldloss} in WGS does not involve the expression of $p_\theta$, but rather only the {\it samples} from $p_\theta$ that are used.  

 Normalizing flows \cite{rezende2015variational,ruthotto2021introduction} are a family of invertible neural networks and provide a   way to construct the transport map $G$ as the composition of a sequence of functions $\{g_{\theta_i}\}_{i=1}^L$:
$$
G_\theta=g_{\theta_L}\circ g_{\theta_{L-1}}\circ\cdots\circ g_{\theta_1},
$$
parameterized by $\theta=\left(\theta_1,\theta_2,\dots,\theta_L\right)$ and $x=G_\theta(z)$. These functions serve to gradually transform the sample data point $z$ from the base distribution $\rho$ into the sample data point $x$ following the target distribution $p$. Explicitly, we have
$$
x^{(i)}=g_{\theta_i}\left(x^{(i-1)}\right),\quad i\in\left\{1,2,\cdots,L\right\},
$$
where $x^{(0)}=z$ and $x^{(L)}=x$. One popular example for constructing the parameterized function $g_{\theta_i}$ is via the triangular form and shuffled with each other by random coordinates. In this paper, we use a simple yet expressive affine coupling layer in the RealNVP \cite{dinh2016density,teshima2020coupling}. In RealNVP,  the affine coupling layer $g_{\theta_i}$ is defined as 
$$
x^{(i)}=g_{\theta_i}\left(x_1^{(i-1)},x_2^{(i-1)}\right)=\left(h_i\left(x_1^{(i-1)};\Theta_i\left(x_2^{(i-1)}\right)\right),x_2^{(i-1)}\right),\quad i\in\left\{1,2,\cdots,L\right\},
$$
where $(x_1^{(i-1)},x_2^{(i-1)})\in\mathbb{R}^a\times\mathbb{R}^{d-a}$ is the partition \footnote{This partition of $x^{(i)}$ can be randomized in practice.} of $x^{(i-1)}$,  and $\Theta_i:\mathbb{R}^{d-a}\mapsto\mathbb{R}^a$ is parameterized by $\theta_i=(\theta^1_i,\theta^2_i)$. Here, $h_i:\mathbb{R}^d\mapsto\mathbb{R}^d$ is the coupling function  that is defined as
$$
h_i\left(x_1^{(i-1)};\Theta_i\left(x_2^{(i-1)}\right)\right)=\left(x_1^{(i-1)}-t_{\theta^1_i}\left(x_2^{(i-1)}\right)\right)\odot\exp\left(-s_{\theta^2_i}\left(x_2^{(i-1)}\right)\right).
$$
Here, $t_{\theta^1_i}:\mathbb{R}^{d-a}\mapsto\mathbb{R}^a$ and $s_{\theta^2_i}:\mathbb{R}^{d-a}\mapsto\mathbb{R}^a$ are the translation and scaling functions, respectively. These functions are parameterized by neural networks with parameters $\theta^1_i$ and $\theta^2_i$, respectively.

 Since the affine coupling layer $g_{\theta_i}$ only updates $x_1^{(i-1)}$, we can update $x_2^{(i)}$ in the subsequent affine coupling layer $g_{\theta_{i+1}}$. This ensures that all components in $z$ are updated after transformation by the transport map $G_\theta$. This allows for more flexibility in the partition of $x^{(i)}$ and the arrangement of the affine coupling layers. Other popular coupling layers include splines and the mixtures of cumulative distribution functions, which have been shown to be more expressive than the affine coupling function \cite{durkan2019neural, ho2019flow++, jaini2019sum, tang2020deep}. 
In principle, any network structure for the map 
can work effectively for our weak generative sampler.
The details about the improvement of network architecture and expressive 
power are beyond the scope of our work here.

\subsection{Training algorithm}\label{s24}

\begin{algorithm}
\caption{Training Algorithm of WGS}\label{algorithm}
\SetKwInOut{Input}{Input}\SetKwInOut{Output}{Output}
\Input{Initial flow map $G_\theta$, the base distribution $\rho$; the  hyper-parameters $\gamma>0$, $\kappa>0$, $\lambda>0$, $r>0$, $c>0$. }

\For{$n=1:N_I$}{Sample $\{z_i\}_{i=1}^N$ from  $\rho$\;
Obtain $\{x_i\}_{i=1}^N$ by $x_i=G_\theta(z_i)$\;
Randomly choose $N_\varphi$ numbers from $1:N$ as index $ind$\;
Split $ind$ into mini-batches of size $N_\varphi^b$\;
\For{$m=1: \lceil N_\varphi/N_\varphi^b \rceil $}{
Obtain $\{x_{(j)}\}_{j=1}^{N_\varphi^b}$ by $x_{(j)}=x_{ind(m,j)}+\gamma \mathcal{N}(0,\boldsymbol{I}_d)$\;
\tcp*[h]{$\mathcal{N}(0,\boldsymbol{I}_d)$ denotes the standard $d$-dimensional normal random variables}\;
Construct the test function $\varphi_j$ by Gaussian kernel as
$$
\varphi_j(x)=\exp{-\frac{1}{2\kappa^2}(x-x_{(j)})^\top(x-x_{(j)})},
$$
\tcp*[h]{The parameter $\kappa$ denotes the standard deviation in each dimension}\;
Compute the Loss function \eqref{eqn:loss}\;
Update the parameters $\theta$ using the Adam optimizer with a learning rate $\eta$\;
}
}
\Output{The trained transport map $G_\theta$}
\end{algorithm}

 We now show in detail the training algorithm of our method. Since we are solving the SFPE on the whole $\mathbb{R}^d$ space, it is important in practice to restrict the map from pushing all points to infinitely far away, since any constant function satisfies $\mathcal{L}p=0$. We propose to constrain the range of the map within a ball  with a large radius of $r$ by  adding a  penalty term $L_b$ to the loss \eqref{eqn:oldloss}:
\begin{equation}\label{eqn:loss}
    L(G_\theta)=\frac{1}{N_\varphi^b}\sum_{j=1}^{N_\varphi^b}\left[\frac{1}{N}\sum_{i=1}^N \mathcal{L}^*\varphi_j\big(G_\theta(z_i)\big)\right]^2+\lambda L_{b},
\end{equation}
where 
$$
L_b=\frac{1}{N}\sum_{i=1}^N \text{Sigmoid}\Big(c\big(\|G_\theta(z_i)-x_0\|_2^2-r^2\big)\Big)
$$
and  the positive numbers
$\lambda$, $r$,  and $    c$ are hyper-parameters and $\text{Sigmoid}(x) = 1/(1+\exp(-x))$.
More specifically, we apply a large penalty to samples outside of the ball $B_r(x_0)$; the parameters $c$ and $\lambda$ essentially control the extent to which we want the push-forward probability measure $G_\theta\#\rho$ to be confined within the region $B_r(x_0)$.

Algorithm \ref{algorithm} provides the the complete procedure of our method.  There are two 
 important hyper-parameters associated with  the test functions: 
  $\kappa$   and   $\gamma$.    We briefly discuss the fine-tuning of these hyper-parameters here and refer to the numerical results section and Appendix for details. 
For $\kappa$,   the recommended  fine-tuning is to start with a relative large value for  sufficient exploration, and then shrinks gradually for good exploitation  to improve accuracy.
Multiple groups for different scheduling  $\kappa$ are also recommended; see Appendix \ref{appendix:d} and Appendix \ref{appendix:e}.  

The role of $\gamma$ is to enhance the exploration of  centers in the test function, so that the distribution of the centers in test functions can slightly shift from the target distribution; this is particularly useful when the target distribution may not be the best distribution of adaptive training.   A  non-zero $\gamma$  in general can produce better results than $\gamma=0$.

\subsection{Summary}
We shall demonstrate the efficiency and robustness of the WGS through several numerical examples in Section \ref{s4}, comparing it with  
 the benchmark  (weighted)-PINN loss in the ADDA method \cite{tang2022adaptive}. Efficiency refers to the significantly reduced computational time with the same number of training samples. Robustness refers to the ability to handle multi-modal invariant distributions associated with the SDE, as well as stability with respect to the hyper-parameters tested in the Appendix \ref{appendix:e}.
The enhanced   performance in capturing multi-modes arises from the test functions, which can ``sense'' a neighborhood of size $\kappa$ around each training sample. In addition, Appendix \ref{appendix:b} presents an analysis of a toy example with bi-modes, comparing the loss landscapes    and  providing  certain insights into the role of the test functions in the WGS.

\section{\texorpdfstring{$L^2$}{} Error Estimate  of WGS}\label{s3}

As mentioned earlier,    
our weak generative sampler 
addresses the  optimization problem \eqref{eqn:wsfpe_e}
by  minimizing  the following   loss function derived from the weak form of the stationary  Fokker--Planck equation \eqref{eqn:sfpe},
\begin{equation}\label{optN}
 \int_{\Omega}\left| \int_{\mathbb{R}^d}\mathcal{L}^*\varphi(x)p(x)\d x \right|^2\ \dd \mathbb{P}(\varphi) ,
\end{equation}
which can be written in the expectation form $\mathbb{E}_{\varphi\sim\mathbb{P} }\left| \mathbb{E}_{x\sim p}\mathcal{L}^*\varphi(x)\right|^2. $ Here $\mathbb{P}$ can be any   non-degenerate  probability measure 
on  the space of test functions $\varphi$. As explained in Section \ref{s22}, 
 the   requirement for non-degeneracy of $\mathbb{P}$
is   the full support property to ensure    that the zero loss value of \eqref{optN} can imply Equation 
\eqref{eqn:wsfpe} for the minimizer $p$.

In principle,  any non-degenerate $\mathbb{P}$ can be used for the randomized test functions.  In Algorithm \ref{algorithm}, we adopt a choice of $\mathbb{P}$ from  the family of Gaussian kernel functions which is adaptively determined based on the training samples generated by the current map. The advantages of this approach in the WGS framework will be demonstrated later in Section \ref{s4}.
At this point, we present a {\it prior} estimate of the $L^2$ error between the true and numerical probability density functions in Theorem \ref{theo:R}. It is important to note that the results in this section are unrelated to the relaxation techniques used for resolving the worst-case min-max issue. Additionally, the theoretical error analysis provided here does not prescribe the practical selection of test functions for the algorithms.

Under the technical Assumptions \ref{assum:bounded} and \ref{assumptionptheta}, which ensure the non-degeneracy condition for $\mathbb{P}$, the $L^2$ error $\|p_\theta-p\|$ is shown to be bounded by the weak loss associated with a {\it specific} distribution of test functions derived from the true error $p_\theta-p$, up to an arbitrarily small adjustment of the PINN loss.
To highlight the core idea of our proof, we first focus on the Fokker-Planck equation with periodic boundary conditions on a hypercube in Section \ref{ss:bd}. This setting helps us avoid the intricate technical problems associated with compactness that usually arise from boundary conditions at infinity. Subsequently, the proof for the whole space $\mathbb{R}^d$ is developed in Section \ref{ss:R}.

\subsection{Stationary Fokker--Planck equations in  periodic   domain} \label{ss:bd}
For given positive constants $\{R_i\}_{i=1}^d$, let $U:=\prod_{i=1}^d(0,R_i)\subset\mathbb{R}^d$. In the Fokker--Planck operator $\mathcal{L}$  defined in \eqref{eqn::fp}, the drift term $b$ and diffusion matrix $\sigma$ are assumed to be $U$-periodic, i.e., they admit period $R_i$ in the $i$-th direction, $i=1,\cdots,d$. We assume that $b\in C^1_{per}(U)$ and $D\in C^2_{per}(U)$,
 and assume that  $p$ is the unique classic nontrivial solution to  the  stationary Fokker--Planck equation 
on the hypercube $U$,
\begin{equation}\label{eqn:sfpeU}
     \mathcal{L}p=0,\quad \mbox{in}\quad U,
\end{equation}
and $p$ satisfies the periodic boundary condition and normalization condition $\int_U p(x)\dd x =1$. 
The  solution $p$ can  also be interpreted as a weak function in 
the periodic Sobolev space $H^1_{\text{per}}(U)$, the Sobolev space consisting of $U$-periodic functions whose first weak derivatives exist and $L^2$ integrable on $U$, so $p$   satisfies  
\begin{equation}
    \int_{U}\mathcal{L}^*\varphi(x)p(x) \dd x=0,\quad\forall \varphi\in \Omega_0:=C^\infty_0(U ).
\end{equation}

In the numerical  computation of minimizing  \eqref{eqn:wsfpe_e}, the probability density function 
belongs to a family parametrized by a generic parameter $\theta$ in a specific set $\Theta$: 
  $\{p_\theta\}_{\theta\in \Theta}$.  This following assumption is trivially fulfilled in our setting here.
\begin{assumption}\label{subassum:pM}
 We make the following assumptions about the family of the density function $p_\theta$ that for all $\theta\in\Theta$
    \begin{equation*}
        \begin{aligned}
        0\leq p_\theta(x)\leq M,\quad \forall x\in U; \quad \int_{U}p_\theta(x)\d x=1.
        \end{aligned}
\end{equation*}
where $M$ is a constant.
\end{assumption}

Our result about the $L^2$ error and the loss \eqref{optN} needs the following assumption of the probability $\mathbb{P}$ on the space of the test function $\Omega_0=C^\infty_0(U)$. 
The rigorous statement about the existence and construction of such probability measures  $\mathbb{P}$ is shown in the Appendix \ref{appendix:a}.
\begin{assumption}\label{assum:bounded}
  \begin{enumerate}
   \item  For any positive number $r$ and $f\in L^2(U)$ it holds that 
   $$ \mathbb{P}\left(\bar{B}_{L^2(U)}(f,r)\cap C_0^\infty(U ) \right)>0,
$$
    where $\bar{B}_{L^2(U)}(f,r)=\{g\in L^2(U ):\ \|f-g\|_{L^2(U)}\leq r \}$ is the closed ball centered at $f$   with $r$  radius  in $L^2(U)$.\label{subass:L2}
 \item For any positive number $r$ and $f\in H^2(U)$, it holds that 
        \begin{equation*}
        \mathbb{P}\left(\bar{B}_{H^2(U)}(f,r)\cap C_0^\infty(U ) \right)>0,
    \end{equation*}
    where $\bar{B}_{H^2(U)}(f,r)=\{g\in H^2(U ):\ \|f-g\|_{H^2(U)}\leq r \}$ is the closed ball centered at $f$   with $r$  radius  in $H^2(U)$.\label{subass:H2}
 \end{enumerate}
\end{assumption}

Our theorem below asserts that with a properly chosen test function, the weak loss closely approximates the mean square loss relative to the true solution.
\begin{theorem}\label{theo:U}
Let $U\subset \mathbb{R}^d$ be a hypercube with periodic conditions.
Let $p\in C^2_{\text{per}}(U )$ be the classical solution of the stationary Fokker--Planck equation \eqref{eqn:sfpeU} on the hypercube $U$. Suppose that $p_{\theta}\in C^2_{\text{per}}(U )$ for  every $\theta\in\Theta$, and  Assumption \ref{subassum:pM} holds.
For  any $\theta\in\Theta$, define 
  $\varphi_{\theta}$ as  the solution to the 
Dirichlet boundary value problem
\begin{equation}\label{dirichletproblem:U}
\begin{cases}
\mathcal{L}^*\varphi_{\theta}=\ p_{\theta}-p, & \mbox{in}\ U,\\
    \varphi_{\theta} \quad =\ 0,&\mbox{on}\ \partial U.
\end{cases}
\end{equation}
Let $\mathbb{P}_{\theta, r}(\cdot)$, $\mathbb{P}'_{\theta, r}(\cdot)$ denote  the conditional distributions $\mathbb{P}(\cdot\ |\  \bar{B}_{L^2(U)}(\varphi_\theta,r))$ and $\mathbb{P}(\cdot\ |\   \bar{B}_{H^2(U)}(\varphi_\theta,r))$ respectively.

     \begin{enumerate}[label=(\alph*)]
        \item If Assumption \ref{assum:bounded}-\eqref{subass:L2} holds, then for any $r>0$,  \begin{equation}\label{upper:strong} 
          \begin{aligned}
              \mathbb{E}_{\varphi\sim\mathbb{P}_{\theta, r} }\left| \mathbb{E}_{x\sim p_{\theta }}\mathcal{L}^*\varphi(x)\right| -  r   \|\mathcal{L}p_\theta\|_{L^2(U)} 
              \leq 
              \|p_\theta-p\|^2_{L^2(U)}\leq&\ \mathbb{E}_{\varphi\sim\mathbb{P}_{\theta, r} }\left| \mathbb{E}_{x\sim p_{\theta }}\mathcal{L}^*\varphi(x)\right| +  r  \|\mathcal{L}p_\theta\|_{L^2(U)},
          \end{aligned}
               \end{equation}

        \item  If Assumption \ref{assum:bounded}-\eqref{subass:H2} holds, then for any $r>0$,  \begin{equation}\label{firstupperbound}
          \begin{aligned}
             \mathbb{E}_{\varphi\sim\mathbb{P}'_{\theta, r} }\left| \mathbb{E}_{x\sim p_{\theta }}\mathcal{L}^*\varphi(x)\right| -  r   C  \leq \|p_\theta-p\|^2_{L^2(U)}\leq&\ \mathbb{E}_{\varphi\sim\mathbb{P}'_{\theta, r} }\left| \mathbb{E}_{x\sim p_{\theta }}\mathcal{L}^*\varphi(x)\right| +  r   C ,
          \end{aligned}
              \end{equation}
    \end{enumerate}
where  
\begin{equation} \label{C551}
     \begin{aligned}
         C  =&\ M\left(d\max_{i}\|b_i\|_{U} + d^2  \max_{i,j}\|D_{ij}\|_{U} \right).
     \end{aligned}
\end{equation}
  \end{theorem}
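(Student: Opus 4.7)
The proof rests on a clean algebraic identity that links the randomized weak loss to $\|p_\theta - p\|_{L^2(U)}^2$ via the auxiliary function $\varphi_\theta$, followed by two different residual estimates depending on whether the ball is taken in $L^2$ or $H^2$.

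\textbf{Step 1 (key identity).} Using the defining property $\mathcal{L}^*\varphi_\theta = p_\theta - p$, I would split
\[
\int_U p_\theta(x)\, \mathcal{L}^*\varphi(x)\, \mathrm{d}x \;=\; \int_U p_\theta (p_\theta - p)\, \mathrm{d}x \;+\; \int_U p_\theta\, \mathcal{L}^*(\varphi - \varphi_\theta)\, \mathrm{d}x.
\]
Writing $p_\theta = (p_\theta - p) + p$ in the first term gives $\|p_\theta - p\|_{L^2(U)}^2 + \int_U p(p_\theta - p)\,\mathrm{d}x$. The remaining cross term vanishes: by integration by parts (the surface contributions cancel using periodicity of $p,b,D$ together with $\varphi_\theta|_{\partial U}=0$) one has $\int_U p\,\mathcal{L}^*\varphi_\theta\,\mathrm{d}x = \int_U \varphi_\theta\,\mathcal{L}p\,\mathrm{d}x = 0$ since $\mathcal{L}p=0$. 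Hence the first bulk term is exactly $\|p_\theta - p\|_{L^2(U)}^2$, and the task reduces to controlling the residual $\int_U p_\theta\,\mathcal{L}^*(\varphi - \varphi_\theta)\,\mathrm{d}x$.

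\textbf{Step 2 (residual bounds).} For part (a), with $\|\varphi - \varphi_\theta\|_{L^2(U)} \le r$, I move the operator onto $p_\theta$ by integration by parts (legitimate because $\varphi, \varphi_\theta$ both vanish on $\partial U$ so the boundary terms drop out), obtaining
\[
\left|\int_U p_\theta\,\mathcal{L}^*(\varphi - \varphi_\theta)\,\mathrm{d}x\right| = \left|\int_U \mathcal{L}p_\theta \cdot (\varphi - \varphi_\theta)\,\mathrm{d}x\right| \le r\,\|\mathcal{L}p_\theta\|_{L^2(U)}
\]
by Cauchy--Schwarz. For part (b), the derivatives stay on $\varphi - \varphi_\theta$ so no $\mathcal{L}p_\theta$ appears: using the pointwise bound $p_\theta \le M$ from Assumption~\ref{subassum:pM} and expanding $\mathcal{L}^* = b\cdot\nabla + D:\nabla\nabla$, I bound the residual by $M\int_U |\mathcal{L}^*(\varphi-\varphi_\theta)|\,\mathrm{d}x$ and estimate each of the $d$ first-order and $d^2$ second-order contributions using the uniform bounds on $b_i, D_{ij}$ together with the $H^2$-norm constraint $\|\varphi - \varphi_\theta\|_{H^2(U)}\le r$. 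This produces the constant $C$ displayed in \eqref{C551}.

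\textbf{Step 3 (expectation and triangle inequality).} Combining Steps 1 and 2, for every admissible $\varphi$ in the respective conditional support,
\[
\left|\int_U p_\theta\,\mathcal{L}^*\varphi\,\mathrm{d}x \;-\; \|p_\theta - p\|_{L^2(U)}^2\right| \le r\,E,
\]
where $E = \|\mathcal{L}p_\theta\|_{L^2(U)}$ in case (a) and $E = C$ in case (b). The conditional measures $\mathbb{P}_{\theta,r}$ and $\mathbb{P}'_{\theta,r}$ are well-defined by Assumption~\ref{assum:bounded}. Taking expectations under the respective conditional law and applying the reverse/forward triangle inequality $\big|\,|a| - |b|\,\big|\le |a-b|$ yields the two-sided bounds \eqref{upper:strong} and \eqref{firstupperbound}.

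\textbf{Main obstacle.} The delicate point is justifying the integration by parts in Step 1 cleanly: $\varphi_\theta$ carries Dirichlet data on $\partial U$ while $p,b,D$ are only periodic, so one must argue face-by-face that the boundary flux terms involving $p\,b\,\varphi_\theta$, $p\,D\,\nabla\varphi_\theta$, and $\varphi_\theta\,\nabla(Dp)$ all either vanish (from $\varphi_\theta|_{\partial U}=0$) or cancel across opposite faces (from periodicity of $p$ and $D$). Obtaining precisely the constant $C$ in part (b) is the other subtlety: one wants the $L^\infty$ bound $p_\theta\le M$ (rather than the weaker $\|p_\theta\|_{L^2}\le \sqrt{M}$) paired with $L^1$ control of the derivatives of $\varphi-\varphi_\theta$, converted to $H^2$ via Cauchy--Schwarz on the bounded torus; this choice matches the stated factor of $M$ and the $d,d^2$ dimensional factors multiplying the uniform bounds on $b$ and $D$.
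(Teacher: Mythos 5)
Your proposal is correct and follows essentially the same route as the paper's proof: the same key identity $\int_U p_\theta\,\mathcal{L}^*\varphi_\theta\,\mathrm dx=\|p_\theta-p\|_{L^2(U)}^2$ (established by using $\mathcal{L}^*\varphi_\theta=p_\theta-p$, the Dirichlet condition on $\varphi_\theta$, and the fact that $p$ solves the SFPE), the same residual control via Cauchy--Schwarz in $L^2$ after moving $\mathcal{L}^*$ onto $p_\theta$ for part (a) versus keeping the derivatives on $\varphi-\varphi_\theta$ and using $p_\theta\le M$ for part (b), and the same passage to the conditional expectation and the triangle inequality. The only cosmetic difference is that you split $p_\theta=(p_\theta-p)+p$ inside $\int p_\theta(p_\theta-p)$ whereas the paper splits $\varphi=(\varphi-\varphi_\theta)+\varphi_\theta$ after integration by parts, which amounts to the same algebra.
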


\begin{remark}
  Note that  a measurable functional $F$,  the conditional expectation  $ \mathbb{E}_{\varphi\sim\mathbb{P}_{\theta, r} } F(\varphi)= \mathbb{E}_{\varphi\sim\mathbb{P} } \left[F(\varphi) \mathbf{1}_{\bar{B} (\varphi_\theta,r)} (\varphi)\right] / \mathbb{P}(\bar{B} (\varphi_\theta,r))
  \le  \mathbb{E}_{\varphi\sim\mathbb{P} } \left[F(\varphi)  \right] / \mathbb{P}(\bar{B} (\varphi_\theta,r))$. So, the above upper bounds can be replaced by $\mathbb{E}_{\varphi\sim\mathbb{P} }\left| \mathbb{E}_{x\sim p_{\theta }}\mathcal{L}^*\varphi(x)\right| / \mathbb{P}(\bar{B} (\varphi_\theta,r)) $. 
 In addition, according to Jensen's inequality, the quadratic form 
    of our loss function \eqref{optN} is the upper bound of the term $\mathbb{E}_{\varphi\sim\mathbb{P} }\left| \mathbb{E}_{x\sim p_{\theta }}\mathcal{L}^*\varphi(x)\right|$:
$
   \left(\int_{\Omega_0} \left|\int_{U}\mathcal{L}^*\varphi(x) p(x) \d x\right|\d\mathbb{P} (\varphi)\right)^2 \leq \int_{\Omega_0} \left|\int_{U}\mathcal{L}^*\varphi(x) p(x) \d x\right|^2\d\mathbb{P} (\varphi).
$
\end{remark}

 \begin{remark}
In the limit of $r\to 0$,  $\mathbb{E}_{\varphi\sim\mathbb{P}_{\theta, r} }\left| \mathbb{E}_{x\sim p_{\theta }}\mathcal{L}^*\varphi(x)\right|$ tends to $   \left| \mathbb{E}_{x\sim p_{\theta }}\mathcal{L}^*\varphi_\theta(x)\right| $
which  exactly recovers the   squared $L^2$ error $\|p-p_\theta\|^2_{L^2(U)}$. In general, for a finite $r$, 
 our results heuristically suggest that it is desirable for the probability $\mathbb{P}$ of sampling the test function to adaptively primarily concentrate within a small ball around $\varphi_\theta$.
Note that $\varphi_\theta$ can be regarded as an {\it a prior} estimation of the error between $p_\theta$ and the solution $p$. \end{remark}

\begin{proof}
Since $p_\theta$ and the true solution $p$ both in  $C^2_{per}(U)$, 
the Dirichlet problem \eqref{dirichletproblem:U} has a unique $C^{2,1}_{per}(U )$ solution $\varphi_\theta$. 
Then we can  write the 
squared $L^2$ error between $p_\theta$ and $p$  in terms of $\varphi_\theta$:
\begin{equation} \label{615}
\begin{aligned}
\int_{U}|p_{\theta}(x)-p(x)|^2 \d x 
    =&\ \int_{U}
{\mathcal{L}^*\varphi_{\theta}(x)} (p_{\theta}(x)-p(x)) \d x 
    = \int_{U}\mathcal{L}^*\varphi_{\theta}(x)p_{\theta}(x) \d x,
\end{aligned}
\end{equation}
by using  the fact that $p$ is the true solution satisfying the periodic boundary condition  and $\varphi_\theta$ vanishes on $\partial U$.

 Let $r$ be a positive constant. Use a concise notation    $\Omega_{\theta, r  }:=\{\varphi\in C_0^\infty(U) :\ \| \varphi-\varphi_{\theta }\|_{L^2(U)}\leq  r  \}\subset \Omega_0 $  to represent    the $L^2$-ball.
 By Assumption \ref{assum:bounded} we know that  the conditional distribution 
 $\mathbb{P}_{\theta, r}( \cdot)=1$ is well-defined and $\mathbb{P}_{\theta, r}(\Omega_{\theta,r})=1$. We now obtain the bound for the squared $L^2$ error $\|p_\theta-p\|^2_{L^2(U)}$ below. Note that,
\begin{equation}\label{ineqn:L2}
\begin{aligned}
    & \int_{\Omega_0}\left| \int_{U}\mathcal{L}^*\varphi(x)p_{\theta }(x) \d x \right|\ \dd\mathbb{P}_{\theta, r} (\varphi)
    = \
    \int_{\Omega_{\theta, r  }}\left| \int_{U}\left(\varphi(x) -\varphi_{\theta}(x) + \varphi_{\theta}(x)\right)\mathcal{L}p_{\theta }(x) \d x \right|\ \dd\mathbb{P}_{\theta, r}(\varphi)\\
    \geq &\ \int_{\Omega_{\theta, r  }}\left| \int_{U}\varphi_{\theta}(x)\mathcal{L}p_{\theta }(x) \d x \right|\ \dd\mathbb{P}_{\theta, r}(\varphi) - \int_{\Omega_{\theta, r  }} \left(\int_{U}\left|\varphi(x)-\varphi_{\theta}(x)\right|^2 \d x \right)^{1/2}\left(\int_{U}|\mathcal{L}p_{\theta }(x)|^2 \d x \right)^{1/2}\ \dd\mathbb{P}_{\theta, r} (\varphi)\\ 
    \geq&\ \left| \int_{U}\varphi_{\theta}(x)\mathcal{L}p_{\theta }(x) \d x \right|\mathbb{P}_{\theta, r} (\Omega_{\theta, r  }) -   r  \|\mathcal{L}p_\theta\|_{L^2}\mathbb{P}_{\theta, r} (\Omega_{\theta, r  }),
    \\
    = &  \int_{U}|p_{\theta}(x)-p(x)|^2 \d x   -   r  \|\mathcal{L}p_\theta\|_{L^2(U)}.
\end{aligned}
\end{equation} 
This implies the second inequality ``$\leq$''  as the upper bound in \eqref{upper:strong}. We can prove the first $``\leq"$ part (lower bound) in a similar way by revising the second line in the above derivation \eqref{ineqn:L2} by using $\abs{\varphi(x) -\varphi_{\theta}(x) + \varphi_{\theta}(x)} \leq \abs{ \varphi_{\theta}(x)}+\abs{\varphi(x) -\varphi_{\theta}(x)}$. 

The proof for the second statement \eqref{firstupperbound}   follows a similar approach to the one above, albeit with some minor differences.
 Define the $H^{2}$-ball   $ \Omega'_{\theta, r  }:=\{\varphi\in C_0^\infty(U) :\ \| \varphi-\varphi_{\theta }\|_{H^2(U)}\leq  r  \}.$ 
 Then we have \begin{equation}\label{ineqn:H2}
\begin{aligned}
& \mathbb{E}_{\varphi\sim\mathbb{P}'_{\theta, r} }\left| \mathbb{E}_{x\sim p_{\theta }}\mathcal{L}^*\varphi(x)\right|
= \
\int_{\Omega'_{\theta, r  }}\left| \int_{U}\mathcal{L}^*\left(\varphi(x) - \varphi_{\theta}(x) + \varphi_{\theta}(x)\right)p_{\theta }(x)\d x \right|\ \dd\mathbb{P}'_{\theta, r} (\varphi)
\\
\geq &\ \left| \int_{U}\mathcal{L}^*\varphi_{\theta }(x)p_{\theta }(x)\d x \right|\ \mathbb{P}_{\theta, r}(\Omega'_{\theta, r  }) - \int_{\Omega'_{\theta, r  }}\int_{U} \left| \sum_{i=1}^d b_i(x)\left(\partial_{x_i} \varphi(x)-\partial_{x_i}\varphi_{\theta}(x)\right) \right.\\
&\ \left.  + \sum_{i,j=1}^d D_{ij}(x)(\partial^2_{x_jx_i}\varphi(x)- \partial^2_{x_jx_i}\varphi_{\theta}(x)) \right|p_\theta(x) ~\dd x~\dd\mathbb{P}'_{\theta, r} (\varphi)\\ 
\geq &\ 
\left| \int_{U}\mathcal{L}^*\varphi_{\theta }(x)p_{\theta }(x)\d x \right|\ \mathbb{P}'_{\theta, r}(\Omega_{\theta, r  })
- \int_{\Omega'_{\theta, r  }} \|\varphi-\varphi_{\theta}\|_{H^2(U)}\\
&\ \times \left[ \sum_{i=1}^d\left(\int_{U}|b_i(x)p_{\theta }(x)|^2\d x\right)^{1/2}  + \sum_{i,j=1}^d \left(\int_{U}|D_{ij}(x)p_{\theta }(x)|^2\d x\right)^{1/2} \right] \dd \mathbb{P}'_{\theta, r} \\ 
\geq &\ \int_{U}|p_{\theta}(x)-p(x)|^2 \d x -   r   C ,
\end{aligned}
\end{equation} 
where 
 $  C $ is defined in \eqref{C551}. The proof for the converse inequality side is the same as that for the first statement.
  \end{proof}

\subsection{Stationary Fokker--Planck equations in   \texorpdfstring{$\mathbb{R}^d$}{}}\label{ss:R}

 Under Assumption \ref{assum:SDE}, 
 the SDE \eqref{eqn:sde} is ergodic
 and the Fokker--Planck equation has a unique   invariant probability density function  on the entire space $\mathbb{R}^d$ which decays at infinity. 
  The similar results to Theorem \ref{theo:U} can be derived.
  
   For simplicity, we assume the drift term and the functions in the diffusion matrix are bounded.
\begin{assumption}\label{boundedbsigma}
    Suppose $\|b_i\|_\infty<\infty$ and $\|\sigma_{ij}\|_\infty<\infty$ for $i,j\in\{1,\cdots,d\}$.
\end{assumption}
 
We list the assumptions concerning the family of probability density functions $\{p_\theta\}_{\theta\in\Theta}$ and the probability measure $\mathbb{P}$ on the test function space $\Omega=C_c^\infty(\mathbb{R}^d)$.

\begin{assumption}\label{assumptionptheta}
\hfill
    \begin{enumerate}[label=(\arabic*), leftmargin=1cm]
        \item There exists a positive constant  $M$ such that for every $\theta\in\Theta$,
    \begin{equation*}
        \begin{aligned}
            0\leq p_\theta(x)\leq M,\quad  \forall x\in \mathbb{R}^d,\quad \int_{\mathbb{R}^d}p_\theta(x)\d x=1.
        \end{aligned}
        \end{equation*}\label{subassum:RpM}
        \item\label{assumptiononp} The family functions $\{p_\theta\}_{\theta\in \Theta}$ and $p$ are uniformly tight, i.e., for every $\varepsilon>0$ there exists a compact subset $U_\varepsilon\subset\mathbb{R}^d$ such that for all $\theta\in\Theta$ and $p$
        \begin{equation*}
            \int_{U_\varepsilon^c}p_\theta(x)\d x<\varepsilon, \quad \int_{U_\varepsilon^c}p(x)\d x<\varepsilon.
        \end{equation*}
        Without loss generality, for such given $\varepsilon$ we assume there exists a positive constant $r_\varepsilon$  such that $U_\varepsilon=\bar{B}_{r_\varepsilon}$, the closed ball centered at 0 with radius $r_\varepsilon$ in $\mathbb{R}^d$. In addition, we assume
        \begin{equation*}
            0\leq p_\theta(x),\ p(x)<1,\quad \forall x\in B^c_{r_\varepsilon},\ \theta\in\Theta.
        \end{equation*}\label{subassum:Rvarepsilon}
    \item The probability measure $\mathbb{P}$ satisfies that for any given number $r >0$ and $f\in L^2(\mathbb{R}^d)$, it holds that 
       $$
\mathbb{P}\left(\bar{B}_{L^2(\mathbb{R}^d)}(f,r)\cap C_c^\infty(\mathbb{R}^d ) \right)>0,
$$
    where $\bar{B}_{L^2(\mathbb{R}^d)}(f,r)=\{g\in L^2(\mathbb{R}^d):\ \|f-g\|_{L^2(\mathbb{R}^d)}\leq r\}$ is the closed ball centered at $f$ and with radius $r$   in $L^2(\mathbb{R}^d)$.\label{subassum:L2R}
    \item The probability measure $\mathbb{P}$ satisfies that  for any given positive number $r >0$ and $f\in H^2(\mathbb{R}^d)$, it holds that
        \begin{equation*}
        \mathbb{P}(\bar{B}_{H^2(\mathbb{R}^d)}(f,r)\cap C_c^\infty(\mathbb{R}^d))>0,
    \end{equation*}
    where $\bar{B}_{H^2(\mathbb{R}^d)}(f,r)=\{g\in H^2(\mathbb{R}^d):\ \|f-g\|_{H^2(\mathbb{R}^d)}\leq r_0  \}$ is the closed ball centered at $f$ and with radius $r$   in $H^2(\mathbb{R}^d)$.\label{assumptionprobabilitmeasure}
    \end{enumerate}
\end{assumption}

\begin{theorem}\label{theo:R}
Let $p\in C^2(\mathbb{R}^d)$ be the classical solution of the stationary Fokker--Planck equation \eqref{eqn:sfpe}. Suppose that $p_{\theta}\in C^2(\mathbb{R}^d)$ for  every $\theta\in \Theta$, Assumption \ref{boundedbsigma} and  \ref{assumptionptheta}-\ref{subassum:RpM}-\ref{subassum:Rvarepsilon} hold. For any   $\varepsilon>0$, let $\varphi_{\theta,\varepsilon}$ be the solution to the boundary value 
Dirichlet problem on the ball $B_{r_\varepsilon}$,
\begin{align}\label{dirichletproblem}
\left\{
\begin{aligned}
    \mathcal{L}^*\varphi_{\theta,\varepsilon }&=\ p_{\theta}-p,\ &\mbox{in}\ B_{r_\varepsilon },\\
    \varphi_{\theta,\varepsilon }&=\ 0,\ &\mbox{on}\ \partial B_{r_\varepsilon },
    \end{aligned}\right.
\end{align}
and extend the definition   $ {\varphi}_{\theta,\varepsilon}=0$ in $B^c_{r_\varepsilon}$.
Let $\mathbb{P}_{\theta, r,\varepsilon}(\cdot)$, $\mathbb{P}'_{\theta, r,\varepsilon}(\cdot)$ be the conditional distributions $\mathbb{P}(\cdot\ |\  \bar{B}_{L^2(\mathbb{R}^d))}( \varphi_{\theta,\varepsilon},r))$ and $\mathbb{P}(\cdot\ |\   \bar{B}_{H^2(\mathbb{R}^d)}( \varphi_{\theta,\varepsilon},r))$ respectively.
\begin{enumerate}[label=(\alph*)]
\item
     If Assumption \ref{assumptionptheta}-\ref{subassum:L2R} holds,   then  for any $ r >0$,
        \begin{equation}\label{firstboundR}
             \mathbb{E}_{\varphi\sim\mathbb{P}_{\theta, r,\varepsilon} }\left| \mathbb{E}_{x\sim p_{\theta }}\mathcal{L}^*\varphi(x)\right| -  r   \|\mathcal{L}p_\theta\|_2               \leq 
              \|p_\theta-p\|^2_2\leq \ \mathbb{E}_{\varphi\sim\mathbb{P}_{\theta, r,\varepsilon} }\left| \mathbb{E}_{x\sim p_{\theta }}\mathcal{L}^*\varphi(x)\right| +  r  \|\mathcal{L}p_\theta\|_2 + \varepsilon.
    \end{equation}
    \item 
If Assumption \ref{assumptionptheta}-\ref{assumptionprobabilitmeasure}  holds, then for any $ r >0$,

    \begin{equation}\label{secondboundR}
           \begin{aligned}
            \mathbb{E}_{\varphi\sim\mathbb{P}'_{\theta, r,\varepsilon} }\left| \mathbb{E}_{x\sim p_{\theta }}\mathcal{L}^*\varphi(x)\right| -  r   C  \leq \|p_\theta-p\|^2_2\leq\ \mathbb{E}_{\varphi\sim \mathbb{P}'_{\theta, r, \varepsilon}}  \left| \mathbb{E}_{x\sim p_{\theta }}\mathcal{L}^*\varphi(x)\right| +  r   C + \varepsilon,
           \end{aligned}
              \end{equation}
\end{enumerate}
where 
\begin{equation}
     \begin{aligned}
         C=&\ M\left[d\max_{i}\|b_i\|_{\infty} + d^2  \max_{i,j}\|D_{ij}\|_{\infty} \right].\label{C7}
     \end{aligned}
\end{equation}
\end{theorem}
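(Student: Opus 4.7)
The plan is to mimic the proof of Theorem~\ref{theo:U} after reducing the problem from $\mathbb{R}^d$ to the compact ball $B_{r_\varepsilon}$, absorbing the complement contribution into the additive $\varepsilon$ on the upper-bound sides of \eqref{firstboundR} and \eqref{secondboundR}. The first step is to decompose $\|p_\theta-p\|_2^2=\int_{B_{r_\varepsilon}}(p_\theta-p)^2\,\d x+\int_{B_{r_\varepsilon}^c}(p_\theta-p)^2\,\d x$. On the complement, Assumption~\ref{assumptionptheta}-\ref{subassum:Rvarepsilon} supplies both the pointwise bounds $p_\theta,p\in[0,1)$ and the tightness $\int_{B_{r_\varepsilon}^c}p_\theta\,\d x,\ \int_{B_{r_\varepsilon}^c}p\,\d x<\varepsilon$. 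Combining these pointwise gives $(p_\theta-p)^2\le|p_\theta-p|\le p_\theta+p$, so the tail integral is of order $\varepsilon$ and accounts for the $+\varepsilon$ on the upper bound.

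Next, on the interior the Dirichlet problem \eqref{dirichletproblem} yields the identity $\int_{B_{r_\varepsilon}}(p_\theta-p)^2\,\d x=\int_{B_{r_\varepsilon}}\mathcal{L}^*\varphi_{\theta,\varepsilon}(p_\theta-p)\,\d x$, exactly as in \eqref{615} in the torus proof. To connect this to the weak loss $\mathbb{E}_{x\sim p_\theta}\mathcal{L}^*\varphi(x)=\int_{\mathbb{R}^d}\mathcal{L}^*\varphi\cdot p_\theta\,\d x$ for a test function $\varphi\in C_c^\infty(\mathbb{R}^d)$ drawn close to the zero extension $\tilde\varphi_{\theta,\varepsilon}$, I would use that integration by parts on all of $\mathbb{R}^d$ produces no boundary contribution at infinity (since $\varphi$ has compact support), so $\int_{\mathbb{R}^d}\mathcal{L}^*\varphi\cdot p_\theta\,\d x=\int_{\mathbb{R}^d}\varphi\mathcal{L}p_\theta\,\d x$. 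Splitting $\varphi=\tilde\varphi_{\theta,\varepsilon}+(\varphi-\tilde\varphi_{\theta,\varepsilon})$, Cauchy-Schwarz bounds the perturbation piece on the $L^2$-ball by $\|\varphi-\tilde\varphi_{\theta,\varepsilon}\|_{L^2(\mathbb{R}^d)}\|\mathcal{L}p_\theta\|_2\le r\|\mathcal{L}p_\theta\|_2$, which delivers the $r\|\mathcal{L}p_\theta\|_2$ term in \eqref{firstboundR}. The remaining main piece $\int_{\mathbb{R}^d}\tilde\varphi_{\theta,\varepsilon}\mathcal{L}p_\theta\,\d x=\int_{B_{r_\varepsilon}}\varphi_{\theta,\varepsilon}\mathcal{L}p_\theta\,\d x$ is then reduced, via integration by parts in $B_{r_\varepsilon}$ and the vanishing $\mathcal{L}p=0$, to $\int_{B_{r_\varepsilon}}(p_\theta-p)^2\,\d x$ modulo a surface term discussed below. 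Taking conditional expectation over $\varphi\sim\mathbb{P}_{\theta,r,\varepsilon}$ and running the two-sided chain as in \eqref{ineqn:L2} then yields \eqref{firstboundR}.

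For the $H^2$ statement \eqref{secondboundR}, the $L^2$ Cauchy-Schwarz step is replaced by the $H^2$-based estimate used in \eqref{ineqn:H2}: $|\int_{\mathbb{R}^d}\mathcal{L}^*(\varphi-\tilde\varphi_{\theta,\varepsilon})\cdot p_\theta\,\d x|$ is controlled by $\bigl(d\max_i\|b_i\|_\infty+d^2\max_{ij}\|D_{ij}\|_\infty\bigr)\|p_\theta\|_{L^2(\mathbb{R}^d)}\|\varphi-\tilde\varphi_{\theta,\varepsilon}\|_{H^2(\mathbb{R}^d)}$, and then the bound $\|p_\theta\|_{L^2}^2\le M\int p_\theta=M$ from Assumption~\ref{assumptionptheta}-\ref{subassum:RpM} recovers the constant $C$ in \eqref{C7}.

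The main obstacle I anticipate is the surface correction arising when integrating $\int_{B_{r_\varepsilon}}\varphi_{\theta,\varepsilon}\mathcal{L}p_\theta\,\d x$ by parts in the ball: because the zero extension $\tilde\varphi_{\theta,\varepsilon}$ has a jump in its normal derivative across $\partial B_{r_\varepsilon}$, a surface integral of the form $\int_{\partial B_{r_\varepsilon}}(\nabla\varphi_{\theta,\varepsilon}\cdot Dn)(p_\theta-p)\,\d S$ appears that is not bounded by mass tightness alone. I would handle this by enlarging $r_\varepsilon$ beyond the minimal tightness choice so that $p_\theta,p$ are also pointwise small on $\partial B_{r_\varepsilon}$ (using the decay of the invariant density at infinity), combined with standard elliptic regularity for \eqref{dirichletproblem} that controls $\|\nabla\varphi_{\theta,\varepsilon}\|$ near $\partial B_{r_\varepsilon}$ in terms of $\|p_\theta-p\|_{L^2(B_{r_\varepsilon})}$; the surface term then absorbs into the additive $\varepsilon$. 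A secondary technical point is that $\tilde\varphi_{\theta,\varepsilon}$ need not lie in $H^2(\mathbb{R}^d)$ for the same jump reason, which for the second statement calls for either a mollified surrogate of $\tilde\varphi_{\theta,\varepsilon}$ before applying Assumption~\ref{assumptionptheta}-\ref{assumptionprobabilitmeasure}, or an interpretation of the $H^2$-conditioning that is compatible with the interior trace of $\varphi_{\theta,\varepsilon}$.
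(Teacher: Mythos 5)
Your tail decomposition and the role of the Dirichlet problem \eqref{dirichletproblem} match the paper's proof, and you have correctly identified the genuine pressure point: the zero extension $\tilde\varphi_{\theta,\varepsilon}$ has a jump in its normal derivative across $\partial B_{r_\varepsilon}$, so it is only $H^1$ there and integrating $\int_{B_{r_\varepsilon}}\varphi_{\theta,\varepsilon}\mathcal{L}p_\theta\,\d x$ by parts leaves a surface term that does not vanish for free. Where you diverge from the paper is in how to deal with this. You propose to confront the surface term $\int_{\partial B_{r_\varepsilon}}(\nabla\varphi_{\theta,\varepsilon})^\top Dn\,p_\theta\,\d S$ directly, making it small by enlarging $r_\varepsilon$ and invoking pointwise decay of $p_\theta$. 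That step is a gap: Assumption \ref{assumptionptheta}-\ref{subassum:Rvarepsilon} gives only integral tightness and the crude bound $0\le p_\theta<1$ outside $\bar B_{r_\varepsilon}$, uniformly in $\theta$. There is no uniform pointwise decay rate for the family $\{p_\theta\}$, and the surface area $|\partial B_{r_\varepsilon}|$ grows with $r_\varepsilon$, so the surface integral is not controlled by the hypotheses as stated. You would need an extra assumption that is not in the theorem.

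The paper takes a different route. It never integrates by parts on the ball. Instead it writes $\int_{B_{r_\varepsilon}}|p_\theta-p|^2\,\d x = \int_{\mathbb{R}^d}\mathcal{L}^*\tilde\varphi_{\theta,\varepsilon}(p_\theta-p)\,\d x$ by applying the PDE pointwise, then substitutes a $\delta$-mollification $\widehat\varphi_{\theta,\varepsilon,\delta}\in C_c^\infty(\mathbb{R}^d)$, against which the weak SFPE identity $\int\mathcal{L}^*\widehat\varphi_{\theta,\varepsilon,\delta}\,p\,\d x=0$ holds with no boundary contribution, and absorbs the substitution error $I_\delta$ in the limit $\delta\to 0$ via a claimed $H^2_{\mathrm{loc}}$ convergence of $\widehat\varphi_{\theta,\varepsilon,\delta}$ to $\tilde\varphi_{\theta,\varepsilon}$. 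So there is no explicit surface integral in the paper's argument and no new assumption is introduced, which is the essential advantage over your route. (It is fair to note, however, that your observation bears on this step too: because the zero extension is only $H^1$ across $\partial B_{r_\varepsilon}$, the stated $H^2_{\mathrm{loc}}$ convergence of the mollification is not automatic, so the subtlety you flagged is real; the paper simply packages it differently rather than eliminating it.)
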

 
\begin{proof}
According to Assumption \ref{assumptionptheta}-\ref{assumptiononp}, we observe that the $L^2$ error primarily concentrates within the bounded domain $B_{r_\varepsilon}$,
\begin{equation}\label{varepsilonp}
\begin{aligned}
    \int_{B_{r_\varepsilon }}|p_{\theta}(x)-p(x)|^2 \d x
    \le &   \int_{\mathbb{R}^d}|p_{\theta}(x)-p(x)|^2 \d x  \\
    =&\ \int_{B_{r_\varepsilon }}|p_{\theta}(x)-p(x)|^2 \d x + \int_{B^c_{r_{\varepsilon }}}|p_{\theta}(x)-p(x)|^2 \d x \\
    \leq&\ \int_{B_{r_\varepsilon }}|p_{\theta}(x)-p(x)|^2 \d x + \varepsilon.
\end{aligned}
\end{equation}
And $p_\theta-p$ is at least $C^1$ in $\mathbb{R}^d$, making
the Dirichlet problem \eqref{dirichletproblem} have a unique $C^{2,1}(\bar B_{r_\varepsilon })$ solution (see \cite[Theorem 6.8 \& Corollary 6.9]{gilbarg1977elliptic}). Since  $ \varphi_{\theta,\varepsilon }$ may not  be in   $C^2(\mathbb{R}^d)$,  we consider its $\delta$-mollification  
$$\widehat\varphi_{\theta,\varepsilon ,\delta}(x):=\int_{\mathbb{R}^d}\eta_\delta(x-y) \widehat\varphi_{\theta,\varepsilon }(y)\d y=\int_{B_\delta}\eta_\delta(y) \widehat\varphi_{\theta,\varepsilon }(x-y)\d y,$$
where 
$\eta_\delta=\frac{1}{\delta^d}\eta\left(\frac{x}{\delta}\right)$,
and $\eta$ is the standard mollifier
\begin{equation*}
    \eta(x)=\begin{cases}
        K_0\exp\left(\frac{1}{|x|^2-1}\right),\ \mbox{if}\ |x|<1,\\
        0,\ \mbox{if}\ |x|\geq 1,
    \end{cases}
\end{equation*}
and $K_0>0$ is a constant selected such that $\int_{\mathbb{R}^d}\eta \d x=1$. It is easy to see \cite[Appendix C4 \& Section 5.3.1 Theorem 1]{evans2022partial} that $\widehat\varphi_{\theta,\varepsilon ,\delta}\in C_c^\infty(\mathbb{R}^d)$ and 
$$\widehat\varphi_{\theta,\varepsilon ,\delta}\to  \varphi_{\theta,\varepsilon },\quad\mbox{in}\quad H^{2}_{\mathrm{loc}}(\mathbb{R}^d),\quad \mbox{as}\quad\delta\to0.$$

Now we can estimate the squared $L^2$ error in terms of the special test function $\widehat{\varphi}_{\theta,\varepsilon,\delta}\in C_c^\infty(\mathbb{R}^d)$  as follows
by noting $\int_{\mathbb{R}^d}\mathcal{L}^*\widehat\varphi_{\theta,\varepsilon ,\delta}(x)p(x) \d x=0$ for the true solution $p$ and  $\varphi_{\theta,\varepsilon }(x)=\widehat\varphi_{\theta,\varepsilon ,\delta}(x)=0$ outside of $B_{1+r_\epsilon}$:
\begin{equation*}
\begin{aligned} &   \int_{B_{r_\varepsilon }}|p_{\theta}(x)-p(x)|^2 \d x   
    = \ \int_{\mathbb{R}^d}\mathcal{L}^* \varphi_{\theta,\varepsilon }(x)(p_{\theta}(x)-p(x)) \d x  
    \\
    =&\ \int_{\mathbb{R}^d}\mathcal{L}^*\widehat\varphi_{\theta,\varepsilon ,\delta}(x)p_{\theta}(x) \d x + \int_{B_{r_\varepsilon +1}}\mathcal{L}^*(\varphi_{\theta,\varepsilon }(x)-\widehat\varphi_{\theta,\varepsilon ,\delta})(p_{\theta}(x)-p(x)) \d x   \\
    =:&\ \int_{\mathbb{R}^d}\mathcal{L}^*\widehat\varphi_{\theta,\varepsilon ,\delta}(x)p_{\theta}(x) \d x +  I_\delta ,
\end{aligned}
\end{equation*}
where  $I_\delta$ refers to the second item
in the second line above and note that  $
\abs{I_\delta}\le   C_\delta=\ \|p_\theta-p\|_{B_{r_\varepsilon +1}}\left(d\max_{i}\|b_i\|_{B_{r_\varepsilon +1}}  + d^2\max_{i,j}\|D_{ij}\|_{B_{r_\varepsilon+1}}\right)\|\widehat\varphi_{\theta,\varepsilon }-\widehat\varphi_{\theta,\varepsilon ,\delta}\|_{H^2(B_{r_\varepsilon +1})}$. Since $\lim_{\delta\to 0} C_\delta \to 0$ ,
then 
$\int_{B_{r_\varepsilon }} \abs{p_{\theta }(x)-p(x)}^2 \d x  
    = \  \int_{\mathbb{R}^d}\mathcal{L}^*\widehat\varphi_{\theta ,\varepsilon }(x)p_{\theta }(x) \d x.
$
Consequently, by \eqref{varepsilonp}, we proved that 
\begin{equation*}
\begin{aligned}
\int_{\mathbb{R}^d}\mathcal{L}^* \varphi_{\theta ,\varepsilon }(x)p_{\theta }(x) \d x
\le 
\int_{\mathbb{R}^d}|p_{\theta }(x)-p(x)|^2 \d x  
    \leq \int_{\mathbb{R}^d}\mathcal{L}^* \varphi_{\theta ,\varepsilon }(x)p_{\theta }(x) \d x + \varepsilon  .
\end{aligned}
\end{equation*}
which is analogous  to \eqref{615} in the proof of Theorem \eqref{theo:U}.
The remaining proofs are similar to that of 
which is analogous  to \eqref{615} in the proof of Theorem \eqref{theo:U}.

To prove the first statement, let $r$ be a positive constant and $\Omega_{\theta, r  }:=\{\varphi\in C_c^\infty(\mathbb{R}^d) :\ \| \varphi-\varphi_{\theta }\|_{L^2(U)}\leq  r  \}   $ represent    the $L^2$-ball. By Assumption \ref{assumptionptheta}-\ref{subassum:L2R} we know that  the conditional distribution 
 $\mathbb{P}_{\theta, r,\varepsilon}( \cdot)$ is well-defined and $\mathbb{P}_{\theta, r,\varepsilon}(\Omega_{\theta,r,\varepsilon})=1$. We now can obtain a bound for the squared $L^2$ error as follows. By replacing $U$ with $\mathbb{R}^d$, $\mathbb{P}_{\theta,r}$ with $\mathbb{P}_{\theta,r,\varepsilon}$, and $\Omega_{\theta, r}$ with $\Omega_{\theta, r,\varepsilon}$ in \eqref{ineqn:L2}, we immediately obtain the second inequality ``$\leq$'' of \eqref{firstboundR}. The first ``$\leq$'' part can be proven in a similar manner as outlined in Section \ref{ss:bd}.

Now we turn to the second statement. The proof will also follow a similar approach as outlined in Section \ref{ss:bd}. For any positive constant $r$, we select an $H^2$-ball: $\Omega'_{\theta, r, \varepsilon} := \{\varphi \in C_c^\infty(\mathbb{R}^d) : | \varphi - \widehat\varphi_{\theta, \varepsilon}|_{H^2(\mathbb{R}^d)} \leq r\}$, and replace $U$ with $\mathbb{R}^d$, $\mathbb{P}'_{\theta, r}$ with $\mathbb{P}'_{\theta, r, \varepsilon}$, and $\Omega'_{\theta, r}$ with $\Omega'_{\theta, r, \varepsilon}$ in \eqref{ineqn:H2}. Then we obtain \eqref{secondboundR} where the constant $C$ is given by \eqref{C7}.
\end{proof}

\section{Numerical Experiments}\label{s4}
In this section, we apply the WGS to  several  different examples: a two-dimensional system with a single mode, a two-dimensional system with two metastable states, a three-dimensional Lorenz system, and two high-dimensional problems. We use Real NVP, as mentioned in Section \ref{s23}, to parameterize the generative map $G$ in all the examples. For each affine coupling layer in Real NVP, we use the fully connected neural networks with three hidden layers and the LeakyReLU as the activation function to parameterize the translation and scaling functions. Unless specifically stated, we use the base distribution as the standard Gaussian distribution $\rho(z)=\mathcal{N}(z;0,\boldsymbol{I}_d)$. The hyper-parameters  for each examples  are provided in Table \ref{tab_hp} in the Appendix \ref{appendix:e}.

In the first and second examples, we compare the WGS with the ADDA  method  proposed in \cite{tang2022adaptive}, where the loss function is defined as 
\begin{equation}\label{eqn:addaloss}
    L_{\text{ADDA}}=\frac{1}{N_p}\sum_{i=1}^{N_p}|\mathcal{L}p_\theta(x_i)|^2+\lambda L_b,
\end{equation}
where $\lambda L_b$ represents the same boundary as in our method. In \eqref{eqn:addaloss}, $\{x_i\}_{i=1}^{N_p}$ is sampled by $p_\text{data}(x)$, where $p_\text{data}(x)$ is initially set as a uniform distribution in a bounded domain, and then set as the $p_\theta(x)$ as the adaptive technique   \cite{tang2022adaptive}. We use the same network structure as in the WGS to parameterize the generative map $G_\theta$ and $p_\theta=G_{\theta\#}\rho$.

To assess the accuracy of the learned invariant measure, we compute the relative error of the learned distribution $p_\theta(x)$ push-forward by  $G_\theta$ from $\rho(z)$:
$$
e_p=\frac{\|p_\theta(x)-p(x)\|_2}{\|p(x)\|_2}.
$$

\subsection{Example 1: A two-dimensional system with single mode}\label{s41}
To test the efficiency of WGS, we consider the following two-dimensional system
\begin{equation}
    \left\{\begin{array}{l}
\dx=-(x-1)\dt+\sqrt{2} \dW_1, \\
\dy=-(y-1)\dt+\sqrt{2} \dW_2.
\end{array}\right.
\end{equation}
The invariant distribution for this example is $p(x)=\exp(-(x-1)^2/2-(y-1)^2/2)/2\pi$.

For WGS, we generated $N=8000$ sample points from the base distribution. We selected $N_\varphi = 500$ test functions, whose means were sampled from the uniform distribution in the box $[-4,4]\times[-4,4]$. The scale parameter $\kappa$ of the test functions is $1.0$. For ADDA, we utilized $N_p=8000$ data points sampled from $p_\theta$ during the training process. We utilize the Adam optimizer with a decay weight of learning rate to train WGS and ADDA for $10000$ iterations.

\begin{figure}[ht]
    \centering
    \includegraphics[width=.9\textwidth]{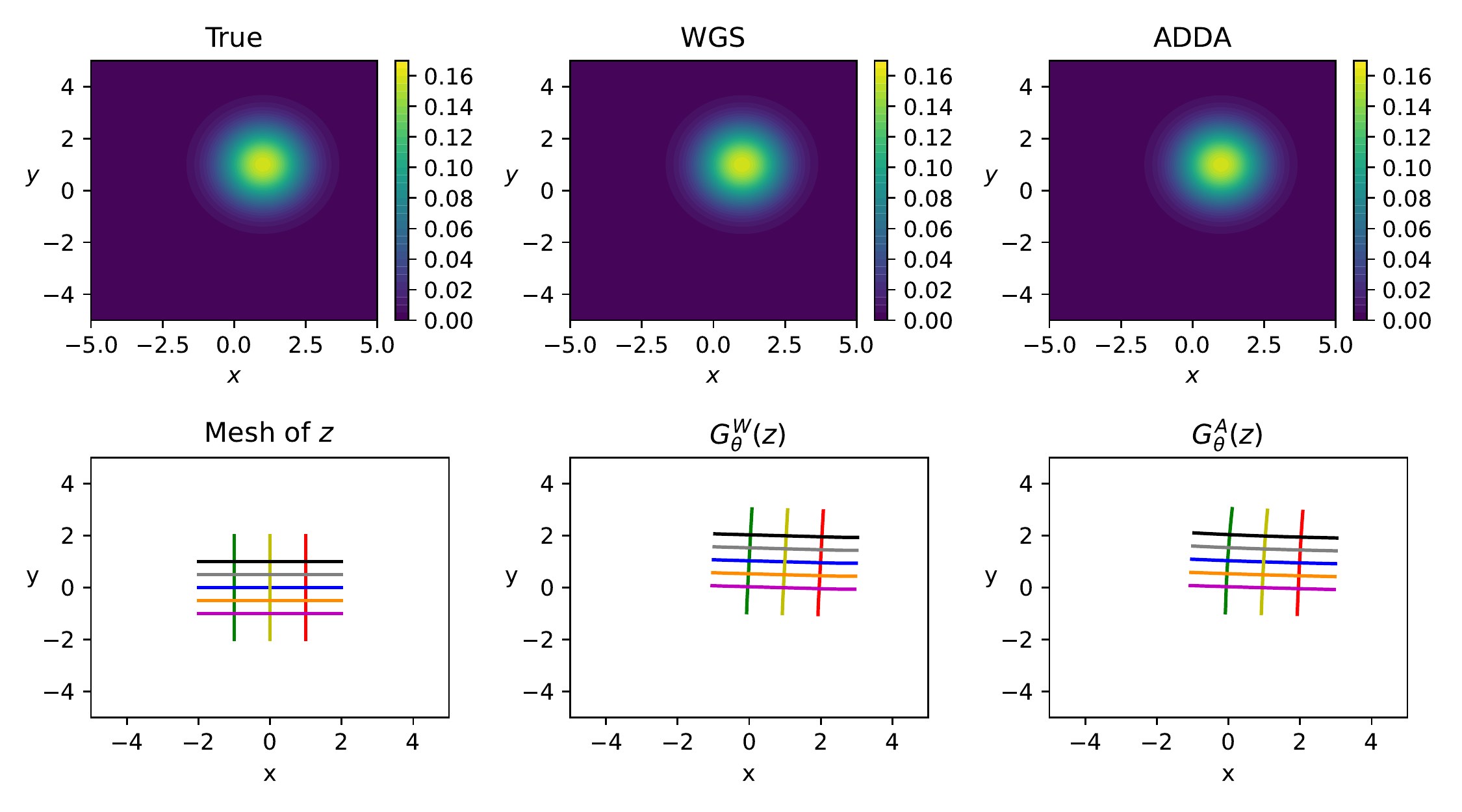}
    \caption{(Example 1) Contour plots of the   true   probability density function  $p(x)$ (upper left), $p_\theta^W(x)$ learned by WGS (upper middle) and $p_\theta^A(x)$ learned by ADDA (upper right). The uniform mesh in the base space (lower left), the mesh transformed by $G_\theta^W$ (lower middle) and the mesh transformed by $G_\theta^A$ (lower right).}
    \label{fig:problem_gaussian}
\end{figure}

Figure \ref{fig:problem_gaussian} presents the comparison between the true probability density function $p$ (upper left), the probability density function $p^W_\theta$ learned by WGS (upper middle), and the probability density function $p^A_\theta$ learned by ADDA (upper right). And the uniform mesh in the base space mapped by $G_\theta^W$ (lower middle) and $G_\theta^A$ (lower right) are almost the same. In Figure \ref{fig:problem_gaussian}, we observe that WGS and ADDA can obtain almost the same as the exact solution.

\begin{figure}[ht]
    \centering
    \includegraphics[width=0.9\textwidth]{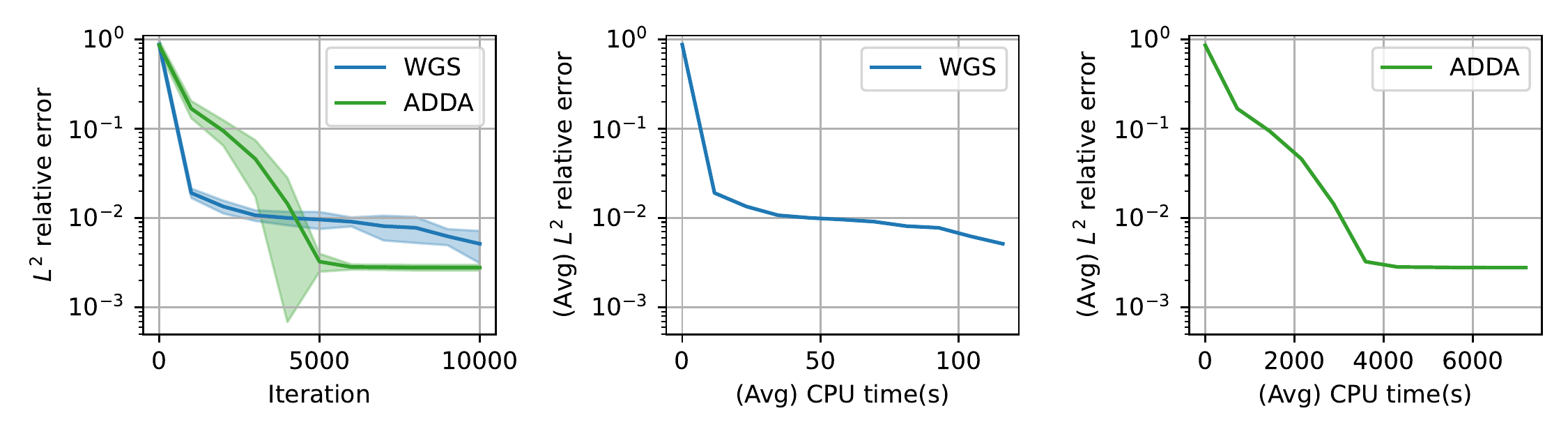}
    \caption{(Example 1) The left panel displays the $L^2$ relative error versus the iteration for the learned solution $p_\theta$ obtained from ten different runs using WGS and ADDA. The middle panel and the right panel depict the average $L^2$ relative error against average CPU time for WGS and ADDA, respectively.}
    \label{fig:problem_gaussian_error}
\end{figure}
In Figure \ref{fig:problem_gaussian_error}, we observed that WGS achieves the  $L^2$ relative error $e_p$ approximately $10^{-2}$ faster than ADDA. However, ADDA can achieve a smaller $e_p$ than WGS throughout the entire iteration. It is worth noting that ADDA requires a significant amount of CPU time than WGS. Since ADDA requires the computation of the Jacobian of the generative map $G_\theta^A$ and the gradient of $p_\theta^A$, WGS only needs the computation of the map $G_\theta^W$. The loss function of WGS can be computed in matrix form.  We conclude that WGS strikes a reasonable balance between computational time and numerical error, yielding satisfactory results,  and that WGS shows a significant improvement in efficiency   for this example.

\subsection{Example 2: A two-dimensional system with two metastable states}
\label{subsec::eg2}

In this section, we consider the following two-dimensional dynamical system \cite{lin2023computing}
\begin{equation}
    \left\{\begin{array}{l}
\dx=\left[\frac{1}{5}x(1-x^2)+y(1+\sin x)\right]\dt+\sqrt{\frac{2}{5}\varepsilon } \dW_1, \\
\dy=\left[-y+2x(1-x^2)(1+\sin x)\right]\dt+\sqrt{2 \varepsilon } \dW_2.
\end{array}\right.
\end{equation}
This system has two metastable states at $x_1=(-1,0)^\top$ and $x_2=(1,0)^\top$ and one unstable stationary point at $x_3=(0,0)^\top$. The invariant measure is thus bi-modal with the two centers near the two metastable states. 
As $\varepsilon $ becomes smaller, the barrier between two metastable states will increase, and it takes exponentially longer physical time for the distribution to reach the invariant measure.  In addition, this is an asymmetric bi-model example, where the mode on the left is more dominant since the probability ratio of the two modes, $\mbox{Prob}(X<0)/\mbox{Prob}(X>0)$, is as large as 4 or 5 in our tests.

In this example, we test WGS for $\epsilon = 0.05$, $0.1$, and $0.2$. For all cases, $N = 10000$ sample points are drawn from the base distribution. 
We use $N_\varphi = 2000$   test functions with a batch size of $N_\varphi^b = 400$, and perform $N_I = 50000$ iterations. The parameter $\kappa$ is gradually decreased from an initial value to a lower value, and then held constant for the final $30000$ iterations. Specifically, for $\epsilon = 0.2$, $\kappa$ is decreased from $0.5$ to $0.25$; for $\epsilon = 0.1$, from $0.45$ to $0.18$; and for $\epsilon = 0.05$, from $0.45$ to $0.10$.  
The setting used for the ADDA algorithm for comparison and  the computing of the true solution $p$ can be found in the Appendix \ref{appendix:c2}.

\begin{table*}[htbp]
\footnotesize
\caption{Comparison of WGS and ADDA for solving  Example 2}
\label{tab_e2}
\begin{threeparttable}  
\begin{tabularx}{\textwidth}{XXXXX}
\toprule
Methods & $\varepsilon$ & $e_p$  & Time/Iter\tnote{1} & Number of Iters\tnote{2} \\
\midrule
WGS & $0.2$ & $0.0457\pm 0.0119$ & $0.032$ & $2.5\times 10^5$ \\
ADDA & $0.2$ & $0.4011\pm0.0047$ & $1.900$ & $7.5\times 10^4$ \\
\midrule
WGS & $0.1$ & $0.0734\pm 0.0168$ & $0.032$ & $2.5\times 10^5$ \\
ADDA & $0.1$ & $0.4943\pm0.2262$  & $1.900$ & $7.5\times 10^4$ \\
\midrule
WGS & $0.05$ & $0.0784\pm 0.018$  & $0.066$ & $2.5\times 10^5$ \\
ADDA & $0.05$ & $0.8724\pm0.4186$  & $3.167$ & $7.5\times 10^4$ \\
\bottomrule
\end{tabularx}
\begin{tablenotes}
        \footnotesize
        \item[1]{\normalfont The time is the total CPU time during the training of WGS and ADDA. 
        \item[2] The number of Iters  is defined  as $N_I\times  \lceil N_\varphi/N_\varphi^b \rceil$   for WGS   and    $N_I^p\times \lceil N_p/N_p^b\rceil\times N_{\text{adaptive}} $ for ADDA, respectively. Refer to Algorithm \ref{algorithm} and Algorithm \ref{alg:adda} in the Appendix \ref{appendix:c2}  for the meaning of these parameters}.
 
\end{tablenotes}
\end{threeparttable}
\end{table*}

 In Table \ref{tab_e2}, we provide a quantitative  assessment of the numerical solutions  for various $\varepsilon$ values.  
The results are based on the six independent runs with different random seeds,  so the  standard
 deviation of the error $e_p$ is also reported. 
 Compared to the ADDA method, the WGS demonstrates much lower training cost  per iteration as expected. More importantly, the WGS achieves a lower relative error and the ADDA has the larger relative error. The reason is that in the ADDA method, the generative map $G_\theta$  is trapped in one mode  or shrinks into a delta density. This  also  results in a large standard deviation of the relative error for ADDA, particularly when $\varepsilon = 0.05$.

\begin{figure}[htbp]
    \centering
    \includegraphics[width=.8\textwidth]{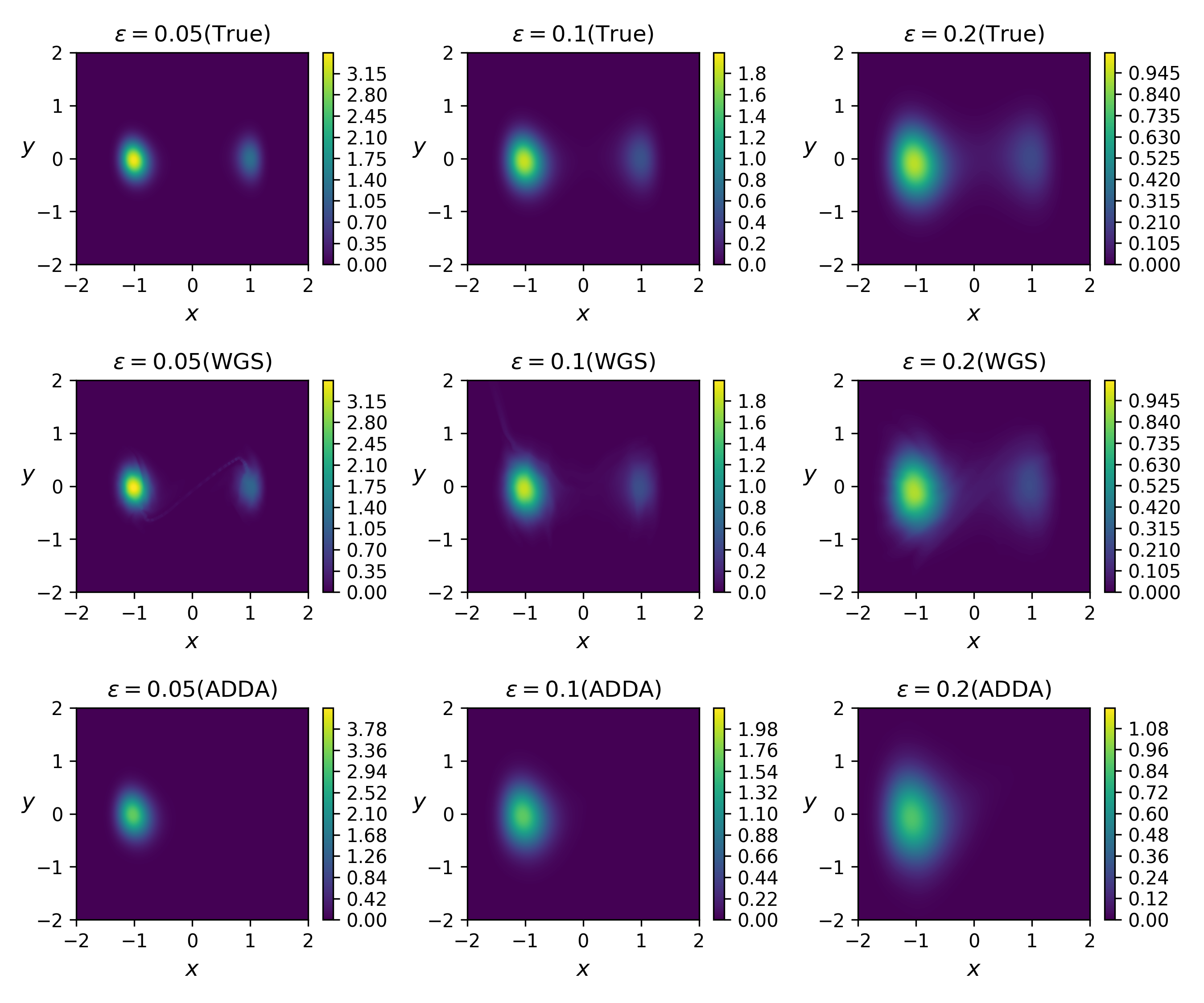}
    \caption{(Example 2) Contour plots of the probability density function of invariant measure $p(x,y)$, where $p(x,y)$ is the finite difference solution of the FP equation (top), $p^W_\theta(x,y)$ learned by WGS (middle) and $p^A_\theta(x,y)$ learned by ADDA for $\varepsilon =0.05$ (left), $\varepsilon =0.1$ (middle) and $\varepsilon =0.2$ (right).}
    \label{fig:problemdw_pdf}
\end{figure}

Figure \ref{fig:problemdw_pdf} compares the true probability density function $p$ (upper panel), the probability density function $p^W_\theta$ learned by WGS (middle panel), and the probability density function $p^A_\theta$ learned by ADDA (lower panel). The  contour plots of the corresponding potentials  are also included in the Appendix \ref{appendix:c1}.  
To illustrate how   the WGS generative map $G_\theta$ finds the two modes during the training,  we show  a typical run  in  Figure \ref{fig:problemdw_error} for $\varepsilon=0.2$ and $\varepsilon=0.05$,  
by plotting the relative errors and the 
generated samples   
during the different training stage. These plots show that WGS  can capture  two metastable states rather quickly within the first 1000 iterations, and then quickly converges to the true distribution. 

To further quantify the capability of capturing two distinct modes, in 
the Appendix \ref{appendix:c2}  we check the   probability that the $x$-component is positive: $\mbox{Pr}(X>0)$, which  is expected to converge to a constant  strictly between zero and one. Figure \ref{fig:e2_prob_x} in Appendix \ref{appendix:c2}
shows  the evolution of this probability $\mbox{Pr}(X>0)$  during the training  for both WGS and ADDA, which  validates the successful performance of  WGS  in identifying both metastable states.
In particular, even when the WGS method captures only one mode initially (at  $\varepsilon=0.05$),  it successfully identifies the other mode as the training progresses. These phenomena are detailed in Appendix \ref{appendix:c2}.

\begin{figure}[htbp]
    \centering
    \includegraphics[width=1.1\textwidth, height=0.66\textwidth]{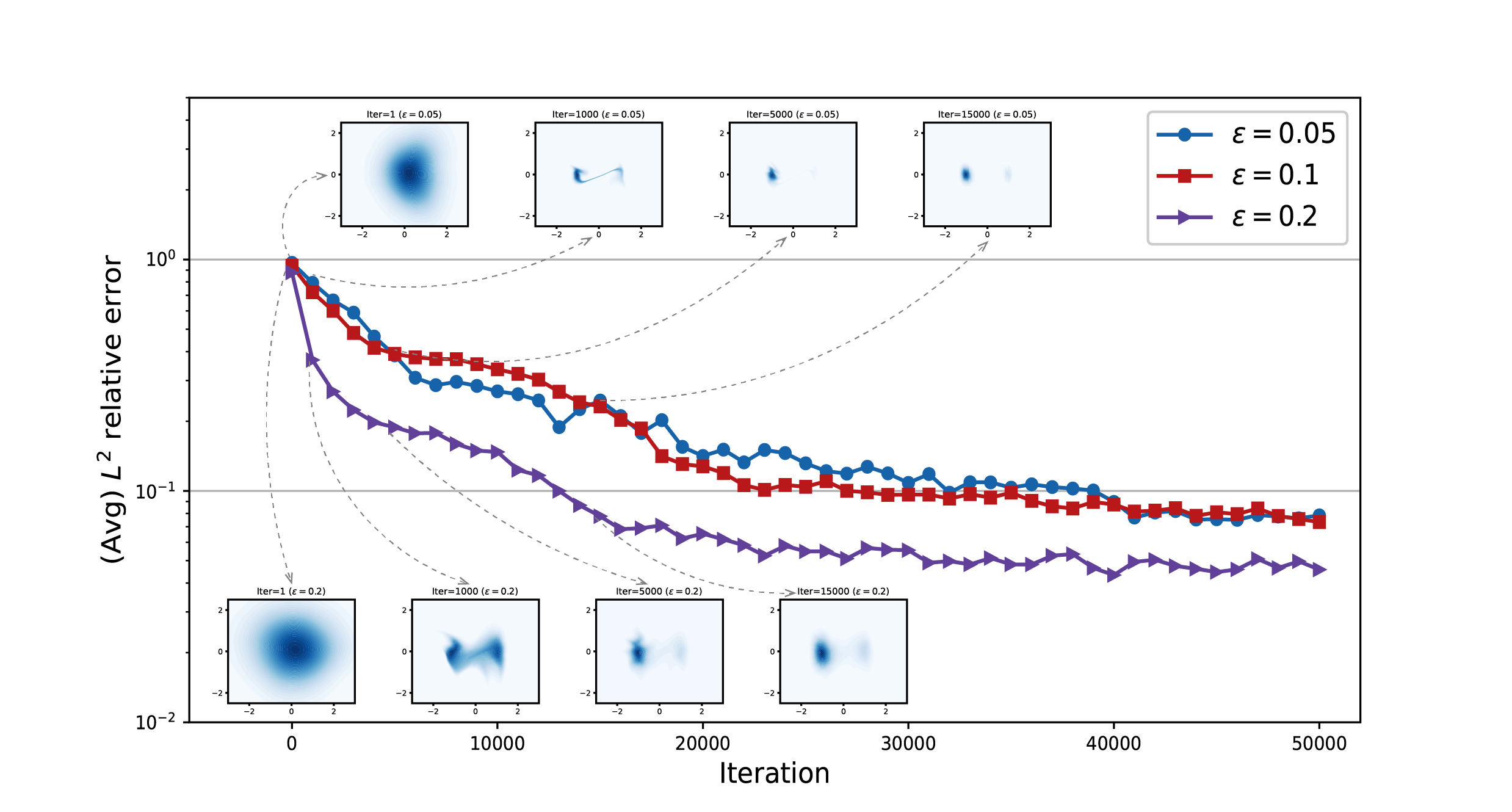}
    \caption{{(Example 2) The average $L^2$ relative error versus the iteration is shown for different values of $\epsilon$. The upper panel displays the contour plot of the probability density function learned by WGS at specific iterations for $\epsilon = 0.05$, while the lower panel shows the corresponding contour plot of the probability density function learned by WGS for $\epsilon = 0.2$.}}
    \label{fig:problemdw_error}
\end{figure}

\subsection{Example 3: Lorenz system}
In this section, we apply the WGS to the Lorenz system, which captures complex atmospheric convection. The Lorenz system exhibits chaotic behaviour, which is sensitive to initial conditions, known as the butterfly effect. The dynamical equations of the Lorenz system in the presence of noise in the three-dimensional space are given by
\begin{equation}
    \left\{\begin{array}{l}
\dx=\beta_1(y-x)\dt+\sqrt{2 \varepsilon } \dW_1, \\
\dy=\left(x\left(\beta_2-z\right)-y\right)\dt+\sqrt{2 \varepsilon } \dW_2, \\
\dz=\left(x y-\beta_3 z\right)\dt+\sqrt{2 \varepsilon } \dW_3,
\end{array}\right.
\end{equation}
where $W=(W_1,W_2,W_3)^\top$ is a three-dimensional white noise, and the diffusion matrix $D=2\varepsilon  \boldsymbol{I}_3$ denotes the three-dimensional identity matrix. We take the parameters $\beta_1=10, \beta_2=28$ and $\beta_3=8/3$. Under these parameter values, the shape of the attractor in the deterministic system resembles a butterfly. We take the parameter $\varepsilon  = 20$.

In this case, we incorporate 12 affine coupling layers within the Real NVP architecture, each consisting of a three-layer neural network. The base distribution is set as $\rho(z)=\mathcal{N}(0,20\boldsymbol{I}_d)$. The training process takes $N_I =7500$ iterations, with the dataset of $N = 10,000$ sample points. We select $N_\varphi = 10,000$ test functions with a batch size of $1000$. We set $\kappa=5$ and the learning rate as $0.0002$ throughout the training phase. 

To validate the accuracy of our method, we use the Euler-Maruyama method to run the SDE to estimate the probability density function $p$ of the invariant measure. In the Euler-Maruyama method, we first sample 1000 points uniformly from $[-25,25]\times[-30,30]\times[-10,60]$. We then simulate 1000 trajectories over a sufficient enough long time $T=10^5$ with time step $\delta t=10^{-3}$. After reaching a burn-in time $T_0=100$, we   save the running data points every 1000 time steps.To estimate the probability density function $p$   using histogram, we   use a fine mesh in  $[-30,30]\times[-40,40]\times[-10,60]$ to compute the frequency of sample data points in each bin. For comparison, we use the learned generative map $G_\theta$ to sample data points and then estimate the probability density function $\hat{p}_\theta$ by the same refined mesh. 

In Figure \ref{fig:lorenz_map}, we choose two random data points and use the black arrow to plot the generative map map $G_\theta$.
In Figure \ref{fig:problemlz_pdf}, we plot the marginal probability density function $p(x,y)$, $p(x,z)$ and $p(y,z)$ estimated by the Monte Carlo method and the learned marginal probability density function $\hat{p}_\theta(x,y)$, $\hat{p}_\theta(x,z)$ and $\hat{p}_\theta(y,z)$. The relative $L^2$ error between $p$ and $\hat{p}_\theta$ is $e_p=0.157$. 

\begin{figure}[ht]
    \centering
    \includegraphics[width=0.9\textwidth]{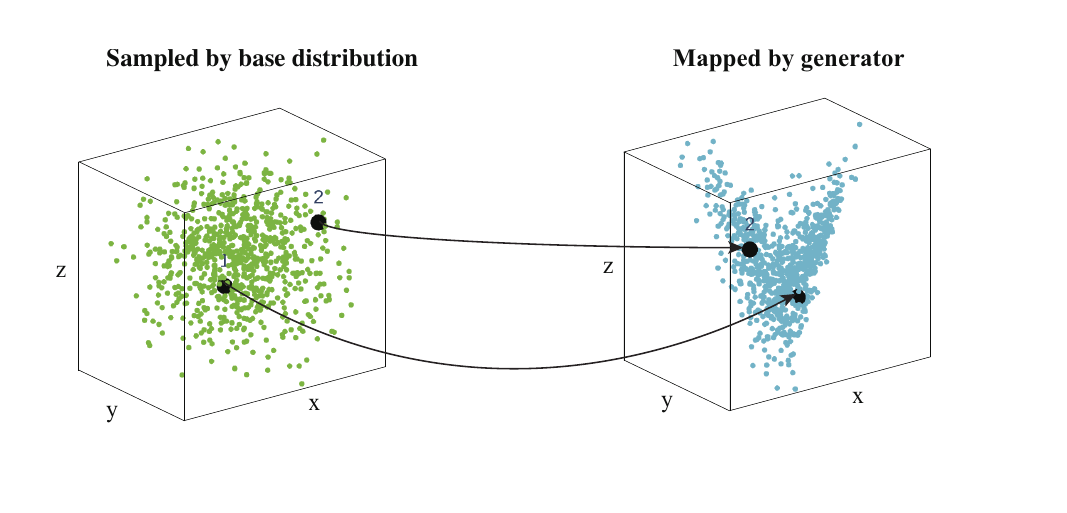}
    \caption{(Example 3) The generative map $G_\theta$ trained by WGS. The left panel shows the data points sampled by the base distribution and the right panel shows the data points mapped by the generative map $G_\theta$. Two pairs of representative data points under this map are highlighted by the black arrows. }
    \label{fig:lorenz_map}
\end{figure}

\begin{figure}[ht]
    \centering
    \includegraphics[width=0.9\textwidth]{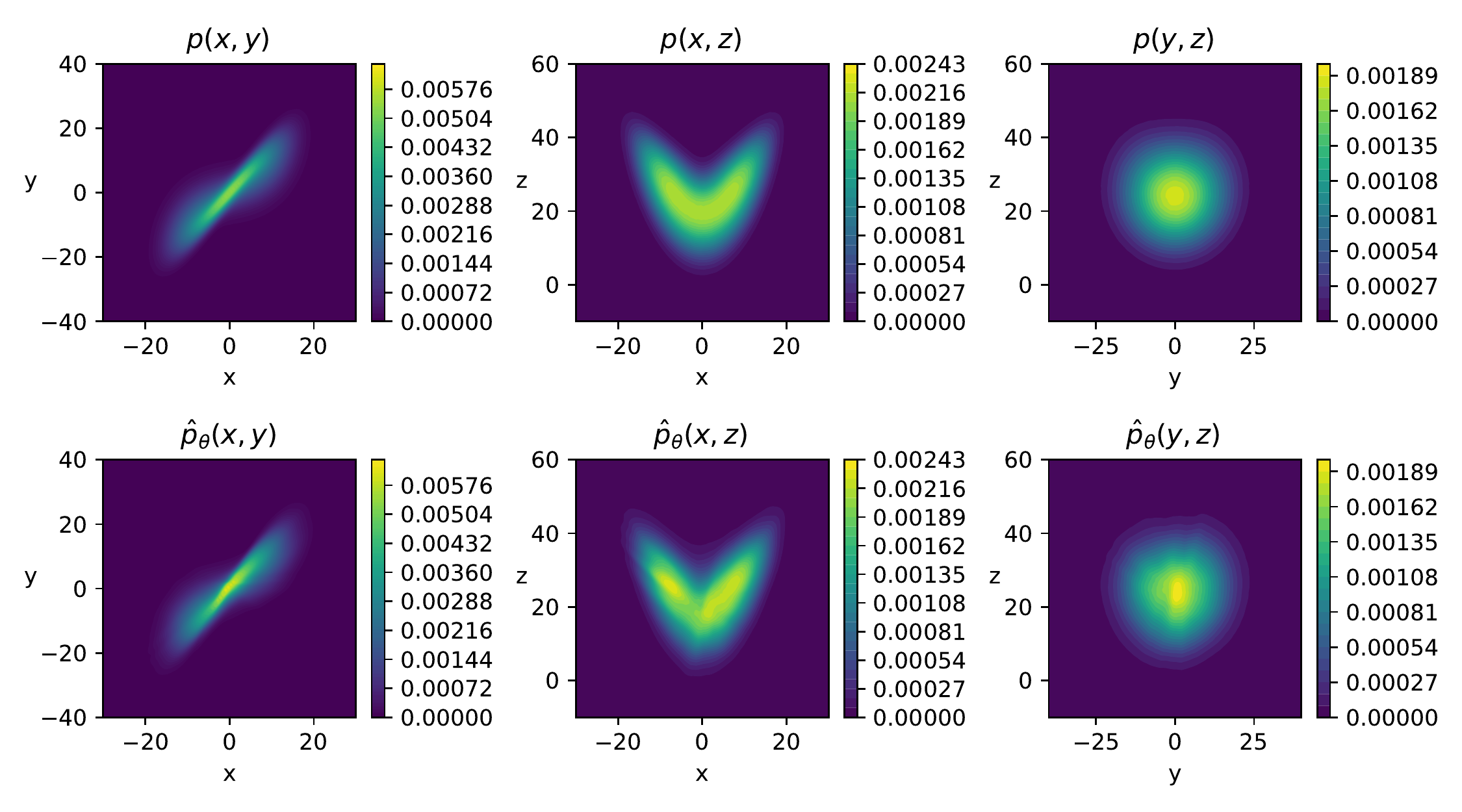}
    \caption{(Example 3) Contour plots of the marginal probability density function $p(x,y)$, $p(x,z)$ and $p(y,z)$ estimated using the Monte Carlo method (upper) and the learned marginal probability density function $\hat{p}_\theta(x,y)$, $\hat{p}_\theta(x,z)$ and $\hat{p}_\theta(y,z)$ computed by WGS. 
    }
    \label{fig:problemlz_pdf}
\end{figure}

\subsection{Example 4: A ten-dimensional problem}
In this example, we test WGS in a ten-dimensional problem to investigate the effect of the scale hyper-parameter in WGS. The system  couples the following five identical and independent two-dimensional systems \cite{lin2022computing,lin2023computing}
\begin{equation}\label{eqn:example4_2d}
    \left\{\begin{array}{l}
\dy_{2 i-1}=(-y_{2 i-1}+y_{2 i}\left(1+\sin y_{2 i-1}\right))\dt+\sqrt{2 \varepsilon} \dW_{2 i-1}, \\
\dy_{2 i}=(-y_{2 i}-y_{2 i-1}\left(1+\sin y_{2 i-1}\right))\dt+\sqrt{2 \varepsilon} \dW_{2 i}, 
\end{array}\right. \quad 1 \leq i \leq 5
\end{equation}
where $W=(W_1,W_2,\cdots,W_{10})^T$ is a $10d$ Brownian motion. By using the transformation of $x=By\in\mathbb{R}^{10}$, where $B\in\mathbb{R}^{10\times 10}$ is a given matrix and $y=(y_1,\cdots,y_{10})$, the dynamics of the variable $x$ in our interest  is governed by the equation
\begin{equation}\label{eqn:example4_10d}
    \d x=f(x)\dt+\sqrt{2\varepsilon}B\d W,
\end{equation}
where the force $f$ can be determined by the transformation $x=By$. The matrix $B=[b_{ij}]$ is given by
$$
b_{ij}=\left\{\begin{array}{cl}
0.8, & \text { for } i=j=2 k-1,1 \leq k \leq 5 \\
1.25, & \text { for } i=j=2 k, 1 \leq k \leq 5 \\
-0.5, & \text { for } j=i+1,1 \leq i \leq 9 \\
0, & \text { otherwise }
\end{array}\right.
$$
and we set $\varepsilon=0.1$.  {By the transformation of $x=By$, one can   show that the invariant distribution of the system \eqref{eqn:example4_10d} is given by}
$
p(x)=\prod_{i=1}^5p_0(y_{2i-1},y_{2i}),
$
where $(y_1,\cdots,y_{10})=B^{-1}x$,  and $p_0$ is the invariant distribution of the $2d$ system \eqref{eqn:example4_2d} and  can be computed by the finite difference method.
\begin{figure}[ht]
    \centering
    \includegraphics[width=0.9\textwidth]{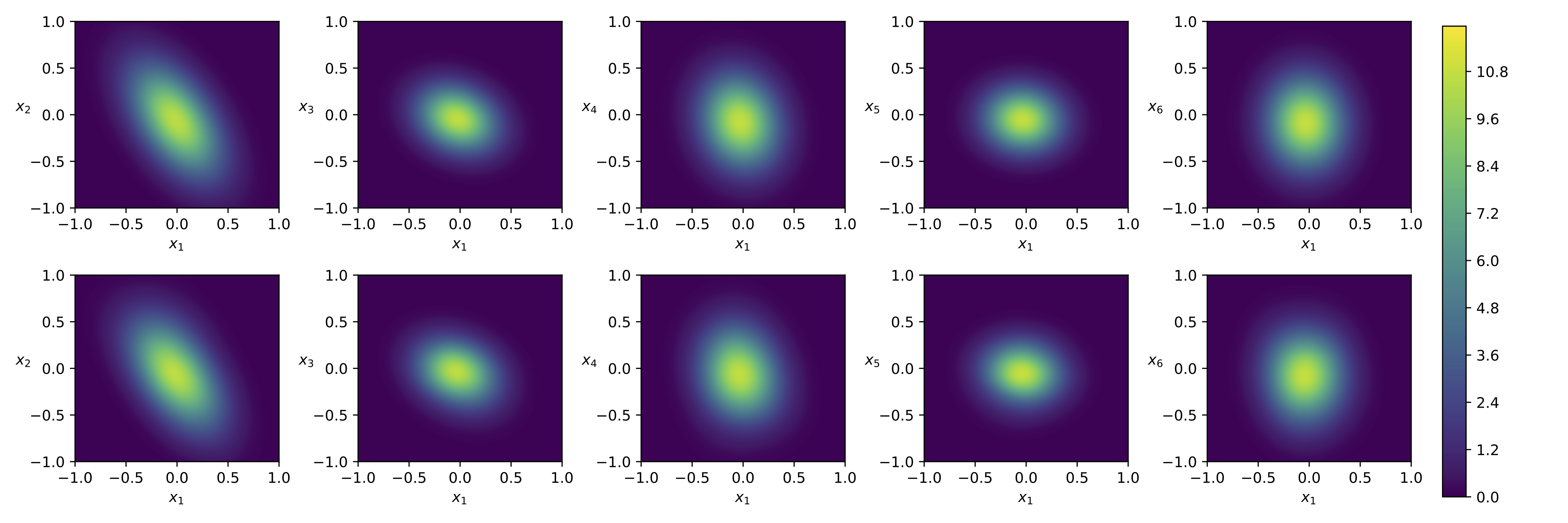}
    \caption{(Example 4) Cross sections of the true solution $p$ (top) and the learned solution $p_\theta$ push forward by $G_\theta$ (bottom), projected on the $x_1$-$x_i$ plane, $2\leq i\leq 6$, where the other
    coordinates are set to zero.}
    \label{fig:problem10dim}
\end{figure}

We incorporate 12 affine coupling layers within the Real NVP architecture, each consisting of a three-layer neural network.  We train the WGS   for    $N_I = 100$ iterations, and the dataset comprising  $N = 30,000$ sample points. We select $N_\varphi = 30,000$ with a batch size of $100$. The learning rate is set to $0.0001$.
We use the scale hyper-parameter $\kappa$   gradually reducing  from $0.8$ to $0.4$. 
For each of the five planes depicted in  Figure \ref{fig:problem10dim}, we compute the relative error by comparing the learned probability density function, denoted as $p_\theta$, with its projection onto the two-dimensional plane while keeping the remaining coordinates fixed at 0. The corresponding relative errors for the five planes are $0.0249$, $0.0265$, $0.0456$, $0.0477$, and $0.0322$, respectively. The results demonstrate a satisfactory agreement between the two solutions across all five cross sections, encompassing both high-probability and low-probability regions within this high-dimensional system.

We remark that using   $\kappa$ that decreases gradually from high to low can perform better. In Figure \ref{fig:problem10dim_parameters}, we compare   the performance when   $\kappa=0.8$ is fixed and when $\kappa$ is gradually reduced  from 0.8 to 0.4 over  five   runs of our algorithm. We observe that different choices of $\kappa$ can influence the convergence behavior  of the method, and gradually reducing $\kappa$ can perform better than using  a fixed value of $\kappa$ during the algorithm's training. 
\begin{figure}[ht]
    \centering
   \includegraphics[width=0.9\textwidth]{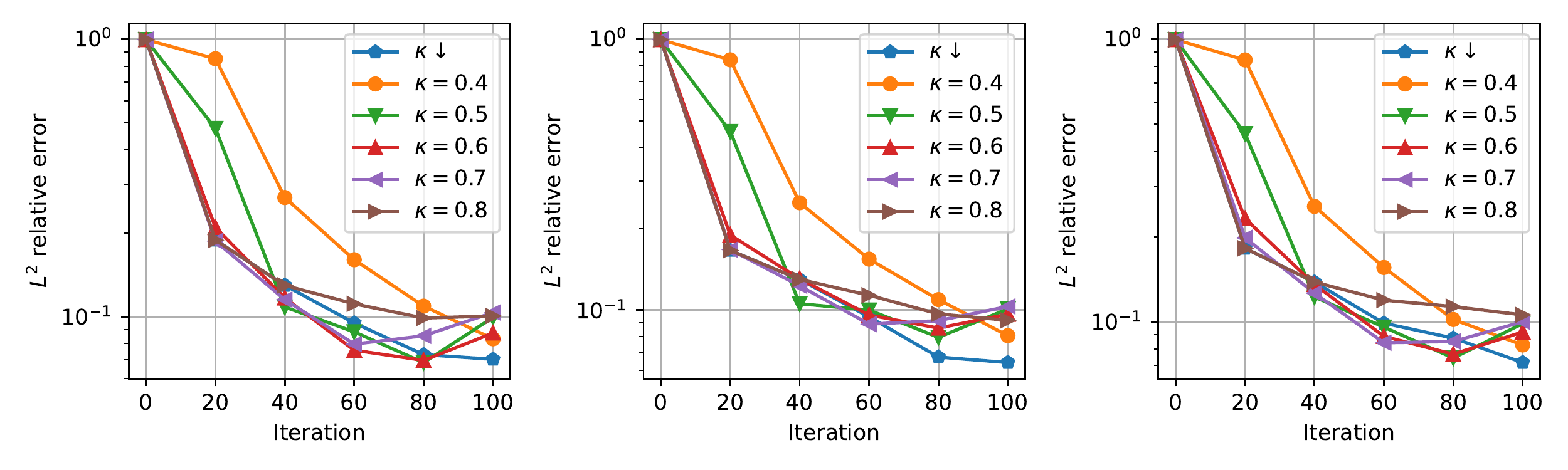}
    \caption{ (Example 4) Comparison with different choice of $\kappa$ by the relative error for the cross-section of the learned solution $p_\theta$ on the $x_1$-$x_2$ plane (left), $x_1$-$x_4$ plane (middle), $x_1$-$x_6$ plane (right), where the other coordinates are set to zero. $\kappa\downarrow$ denotes that $\kappa$ gradually decrease from $0.8$ to $0.4$.}
    \label{fig:problem10dim_parameters}
\end{figure}

\subsection{Example 5: A high-dimensional problem}
In this section, we test the WGS   in a  high-dimensional problem as follows
\begin{equation}\label{eqn:example5}
    \dx_i = -i(x_i-1)\dt+\sqrt{\frac{2}{i}}\dW_i,\quad i\in\{1,2,\cdots,d\}.
\end{equation}
The invariant distribution of   \eqref{eqn:example5} is $p(x)=\mathcal{N}(\mu,\Sigma)$, where $\mu=(1,\ldots, 1)$ 
  and $\Sigma$ is a $d$-dimensional diagonal matrix with the $i$-th diagonal entry defined as $1/i^2$ for $i = 1, 2, \dots, d$. As the dimension $i$ increases, the mean is the constant one, but the corresponding variance shrinks.

The base distribution is the standard Gaussian distribution in $\mathbb{R}^d$.
For the scale hyper-parameter $\kappa$ in the Gaussian test function, we use a mixed group of test for $\kappa$. The first group use a fixed  $\kappa$,  the second group  follows an exponential decay schedule and 
the third group is the random $\kappa$ uniformly from an interval. 
This hybrid strategy  for $\kappa$ can 
 not only help the robustness in the early training period but also  improve  the accuracy in later period. The learning rate followed an exponential decay schedule. The detailed settings of these hyper-parameters can be found  in the Appendix \ref{appendix:d}.

In Figure \ref{fig:40dim} and Figure \ref{fig:100dim}, we show  the estimated and true means and variances in each dimension for $d=40$ and $d=100$, respectively. The numerical means and variances in each dimension are estimated from the samples produced by the trained generative map $G_\theta$. 

Based on the true invariant distribution $p(x)=\mathcal{N}(\mu,\Sigma)$, the relative errors for the numerical mean $\tilde{\mu}$ and the variance matrix $\tilde{\Sigma}$, $e_M:=\frac{\|\tilde{\mu}-\mu\|_2}{\|\mu\|_2}$ an $e_C:=\frac{\|\tilde{\Sigma}-\Sigma\|_F}{\|\Sigma\|_F} $
are measured, respectively. 
For $d=40$, $e_M= 2.90\times 10^{-3}$ and $e_C= 3.25\times 10^{-2}$;
and for $d=100$, the relative errors   are $e_M=1.03\times 10^{-3}$ and $e_C=8.38\times 10^{-2}$.  The decay of the WGS loss and these two errors are plot in the Appendix \ref{appendix:d} (Figure \ref{fig:40100dim_loss}).
These errors show the scalability of the WGS in handling higher dimensional problem than the previous examples.

\begin{figure}[ht]
    \centering
    \includegraphics[width=0.9\textwidth]{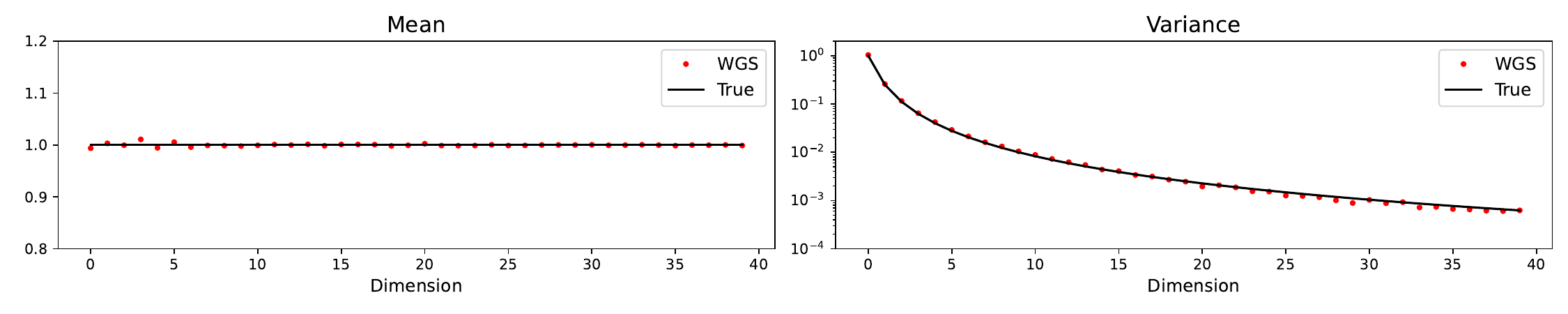}
    \caption{(Example 5-$40$ dim) The estimated means and variances (red points) with the true means and variances (black curves) in each dimension.}
    \label{fig:40dim}
\end{figure}

\begin{figure}[ht]
    \centering
    \includegraphics[width=0.9\textwidth]{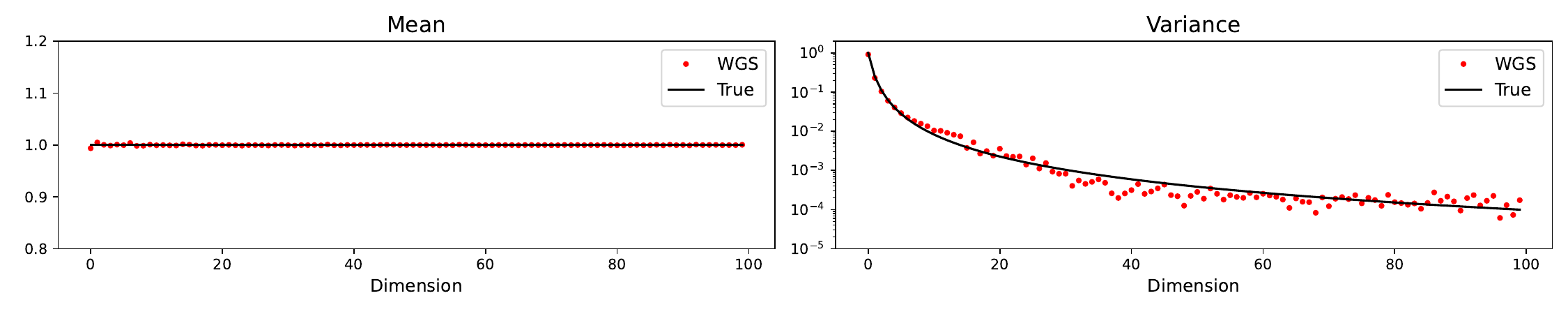}
    \caption{(Example 5-100dim) The estimated means and variances (red points) with the true means and variances (black curves) in each dimension.}
    \label{fig:100dim}
\end{figure}

\section{Conclusion and Outlook}\label{s5}
In this paper, we have presented a novel method called weak generative sampler (WGS) for sampling the invariant measure of diffusion processes. The WGS method utilizes the weak form of the stationary Fokker-Planck equation, eliminating the need for Jacobian computations and resulting in significant computational savings. By selecting test functions based on data-driven approaches, the WGS effectively identifies all metastable states in the system. 
The current randomization strategy of the 
Gaussian kernel test function may suggest a new type of 
adaptive strategy  beyond the current mainstream 
adaptive methods of PINN, which will be explored as a future work.
When the equation contains certain parameters, for example  the   noise intensity $\varepsilon$,  the current WGS can be naturally generalized to train a  parametric generative map $G_\theta(z,\varepsilon)$ from $\mathbb{R}^{d+1}$ to $\mathbb{R}^d$ by wrapping  our randomized weak loss function with one more average over the sampled values of the parameters $\varepsilon$.  
We anticipate further exploration of WGS's applications to time-dependent Fokker--Planck equations and McKean-Vlasov problems, expanding its potential for more general scenarios.

\section*{Appendix}

\appendix

\section{Construction of probability measures on test function spaces}\label{appendix:a}
We show in this appendix the existence of the probability measures required in Assumption \ref{assum:bounded} and \ref{assumptionptheta}, by constructing such a probability measure on the different test function spaces.  We emphasize that the construction here only to show the existence in a theoretical perspective, not the practical data-driven construction of the test function in our Algorithm \ref{algorithm}, which certainly satisfies Assumption \ref{assum:bounded} and \ref{assumptionptheta}.

We first consider the Sobolev space $\left(H^2(\mathbb{R}^d),\langle\cdot,\cdot\rangle\right)$, which is a separable Hilbert space, where the inner product is given by
\begin{equation*}
    \langle f,g\rangle=\sum_{|\alpha|\leq 2}\int_{\mathbb{R}^d}(\partial^\alpha f)(\partial^\alpha g)\d x.
\end{equation*}

We denote by $L(H^2(\mathbb{R}^d))$ the Banach algebra of all continuous linear operators from $H^2(\mathbb{R}^d)$ to $H^2(\mathbb{R}^d)$, by $L^+(H^2(\mathbb{R}^d))$ the set of all $T\in L(H^2(\mathbb{R}^d))$ which are symmetric $(\langle Tf,g\rangle = \langle f,Tg\rangle,\ f,g\in H^2(\mathbb{R}^d))$, and by $L_1^+(H^2(\mathbb{R}^d))$ the set of all operators $Q\in L^+(H^2(\mathbb{R}^d))$ of trace class that is such that $\mathrm{Tr} Q:=\sum_{k=1}^\infty\langle Qe_k,e_k\rangle<\infty $ for one complete orthonormal system $(e_k)$ in $H^2(\mathbb{R}^d)$.  We know that \cite[Section 1.5 \& Chapter 9]{da2006introduction}, there exist non-degenerate (i.e., $\mathrm{Ker}(Q)=\{f\in H^2(\mathbb{R}^d):\ Qf=0\}=\{0\}$) Gaussian measures on $\left(H^2(\mathbb{R}^d),\|\cdot\|_{H^2(\mathbb{R}^d)}\right)$ where the associated norm $\|\cdot\|_{H^2(\mathbb{R}^d)}$ is 
\begin{equation*}
    \|f\|_{H^2(\mathbb{R}^d)}=\sum_{|\alpha|\leq2}\left(\int_{\mathbb{R}^d}|\partial^\alpha f|^2\d x\right)^{\frac{1}{2}}.
\end{equation*}
Let $\mathcal{N}_{0,Q^*}$ be such a Gaussian measure with mean 0 and covariance $Q^*\in L_1^+(H^2(\mathbb{R}^d))$. There exists a sequence of non-negative numbers $(\lambda_k)$ such that
\begin{equation*}
    Q^*e_k=\lambda_k e_k,\quad k\in\mathbb{N}.
\end{equation*}

For any $f\in H^2(\mathbb{R}^d)$ we set $f_k=\langle f,e_k\rangle$, $k\in\mathbb{N}$. Now let us consider the natural isomorphism $\Gamma$ between $H^2(\mathbb{R}^d)$ and the Hilbert space $l^2$ of all sequence $(f_k)$ of real numbers such that
\begin{equation*}
    \sum_{k=1}^\infty|f_k|^2<\infty,
\end{equation*}
defined by 
\begin{equation*}
    H^2(\mathbb{R}^d)\to l^2,\quad f\mapsto \Gamma(f)=(f_k).
\end{equation*}
And we shall identify $H^2(\mathbb{R}^d)$ with $l^2$ and thus the corresponding probability measure for $\mathcal{N}_{0,Q^*}$ is the following product measure
\begin{equation}\label{prodmeasure}
    \mu:=\bigtimes_{k=1}^\infty\mathcal{N}_{0,\lambda_k},
\end{equation}
where $\mathcal{N}_{0,\lambda_k}$ is the Gaussian measure in $\mathbb{R}$ with mean 0 and variance $\lambda_k$. Though $\mu$ is defined on $\mathbb{R}^\infty:=\bigtimes_{k=1}^\infty\mathbb{R}$, it is concentrated in $l^2$ \cite[Proposition 1.11]{da2006introduction}. However, every bounded Borel set, e.g. unit ball, in $\mathcal{B}(H^2(\mathbb{R}^d))$ under such Gaussian measure $\mu$ has zero mass. Now we turn to its projection. For any given $n\in\mathbb{Z}$, we consider the projection mapping $P_n:\ H^2(\mathbb{R}^d)\to P_n(H^2(\mathbb{R}^d))$ defined as
\begin{equation*}
    P_nf=\sum_{k=1}^n\langle f,e_k\rangle e_k,\quad f\in H^2(\mathbb{R}^d).
\end{equation*}
Obviously we have $\lim_{n\to\infty}P_n f=f$ for all $f\in H^2(\mathbb{R}^d)$.

Since the test function space $C_c^\infty(\mathbb{R}^d)$ is dense in $H^2(\mathbb{R}^d)$, so $\mathcal{B}(H^2(\mathbb{R}^d))\cap C_c^\infty(\mathbb{R}^d)$ is a Borel $\sigma$-field on $C_c^\infty(\mathbb{R}^d)$, that is for every $A\in C_c^\infty(\mathbb{R}^d)$ there exists $\tilde{A}\in \mathcal{B}(H^2(\mathbb{R}^d))$ such that $A=\tilde{A}\cap C_c^\infty(\mathbb{R}^d)$. For a fixed $n\in\mathbb{Z}$, define the measure $\mathbb{P}$ as
\begin{equation*}
    \mathbb{P}(A):= \left(\bigtimes_{k=1}^n\mathcal{N}_{0,\lambda_k}\right)(P_n(\tilde{A})),\quad \forall A\in \mathcal{B}(H^2(\mathbb{R}^d))\cap C_c^\infty(\mathbb{R}^d).
\end{equation*}
It is easy to see $\mathbb{P}$ is a probability measure on $\left(C_c^\infty(\mathbb{R}^d),\mathcal{B}(H^2(\mathbb{R}^d))\cap C_c^\infty(\mathbb{R}^d)\right)$ satisfying the Assumption \ref{assumptionptheta}-\ref{subassum:L2R} due to the following property.

\begin{proposition}\label{probabilityN}
    For any given $n\in\mathbb{Z}$, $r>0$ and $ r_0  >0$, we have
    \begin{equation*}
        \inf_{f\in \bar{B}(0,r)}\left(\bigtimes_{k=1}^n\mathcal{N}_{0,\lambda_k}\right)(P_n(\bar{B}_{H^2(\mathbb{R}^d)}(f, r_0  )))>0,
    \end{equation*}
    where $\bar{B}_{H^2(\mathbb{R}^d)}(f, r_0  )=\{g\in H^2(\mathbb{R}^d):\ \|f-g\|_{H^2(\mathbb{R}^d)}\leq r_0  \}$ is the closed ball centered at $f$ and with $ r_0  $ radium in $H^2(\mathbb{R}^d)$.
\end{proposition}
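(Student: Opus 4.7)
The plan is to reduce the statement to a continuity/compactness argument in finite dimensions. First, I would identify the $n$-dimensional subspace $P_n(H^2(\mathbb{R}^d))$ with $\mathbb{R}^n$ via the orthonormal system $e_1,\ldots,e_n$; under this identification the $H^2$-norm restricted to $P_n(H^2(\mathbb{R}^d))$ is the Euclidean norm, and $\mu_n:=\bigtimes_{k=1}^n\mathcal{N}_{0,\lambda_k}$ is an ordinary nondegenerate Gaussian on $\mathbb{R}^n$ with a strictly positive smooth density, since non-degeneracy of $Q^*$ (i.e.\ $\mathrm{Ker}(Q^*)=\{0\}$) forces $\lambda_k>0$ for $k=1,\ldots,n$.

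The next and key step is to compute $P_n(\bar{B}_{H^2(\mathbb{R}^d)}(f,r_0))$ explicitly. Since $P_n$ is an orthogonal projection, it is a contraction: for any $g\in\bar{B}_{H^2}(f,r_0)$ one has $\|P_n g - P_n f\|_{H^2} \leq \|g-f\|_{H^2}\leq r_0$, so the image lies in the closed ball of radius $r_0$ around $P_n f$ inside the subspace. Conversely, given any $h\in P_n(H^2(\mathbb{R}^d))$ with $\|h-P_n f\|_{H^2}\leq r_0$, the element $g:=h+(I-P_n)f$ satisfies $P_n g=h$ and $\|g-f\|_{H^2}=\|h-P_n f\|_{H^2}\leq r_0$, so $g\in\bar{B}_{H^2}(f,r_0)$. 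Consequently, under the identification above,
\[
P_n(\bar{B}_{H^2(\mathbb{R}^d)}(f,r_0)) = \bar{B}_{\mathbb{R}^n}(P_n f,\, r_0).
\]

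With this reduction, writing $y=P_n f\in\mathbb{R}^n$, the quantity to lower-bound becomes $\mu_n(\bar{B}_{\mathbb{R}^n}(y,r_0))$, where $y$ ranges over $P_n(\bar{B}_{H^2}(0,r))$. Again by contractivity of $P_n$, this range is contained in the compact Euclidean ball $\bar{B}_{\mathbb{R}^n}(0,r)$. The map $y\mapsto \mu_n(\bar{B}_{\mathbb{R}^n}(y,r_0))$ is continuous on $\mathbb{R}^n$ (by dominated convergence applied to translates of the indicator) and strictly positive everywhere (the Gaussian density is strictly positive on $\mathbb{R}^n$, so every Euclidean ball has positive mass). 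A continuous strictly positive function on a compact set attains a positive minimum, yielding the claimed uniform lower bound.

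I do not foresee a substantial obstacle; the most delicate point is simply phrasing the set equality $P_n(\bar{B}_{H^2}(f,r_0))=\bar{B}_{\mathbb{R}^n}(P_n f,r_0)$ carefully, because the $H^2$-ball is infinite-dimensional while its image collapses onto a finite-dimensional ball, and one must verify both inclusions. Everything else is standard finite-dimensional Gaussian calculus combined with compactness of closed Euclidean balls.
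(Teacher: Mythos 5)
Your argument is correct, and it follows the same overall strategy as the paper --- reduce to a finite-dimensional Gaussian computation and invoke positivity of the density together with compactness --- but it is noticeably more careful than the paper's version at exactly the two delicate points. First, you prove the set identity $P_n(\bar{B}_{H^2}(f,r_0)) = \bar{B}_{\mathbb{R}^n}(P_n f, r_0)$ by verifying both inclusions (the nonobvious direction is the one where you take $g = h + (I-P_n)f$); the paper instead asserts, in the third line of its chain of equalities, that the projected set equals the coordinate box $\{g\in\mathbb{R}^n : f_k - \sqrt{r_0}\le g_k\le f_k + \sqrt{r_0}\}$. That box strictly contains the Euclidean ball once $n\ge2$, so the equality is in fact only an inclusion in the direction that gives an \emph{upper} bound on the quantity being estimated, which is the wrong direction for lower-bounding the infimum; your identity is the correct one. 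Second, the paper's continuity remark concludes that the infimum ``on any closed ball can be achieved,'' implicitly treating $\bar{B}(0,r)\subset H^2(\mathbb{R}^d)$ as compact, which it is not in infinite dimension. You correctly observe that the quantity depends only on $P_n f$, whose range is the compact Euclidean ball $\bar{B}_{\mathbb{R}^n}(0,r)$, so the infimum of a continuous strictly positive function is indeed attained and positive there. Both you and the paper rely on the same elementary fact that each $\mathcal{N}_{0,\lambda_k}$ (with $\lambda_k>0$ from nondegeneracy of $Q^*$) has a strictly positive density so that balls have positive mass; your write-up is simply the rigorous rendering of that plan.
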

\begin{proof}
    Note that, $\mathcal{N}_{0,Q^*}$ is a nondegenerate Gaussian measure on $H^2(\mathbb{R}^d)$, and 
    \begin{equation*}
        \begin{aligned}
            \left(\bigtimes_{k=1}^n\mathcal{N}_{0,\lambda_k}\right)(P_n(\bar{B}(f, r_0  ))=&\ \left(\bigtimes_{k=1}^n\mathcal{N}_{0,\lambda_k}\right)\left(P_n(\left\{g\in\mathbb{R}^\infty:\ |f-g|_{l^2}^2\leq r_0   \right\}) \right)\\
            =&\ \left(\bigtimes_{k=1}^n\mathcal{N}_{0,\lambda_k}\right)\left(P_n\left\{g\in\mathbb{R}^\infty:\ \sum_{k=1}^\infty|(f-g)_k|^2\leq  r_0   \right\} \right)\\
            =&\ \left(\bigtimes_{k=1}^n\mathcal{N}_{0,\lambda_k}\right)\left\{g\in\mathbb{R}^n:\ f_k-\sqrt{ r_0  }\leq g_k\leq f_k + \sqrt{ r_0  } \right\}\\
            =&\ \prod_{k=1}^n \mathcal{N}_{0,\lambda_k}\left( \left[f_k-\sqrt{ r_0  },f_k + \sqrt{ r_0  }\right]\right) > 0.
        \end{aligned}
    \end{equation*}

Also note that the 1-dimensional Gaussian distribution is continuous, and thus the measure $\left(\bigtimes_{k=1}^n\mathcal{N}_{0,\lambda_k}\right)(P_n(\bar{B}(\cdot, r_0  )))$ is a continuous function on $\left(H^2(\mathbb{R}^d), \|\cdot\|_{H^2(\mathbb{R}^d)}\right)$ and we have $\left(\bigtimes_{k=1}^n\mathcal{N}_{0,\lambda_k}\right)(P_n(\bar{B}(\cdot, r_0  ))):\ H^2(\mathbb{R}^d)\to[0,1]$.

Thus the infimum value of $\left(\bigtimes_{k=1}^n\mathcal{N}_{0,\lambda_k}\right)(P_n(\bar{B}(\cdot, r_0  )))$ on any closed ball can be achieved and is nonnegative. In particular,
\begin{equation*}
        \inf_{f\in \bar{B}(0,r)}\left(\bigtimes_{k=1}^n\mathcal{N}_{0,\lambda_k}\right)(P_n(\bar{B}(f, r_0  )))\geq \prod_{k=1}^n \mathcal{N}_{0,\lambda_k}\left( \left[r-\sqrt{ r_0  },r + \sqrt{ r_0  }\right]\right) > 0,
    \end{equation*}
\end{proof}

Note that $L^2(\mathbb{R}^d)$ is also a separable Hilbert space and its elements can also be approximated by sequences of $C^\infty_c(\mathbb{R}^d)$, and thus in a similar way as before we can find a non-degenerate Gaussian measure on $L^2(\mathbb{R}^d)$ and construct a probability measure $\mathbb{P}$ based on it and its projection on space $C_c^\infty(\mathbb{R}^d)$.

Next, consider the construction of $\mathbb{P}$  for test functions restricted in $B_{r_\varepsilon}$. As similar with before, the Sobolev space $H^2(B_{r_\varepsilon})$ is a separable Hilbert space and thus there exist Gaussian measures on $(H^2(B_{r_\varepsilon}),\mathcal{B}(H^2(B_{r_\varepsilon})),\|\cdot\|_{H^2(B_{r_\varepsilon})})$. We choose one of any non-degenerate Gaussian measures and denote it by $\mathcal{N}$. We know that every function in $H^2(B_{r_\varepsilon})$ can be approximated by a sequence of functions of $C^\infty(\bar{B}_{r_\varepsilon})$. Let $C^{\infty*}(B_{r_\varepsilon}):=\{\varphi|_{B_{r_\varepsilon}}:\ \varphi\in C^{\infty}(\bar{B}_{r_\varepsilon})\}$, then $C^{\infty*}(B_{r_\varepsilon})$ is dense in $H^2(B_{r_\varepsilon})$. Equip the space $C^{\infty*}(B_{r_\varepsilon})$ with the Borel $\sigma$-field $\mathcal{B}(H^2(B_{r_\varepsilon}))\cap C^{\infty*}(B_{r_\varepsilon})$. For every $A\in \mathcal{B}(H^2(B_{r_\varepsilon}))\cap C^{\infty*}(B_{r_\varepsilon})$ there exists $\tilde{A}\in\mathcal{B}(H^2(B_{r_\varepsilon}))$ such that the closure of $A$ is identical to $\tilde{A}$. 

Define a measure $\mathbb{P}_0$ on the $\sigma$-field $\mathcal{B}(H^2(B_{r_\varepsilon}))\cap C^{\infty*}(B_{r_\varepsilon})$ by
\begin{equation*}
    \mathbb{P}_0(A):=\mathcal{N}(\tilde{A}),\quad \forall A\in \mathcal{B}(H^2(B_{r_\varepsilon}))\cap C^{\infty*}(B_{r_\varepsilon}).
\end{equation*}
It is easy to see $\mathbb{P}_0$ is a probability measure on $\left(C^{\infty*}(B_{r_\varepsilon}), \mathcal{B}(H^2(B_{r_\varepsilon}))\cap C^{\infty*}(B_{r_\varepsilon})\right)$.

By the Whitney extension theorem, \cite[Theorem I]{whitney1992analytic}, we know that for every function in $C^\infty(\bar{B}_{r_\varepsilon})$ there exists an extension of it in $C^\infty(\mathbb{R}^d)$. So we can use the extension to construct an element belonging to $C^\infty_c(\mathbb{R}^n)$ by truncating the extension on a bigger domain and operating it with a mollifier as before. In converse every element in $C_c^\infty(\mathbb{R}^n)$ restricted on $\bar{B}_{r_\varepsilon}$ belongs to $C^\infty(\bar{B}_{r_\varepsilon})$. And we also know that every element of $C^\infty(\bar{B}_{r_\varepsilon})$ restricted on $B_{r_\varepsilon}$ belongs to $H^2(B_{r_\varepsilon})$.

Now we define an equivalence relation $``\sim"$ in $C_c^\infty(\mathbb{R}^d)$ as follows: $\varphi\sim\psi$ if $\varphi|_{B_{r_\varepsilon}}=\psi|_{B_{r_\varepsilon}}$, i.e., $\varphi(x)=\psi(x)$ when $x\in B_{r_\varepsilon}$. Thus the test function space $C_c^\infty(\mathbb{R}^d)$ under relation $\sim$ and the $H^2(B_{r_\varepsilon})$ norm defines a topological space having the same structure of the topological space $\left(C^{\infty*}(B_{r_\varepsilon}), \mathcal{B}(H^2(B_{r_\varepsilon}))\cap C^{\infty*}(B_{r_\varepsilon})\right)$. Denote such topological space as $(C_c^\infty(\mathbb{R}^d),\mathcal{B}(C_c^\infty(\mathbb{R}^d),\sim,\|\cdot\|_{H^2(B_{r_\varepsilon})}))$. For any $A'\in \mathcal{B}(C_c^\infty(\mathbb{R}^d),\sim,\|\cdot\|_{H^2(B_{r_\varepsilon})})$ there exists $A\in \mathcal{B}(H^2(B_{r_\varepsilon}))\cap C^{\infty*}(\mathbb{R}^d)$ such that
\begin{equation*}
    A'=\{\varphi\in C_c^\infty(\mathbb{R}^d):\ \varphi|_{B_{r_\varepsilon}}\in A\},
\end{equation*}
also in converse way, for any $A\in \mathcal{B}(H^2(B_{r_\varepsilon}))\cap C^{\infty*}(\mathbb{R}^d)$  there exists $A'\in \mathcal{B}(C_c^\infty(\mathbb{R}^d) ,$ $ \sim, \|\cdot\|_{H^2(B_{r_\varepsilon})})$ such that
\begin{equation*}
    A=\{\varphi|_{B_{r_\varepsilon}}\in C^{\infty*}(B_{r_\varepsilon}):\ \varphi\in A'\}.
\end{equation*}
Define a measure $\mathbb{P}$ on the space $C_c^\infty(\mathbb{R}^d)$ as follows,
\begin{equation}
    \mathbb{P}(A'):=\mathbb{P}_0(A),\quad \forall A'\in \mathcal{B}(C_c^\infty(\mathbb{R}^d),\sim,\|\cdot\|_{H^2(B_{r_\varepsilon})}).
\end{equation}
Then we obtain the measure satisfying the assumptions we need. 

Finally, we can conclude that the probability measure $\mathbb{P}$ on $C_0^\infty(U)$, as discussed in Section \ref{ss:bd}, can be constructed in a similar manner. We consider the space $H^2(V)$, where $V$ is a strict subset of $U$. Then the rest of the construction follows in the same way as before.

\section{Comparison of loss landscapes
of the WGS and the PINN}\label{appendix:b}
To illustrate and help understand why the WGS and ADDA show different numerical behaviors, we shall 
compare the weak loss used in WGS and the PINN loss used in ADDA by studying their loss landscapes on a simple example of one dimensional Gaussian mixture probability.

  Suppose the target distribution $p^*$ is the following one dimensional  Gaussian mixture centered at $\pm \mu^*$   with the weight $w^*$:
  \begin{equation} \label{eq:testp}
      p^*(x)=  \frac{1}{\sqrt{2\pi }} \left( w^*e^{-(x-\mu^*)^2/2}+ (1-w^*) e^{-(x+\mu^*)^2/2 } \right)
  \end{equation}  
  where $\mu^*=2$ 
  and $w^*=0.5$ (bi-mode) or $w^*=1$ (single mode).
 $p^*$ is the invariant measure of the  over-damped Langevin  SDE $\d X_t = b(X_t)\d t + \sqrt{2}\d W$ with   the drift $b(x)=\nabla \log p^*(x)$.
We consider an idealistic situation where 
the numerical solution $p_\theta$ is   a family of the following parametric Gaussian mixture:
 $$ p_\theta(x)=  \frac{1}{\sqrt{2\pi (1+\theta_\sigma) }} \left( (w^*+\theta_w)e^{-\frac{(x-\mu^*-\theta_\mu)^2}{2(1+\theta_\sigma)}}+  (1-w^*-\theta_w)e^{-\frac{(x+\mu^*+\theta_\mu)^2}{2(1+\theta_\sigma)} } \right)$$
 where $\theta=(\theta_w, \theta_\mu, \theta_\sigma)$. The optimal parameter $\theta^*=(0,0,0)$ since $p^*=p_{\theta^*}$.

The ADDA uses the ($p_\theta$-weighted) PINN loss 
\begin{equation} \label{eq:ADDAloss}
L_{PINN}(\theta) = \mathbb{E}_{X\sim p_{\theta}}\left[\mathcal{L}p_\theta(X)\right]^2.  
\end{equation}
For the WGS loss, we make a further simplification with only  {\it one}  test function (i.e., $N_\varphi=1$). This test function takes the form of a Gaussian density function   $\varphi=\mathcal{N}(\alpha,\kappa^2)$, where $\alpha$ and $\kappa$ are the hyper-parameter. The WGS loss then is
\begin{equation}
    \label{eq:WGSloss}
    L_{WGS}(\theta; \alpha, \kappa)=(\mathbb{E}_{X\sim p_\theta}(\mathcal{L}^*\varphi(X)))^2
\end{equation}

We shall compare the above two loss functions \eqref{eq:ADDAloss} and \eqref{eq:WGSloss}
by using a sufficient large number of samples to calculate  the expectation $\mathbb{E}_{X\sim p_\theta}$. To visualize the loss landscapes, among three parameters in  $\theta=(\theta_w, \theta_\mu, \theta_\sigma)$, we vary only one of them by fixing the other two at the optimal zero values. So every plot of the loss landscape below is a one-dimensional function. 
Note that while the PINN loss only depends on $\theta$, the  WGS loss landscape also depends on the test function via the hyper-parameters $\alpha$ and $\kappa$.

\subsection{ \texorpdfstring{$p^*$}{} is uni-modal(\texorpdfstring{$w^*=1$}{})}

\begin{figure}[htbp]
    \centering
\includegraphics[width=.75\textwidth]{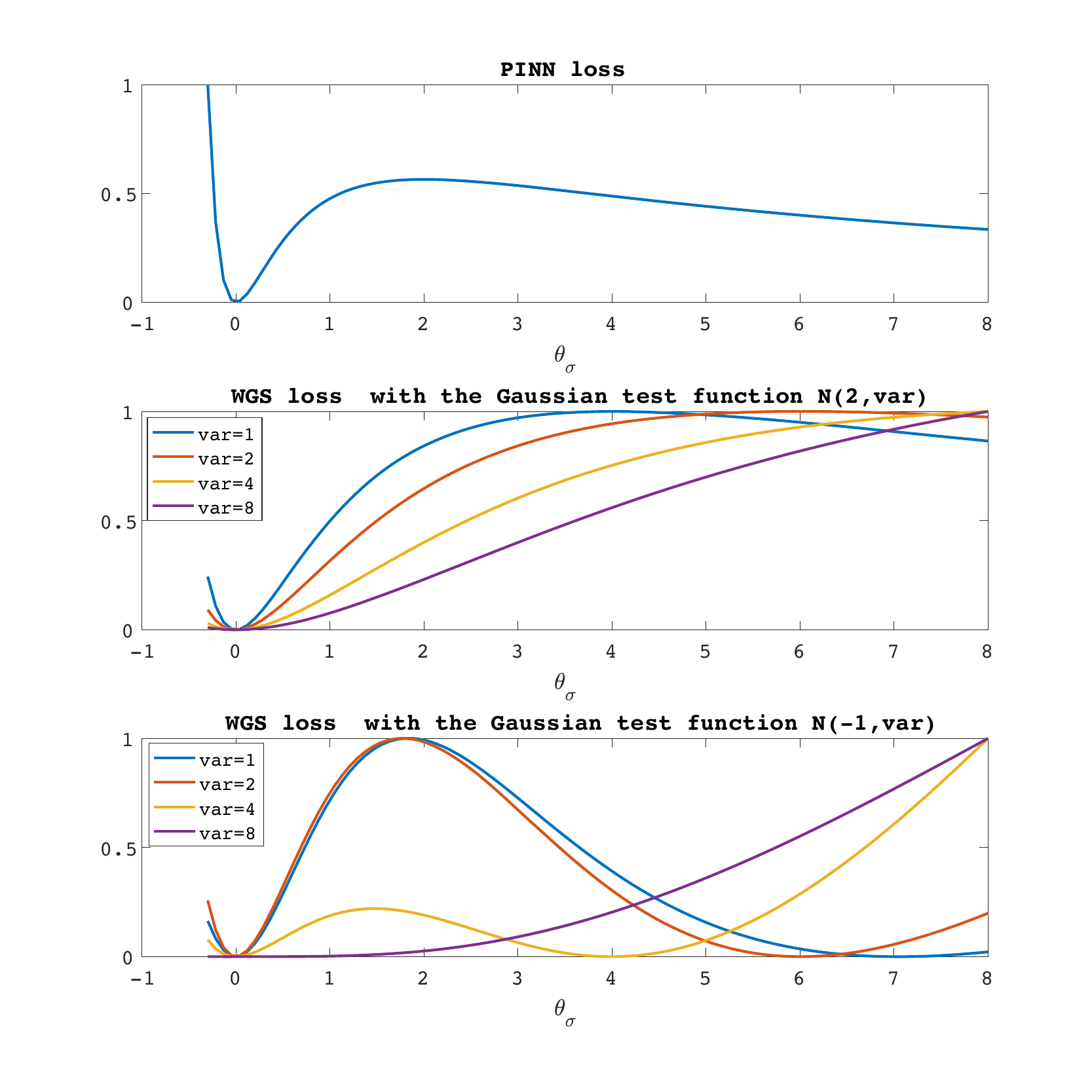}
    \caption{The loss functions of the PINN and the WGS associated vs. the parameter $\theta_\sigma$,
    for the test function with the two centers located as $2$ (the middle panel) and $-1$ (the bottom panel)  respectively, and the four increasing values of $\kappa^2=1,2,4,8$   (``var" in the legend).  }
    \label{fig:test-sigma-one-mode2}
\end{figure}

In this case of $p^*$,
 when we compare the two losses by varying only the weight $\theta_w$ or the mean $\theta_\mu$, both loss functions are convex and the only minimizer is   the true optimal value $\theta^*=0$, which means that all initials under gradient descent can converge to $\theta^*$. This is a good case so we do not show the plot.
 
But when we vary the parameter $\theta_\sigma$ related to the variance 
   (with the other two $\theta_w=\theta_\mu=0$ fixed), we observe distinctive patterns.  The first subplot in   Figure \ref{fig:test-sigma-one-mode2} shows the PINN loss landscape: if the initial variance in $p_\theta$ is not close to zero  (the critical value is approximately $1.5$),  $\theta_\sigma$ tends to infinity instead of zero.  So the basin of attraction is only for $\theta_{init}<1.5$.
   
For the WGS loss, we  test difference choices $\alpha$ and $\kappa^2$ in the test function $\varphi=\mathcal{N}(\alpha,\kappa^2)$. 
For all these different values, 
we see that the basin of attraction for the true solution $\theta^*_\sigma=0$
 on the  WGS loss landscape becomes larger than that of the PINN loss;
 particularly, when $\kappa$ is large enough, any  initial guess can converge in this example.
Therefore, the flexibility of test function in the WGS  can improve the loss landscape for more robust training if  $\kappa$ is large enough.

\subsection{ \texorpdfstring{$p^*$}{} is bi-modal (\texorpdfstring{$w^*=0.5$}{})}

\begin{figure}[htbp]
    \centering
\includegraphics[width=.75\textwidth]{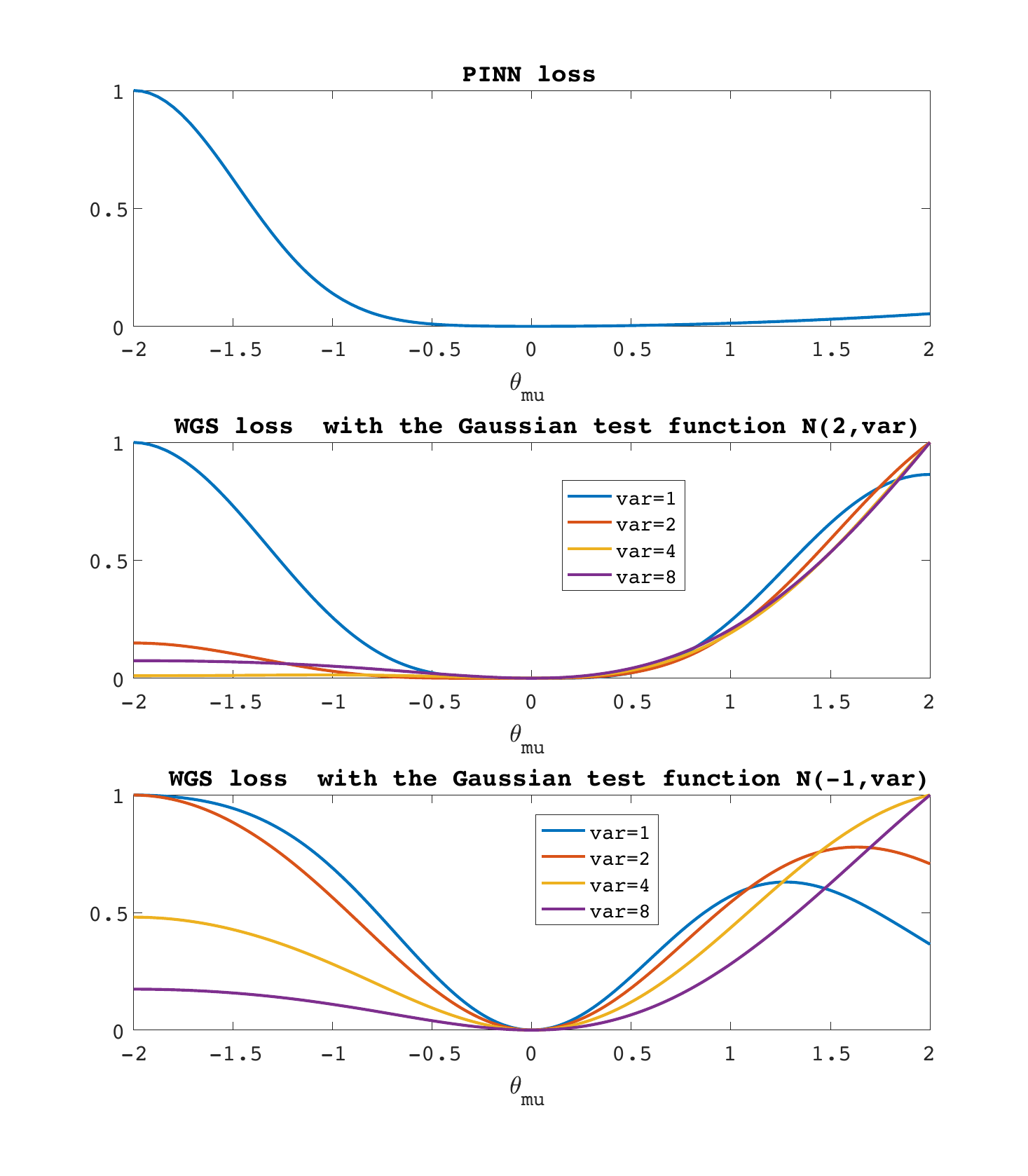}
    \caption{The loss functions of the PINN and the WGS associated vs. the parameter $\theta_\mu$,
    for the test function with the two centers located as $2$ (the middle panel) and $-1$ (the bottom panel)  respectively, and the four increasing values of $\kappa^2=1,2,4,8$   (``var" in the legend).  }
    \label{fig:test_mu}
\end{figure}

\begin{figure}[htbp]
    \centering
\includegraphics[width=.8\textwidth]{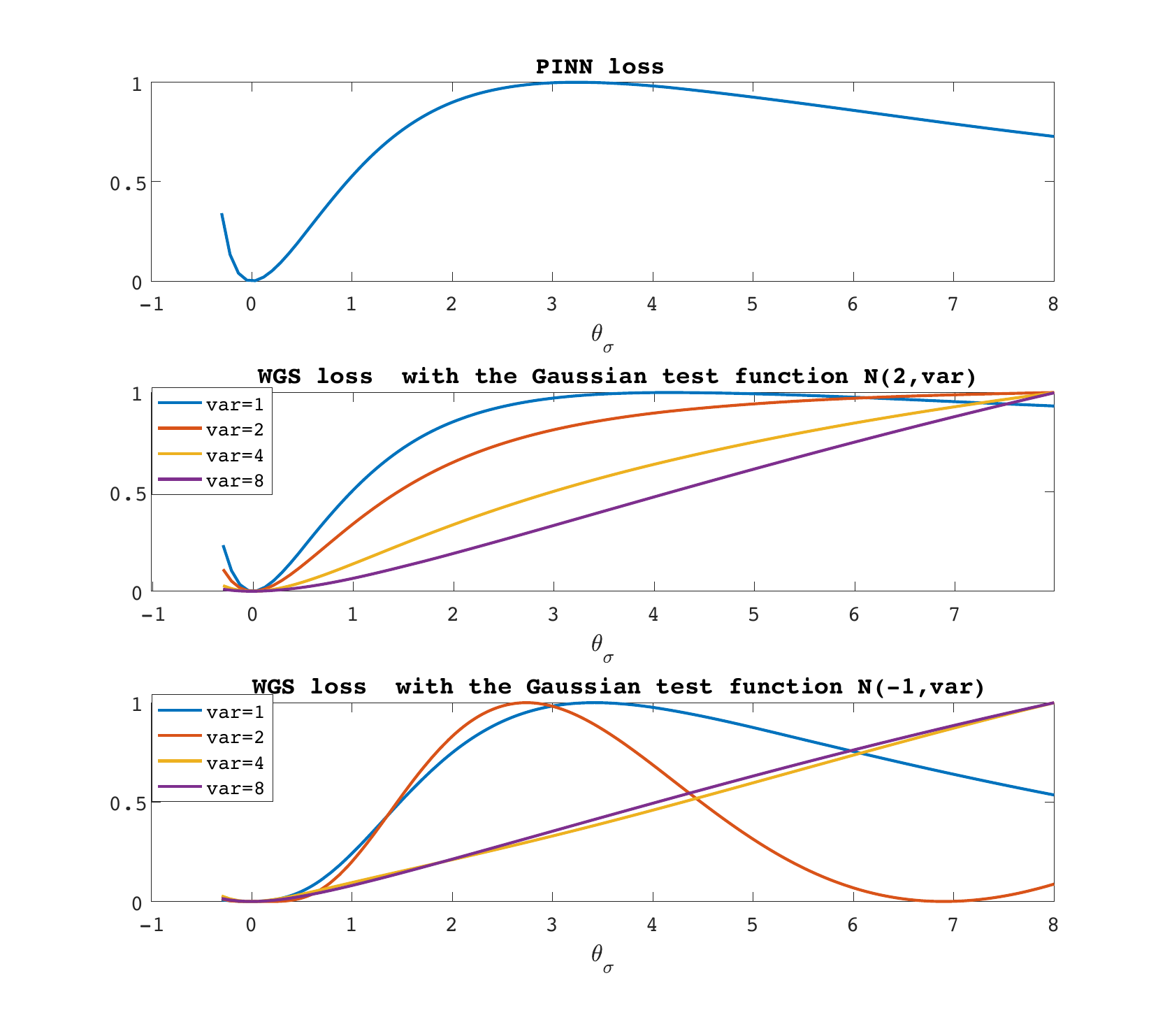}
    \caption{The loss functions of the PINN and the WGS associated vs. the parameter $\theta_\sigma$,
    for the test function with the two centers located as $2$ (the middle panel) and $-1$ (the bottom panel)  respectively, and the four increasing values of $\kappa^2=1,2,4,8$   (``var" in the legend).  }
    \label{fig:test_var}
\end{figure}

We further validate the above observation for the bi-modal case. 
We plot the losses in terms of $\theta_\mu$ in Figure \ref{fig:test_mu} (fixing $\theta_\sigma=\theta_w=0$) and in terms of    $\theta_\sigma$  in Figure \ref{fig:test_var} (fixing $\theta_w=\theta_\mu=0$).
We first discuss the effect of $\theta_\mu$ shown in Figure \ref{fig:test_mu}.   

The PINN loss landscape shown  in first subplot is almost flat when $\theta_\mu>0$,    
suggesting  the challenge for gradient descent method to optimize toward the truth $\theta_\sigma^*=0$ for any positive initial guess.  For the WGS losses in the second and third subplots, we observe
that the landscape is significantly improved, and the gradient flow can find the true solution at zero from a large domain of initial guesses.

Figure \ref{fig:test_var} is very similar to  Figure \ref{fig:test-sigma-one-mode2} for the single mode case, confirming the benefit of the WGS loss   in this bi-modal case again for optimizing $\theta_\sigma$.

 \smallskip
 The practical algorithm  uses a set of Gaussian test functions   with different $\mu$ and $\kappa$,  and the normalizing flow also parametrizes    $p_\theta$ in a  much more complicate way, thus the overall effect on the WGS loss landscape can be extremely difficult  to probe.  
But  by studying  the above 
simple example, we show the advantage of the WGS loss landscape 
than the PINN loss landscape when learning three parameters in $\theta$ by using even just one  test function in the WGS,  
particularly for a large hyper-parameter $\kappa$ in the test function. 
 Our   numerical experience for  real examples in this paper  suggests  that a large scale parameter in general helps stabilize the training but has difficulty to further improve the accuracy. So we recommended  a general adaptivity   strategy which proves effective: it is recommended to tune down the scale hyper-parameters $\kappa$ in test functions during the training steps or to use a several groups of   $\kappa$ with different magnitudes, in order to balance the robustness and the accuracy.

\section{Additional numerical comparison for the bi-modal Example 2
}\label{appendix:c}
\subsection{Plot of the ``potential'' function \texorpdfstring{$-\epsilon \log p$}{} in Example 2}\label{appendix:c1}
By defining $\hat{V}(x, y) := -\epsilon \log p(x, y)$, we can also compare the computational results of the invariant measures by plotting $\hat{V}$. In Figure \ref{fig:e2_potential}, we present this  potential $\hat{V}(x, y)$.  Since the most significant part of the potential lies in the neighboring regions of the two metastable states, we apply a truncation to the potential in these regions.  Specifically, we truncate the potential up to $0.8$ for $\epsilon = 0.2$,    up to $0.7$ for $\epsilon = 0.1$, and up to $0.5$ for $\epsilon = 0.05$.

\begin{figure}[htbp]
    \centering
    \includegraphics[scale=0.80]{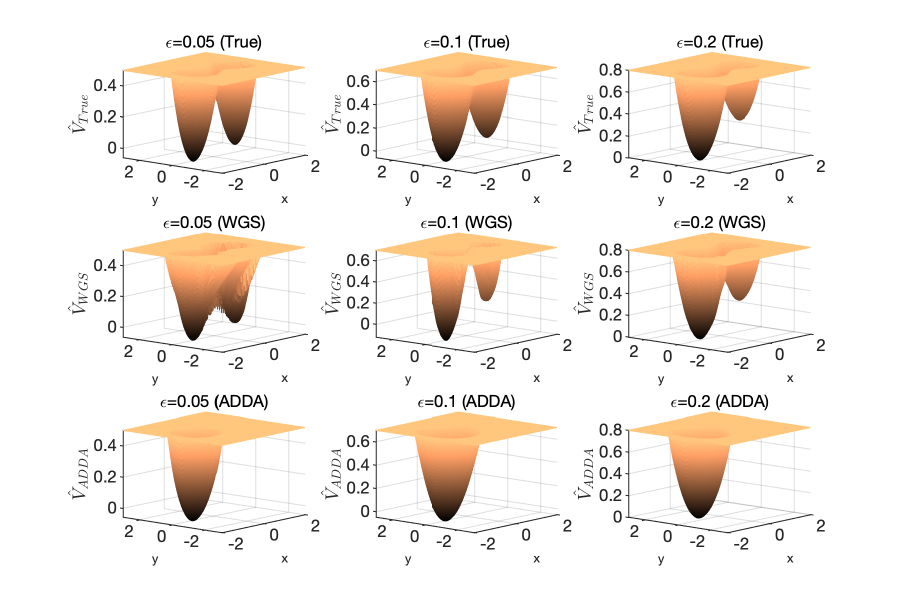}
    \caption{(Example 2) 3D mesh plots of the truncated ``potential''   $V=-\epsilon \log p $: $\hat{V}_{\text{True}}(x,y)$(top), $\hat{V}_{\text{WGS}}(x,y)$ learned by WGS (middle) and $\hat{V}_{\text{ADDA}}(x)$ learned by ADDA for $\varepsilon =0.05$ (left), $\varepsilon =0.1$ (middle) and $\varepsilon =0.2$ (right) }
    \label{fig:e2_potential}
\end{figure}

\subsection{Comparison of ADDA and WGS for Example 2}\label{appendix:c2}
We  show more results in detail about the performance in capturing the bi-modal distribution in Example 2.

The true  solution $p$ is computed by solving the stationary Fokker-Planck equation with the finite difference method on the domain $\Omega = [-2.5, 2.5] \times [-3, 3]$ with a uniform $400 \times 400$ grid.

\begin{algorithm}
\caption{Adaptive deep density approximation \cite{tang2022adaptive}}\label{alg:adda}
\SetKwInOut{Input}{Input}\SetKwInOut{Output}{Output}
\Input{Initial generative map $G_\theta$ and the base distribution $\rho$ with $p_\theta=G_{\theta\#}\rho$; training iteration number $N_I^p$; adaptive iteration number $N_{\text{adaptive}}$; initial distribution for training dataset $p_0$; the hyper-parameters $\lambda>0$ and $r>0$, $c>0$ for $L_b$. }

\For{$k=1:N_\text{adaptive}$}{
\uIf{$k=1$}{Generate an initial training dataset $\mathcal{D}=\{x_i\}_{i=1}^{N_p}$ from $p_0$;}

\Else{Sample $\{z_i\}_{i=1}^{N_p}$ from $\rho$\;
Obtain training dataset $\mathcal{D}=\{x_i\}_{i=1}^{N_p}$ by $x_i=G_\theta(z_i)$;
}
Split $\mathcal{D}$ into minibatches of size $N_p^b$ as \;
\For{$n=1:N_I^p$}{
\For{$m=1:\lceil N_p/N_p^b \rceil$}{Compute the Loss function:
\begin{equation*}
    L_{\text{ADDA}}=\frac{1}{N_p^b}\sum_{j=1}^{N_p^b}|\mathcal{L}p_\theta(x^m_j)|^2+\lambda L_b,
\end{equation*}\\
where $\{x_j^m\}_{j=1}^{N_p^b}$ is the $m$-th mini-batch dataset\;
Update the parameters $\theta$ using the Adam optimizer with a learning rate $\eta$\;
}

}
}
\Output{The trained transport map $G_\theta$}
\end{algorithm}

The ADDA method is specified in Algorithm \ref{alg:adda}. The training setting for the  ADDA  method in Example 2 is as follows. The initial training set for ADDA is generated from a uniform distribution over the range $[-4, 4]^2$ for all cases. The training dataset size is set to $N_p = 60,000$, with a batch size of $N_p^b = 2,000$. The number of training iterations is fixed at $N_I^p = 500$, and the number of adaptivity iterations is set to $N_{\text{adaptive}} = 5$ for each case. We use the same network structure to parameterize the generative map $G_\theta$ and the distribution $p_\theta=G_{\theta\#}\rho$ in WGS and ADDA. Each case is repeated for six independent runs.

The two modes in this example lie around the locations $(\pm1,0)$, we simply 
propose to check the following quantity 
\begin{equation}
    \label{305}\mbox{Prob}(X>0)=\int_{x>0}\int_{\mathbb{R}}p(x,y)\d x\d y
\end{equation}
to quantify if the distribution $p(x,y)$ captures the two modes very well.
For the true invariant measure, the true value of \eqref{305} is a number strictly between zero and one (marked by  the thick dashed horizontal line in Figure \ref{fig:e2_prob_x}).
If this probability in \eqref{305} is close to zero (or one), then the distribution $p$ has missed the mode on the right (or left, respectively).

\begin{figure}[htbp]
    \centering
    \includegraphics[scale=0.35]{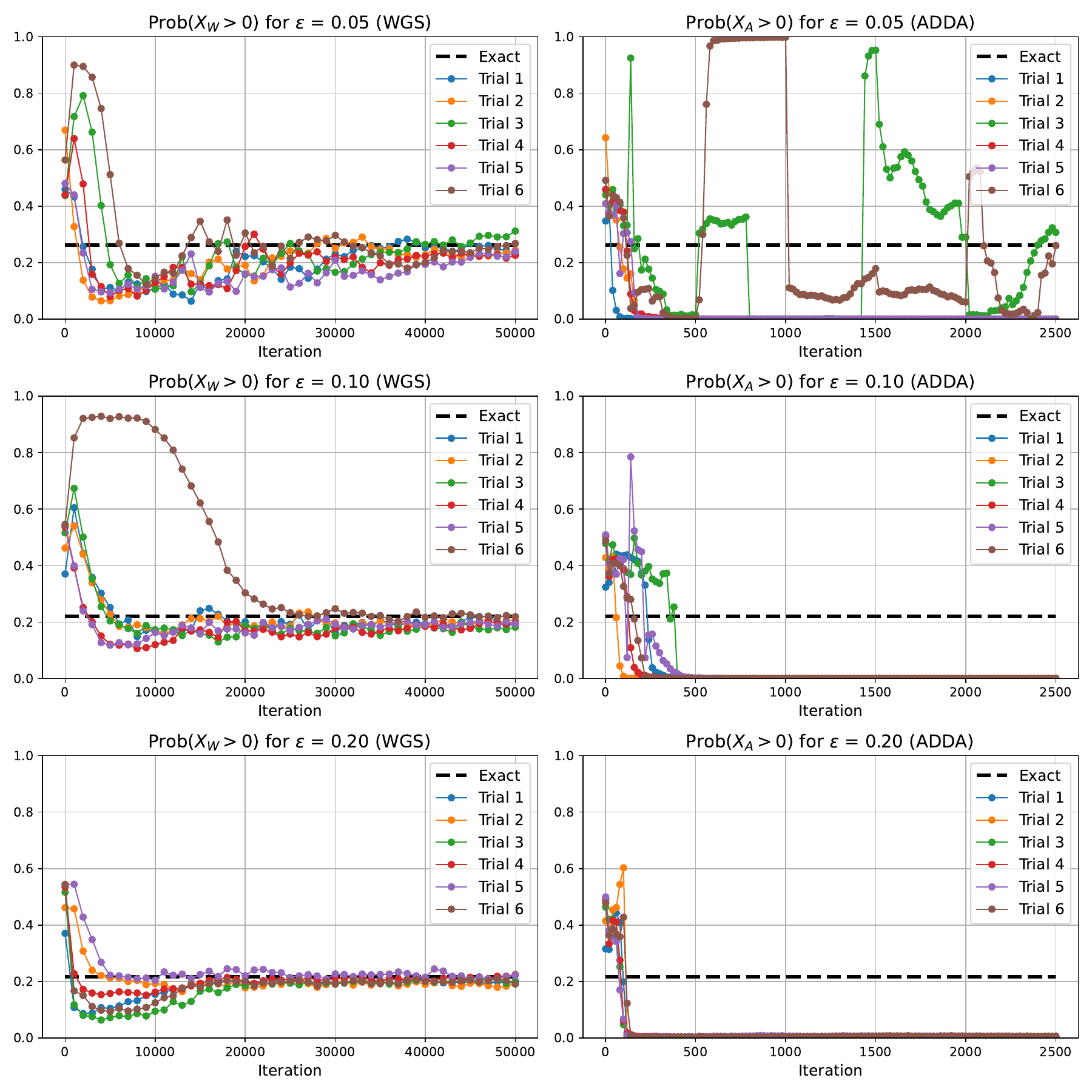}
    \caption{(Example 2) $\mbox{Prob}(X>0)$ versus the training epochs for different $\varepsilon$, compared between the WGS and ADDA methods across six independent trials. The black dotted line represents the true value of $\mbox{Prob}(X>0)$. Each curve corresponds to a trial with a different random seed. Curves approaching zero or one indicate failed trials with mode collapse.} 
    \label{fig:e2_prob_x}
\end{figure}
 Figure \ref{fig:e2_prob_x}   illustrates the evolutions of $\mbox{Prob}(X_W > 0)$ computed by WGS, where $p(x, y)$ is replaced with $p^W_\theta(x, y)$, and $\mbox{Prob}(X_A > 0)$ computed by ADDA, where $p(x, y)$ is replaced with $p^A_\theta(x, y)$, during the iterative process. Each test is repeated for six trials and each trial in the figure corresponds to a different random seed. 

We observe that WGS demonstrates greater robustness compared to ADDA based on the quantity  \eqref{305}. All trials of WGS converge to the exact value of $\mbox{Prob}(X > 0)$ for different values of $\varepsilon$. Notably, in trial $6$ with $\varepsilon = 0.1$, WGS initially converges to a single metastable state but eventually transitions to split across both metastable states. No trials of ADDA   converge to the exact value of $\mbox{Prob}(X>0)$ for all three $\varepsilon$ tested here. The numerical solutions of   ADDA   become trapped into the mode on the left only which has a higher probability  than the mode on the right, resulting in $\mbox{Prob}(X_A > 0)$ being zero. For the smallest $\varepsilon = 0.05$ tested here, the training of ADDA shows instability since the value of $\mbox{Prob}(X_A > 0)$ jumps back and forth between zero and one during the iterations, indicating the oscillation between two uni-modal distribution.
This also explains the large variance in the ADDA error in Table 1.

\section{Training details for Example 5}\label{appendix:d}
\begin{figure}[ht]
    \centering
    \includegraphics[width=0.9\linewidth]{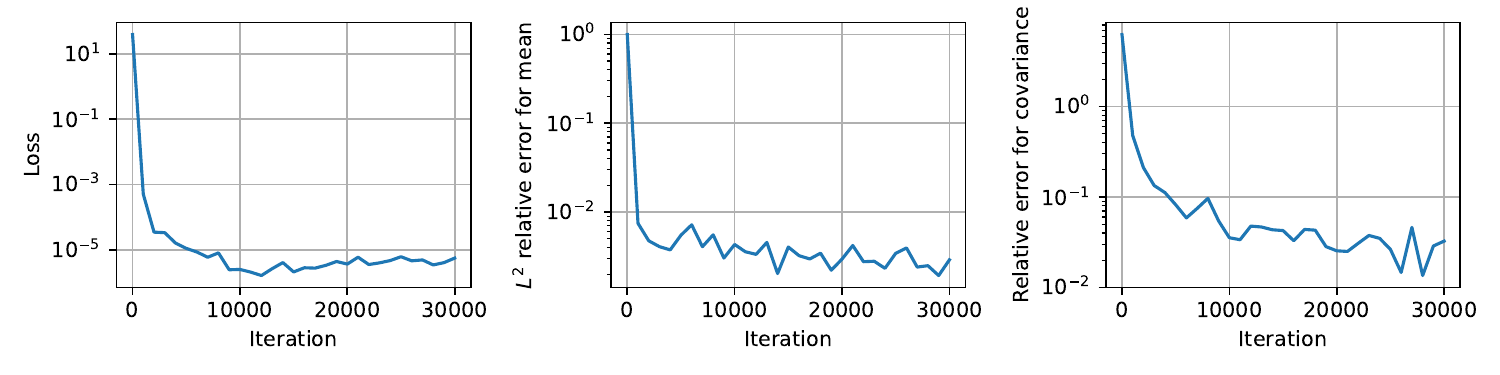}
      \includegraphics[width=0.9\linewidth]{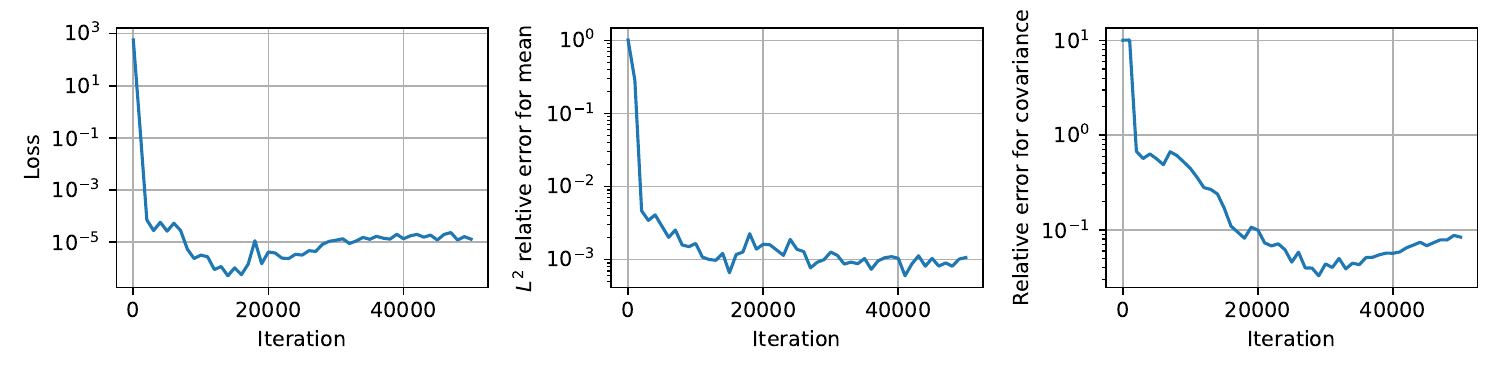}
    \caption{(Example 5) Training behavior in Example 5. The left panel illustrates the decay of the loss function with respect to iterations, while the middle and right panels depict the relative errors for the mean and covariance, respectively, versus the iteration. Top: 40 dimensional problem; Bottom: 100 dimensional problem}
    \label{fig:40100dim_loss}
\end{figure}

For $d=40$ case, we incorporate $20$ affine coupling layers into the Real NVP architecture, with each layer comprising a three-layer neural network with a width of $128$ units. The training process uses $N_I = 30,000$ iterations, with a dataset that comprises $N = 10,000$ sample points. We select $N_\varphi = 200$ test functions with the full batch size at each iteration. 

Throughout the training phase, we use two groups of $\kappa$ in  test functions. The first group is characterized by a fixed  $\kappa=11$, while the second type starts with the same value  $11$ but follows an exponential decay schedule. $\gamma$ was initially set at $0.7$ and then gradually decreased to $0.21$ during the training process. Furthermore, the learning rate was set to $10^{-4}$ and followed an exponential decay schedule.

For $d=100$ case, we use the same Real NVP   network,     $N_I = 50,000$ iterations,  $N = 10,000$ sample points and $N_\varphi = 300$ test functions. Throughout the training phase, we use three groups of test functions (each group has $100$ test functions). The first group has the common fixed scale hyper-parameter  of $\kappa=15$, while the second group all starts with the same   $\kappa=15$  but then follows an exponential decay schedule. Each $\kappa$ of the third group is sample randomly  from the uniform distribution between $7$ and $9.8$.  $\gamma$ was initially set at $0.7$ and then gradually decreased to $0.14$ during the training process. Furthermore, the learning rate was set to $10^{-4}$ and followed an exponential decay schedule.

Figure \ref{fig:40100dim_loss} below
plots the loss, the relative errors (of the mean and the variance) for $d=40$ and $d=100$ to show the training behavior.

\section{Discussion of the hyper-parameters}\label{appendix:e}

\begin{table*}[htbp]
\footnotesize
\caption{The choice of 
$\gamma$ and the hyper-parameters of the boundary loss for examples in the main text }
\label{tab_hp}
\begin{threeparttable}  
\begin{tabularx}{\textwidth}{XXXXXX}
\toprule
Example & $\gamma$  & $\lambda$ & $c$ & $x_0$& $r$\\
\midrule
$1$ & $0.5$ & $10$ & $6$ &$(0,0)$& $6$\\
\midrule
$2(\varepsilon =0.2)$ & $0.8$ & $20$ & $10$ &$(0,0)$& $4$\\
$2(\varepsilon =0.1)$ & $0.8$ & $20$ & $10$ &$(0,0)$& $4$\\
$2(\varepsilon =0.05)$ & $0.8$ & $20$ & $10$&$(0,0)$& $4$\\
\midrule
$3$ & $5$ & $5$ & $5$ &$(0,0,25)$& $30\times40\times40$\tnote{1}\\
\midrule
$4$ & $0.3$ & $5$ & $6$ &$(0,0,\cdots,0)$& $2$\\
\midrule
$5$& $0.7\downarrow$\tnote{2} & $10$ & $6$ &$(0,0,\cdots,0)$& $6$\\
\bottomrule
\end{tabularx}
\begin{tablenotes}
        \footnotesize               
        \item[1] Each number represents the radius in each coordinate.  
        \item[2] $\downarrow$ represents   an exponential decay schedule. 
\end{tablenotes}
\end{threeparttable}
\end{table*}

In Table \ref{tab_hp}, we list the hyper-parameters utilized in our numerical experiments. The first hyper-parameter, $\gamma$, governs the noise level in the mean of the test function,  and the other hyper-parameter in this table influences the boundary penalty, as described in the definition of the boundary loss 
$
L_b=\frac{1}{N}\sum_{i=1}^N \text{Sigmoid}\Big(c\big(\|G(z_i)-x_0\|_2^2-r^2\big)\Big).
$ We use this boundary loss to ensure that nearly all sample data points remain within the ball centered at $x_0$ with radius $r$.

To further investigate the impact of hyper-parameter selections on the outcomes, we conduct a series of sensitivity analysis experiments for the example in Section 5 with $d=40$. 
We keep all hyper-parameters  the same as in Appendix \ref{appendix:d} except the one under the tuning test. All   experiments are repeated  over five independent runs. To estimate the accuracy, we compute the weak loss and the relative errors for the mean and covariance matrix, as done in Example 5. We present only the average  of these outcomes from these independent runs.

\begin{figure}[ht]
    \centering
    \includegraphics[width=0.75\linewidth]{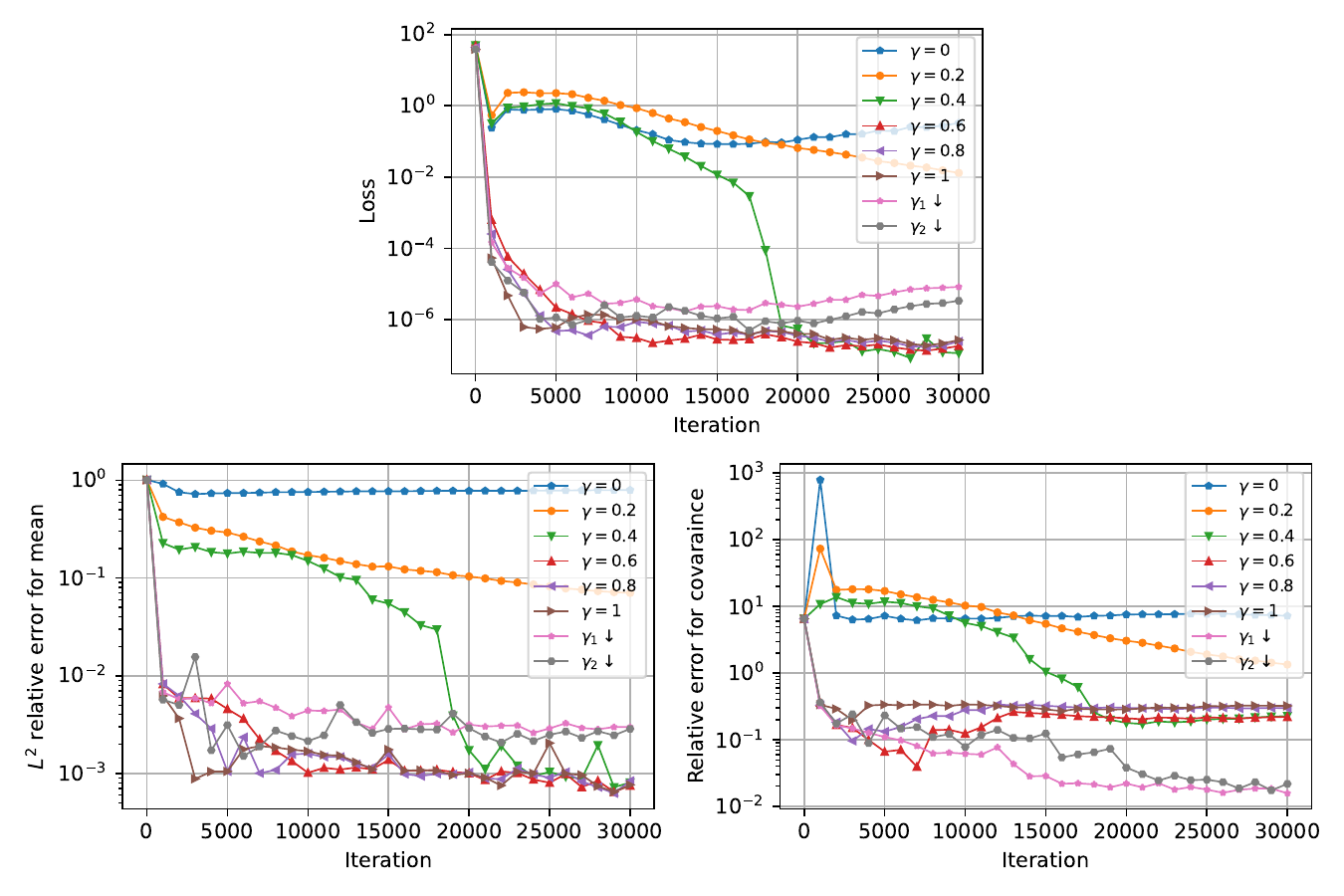}
    \caption{Comparison of different choices of $\gamma$. The upper panel illustrates the WGS loss versus the iterations, while the lower left panel displays the relative error for the mean (left) and the covariance matrix (right). Here, $\gamma_1\downarrow$ indicates that $\gamma$ gradually decreases from $0.8$ to $0.24$, whereas $\gamma_2\downarrow$ indicates that $\gamma$ gradually decreases from $1$ to $0.3$.}
    \label{fig:discussion_params_gamma}
\end{figure}

The hyper-parameter $\gamma$ regulates the additional noise   in the centers of the test function.   As shown in Figure \ref{fig:discussion_params_gamma},  the zero or tiny $\gamma$ value shows a slow convergence and thus it is 
beneficial to allow a relatively large $\gamma$.  Additionally, a decay schedule is recommended for selecting $\gamma$ to   enhance the training process as we done in Example 5. For multi-modal problems, it might be particularly important to tune $\gamma$ in this way for better exploration
to mitigate the issue of mode collapse.

The hyper-parameters   $r$ , $c$ and  $\lambda$   for the boundary loss function are also tested.
As shown in Figure \ref{fig:discussion_params}, different choices of $\lambda$, $c$, $r$, do not influence the WGS loss or the relative error for the mean and covariance matrix in our method.  
\begin{figure}[htbp]
    \centering
    \includegraphics[width=0.9\linewidth]{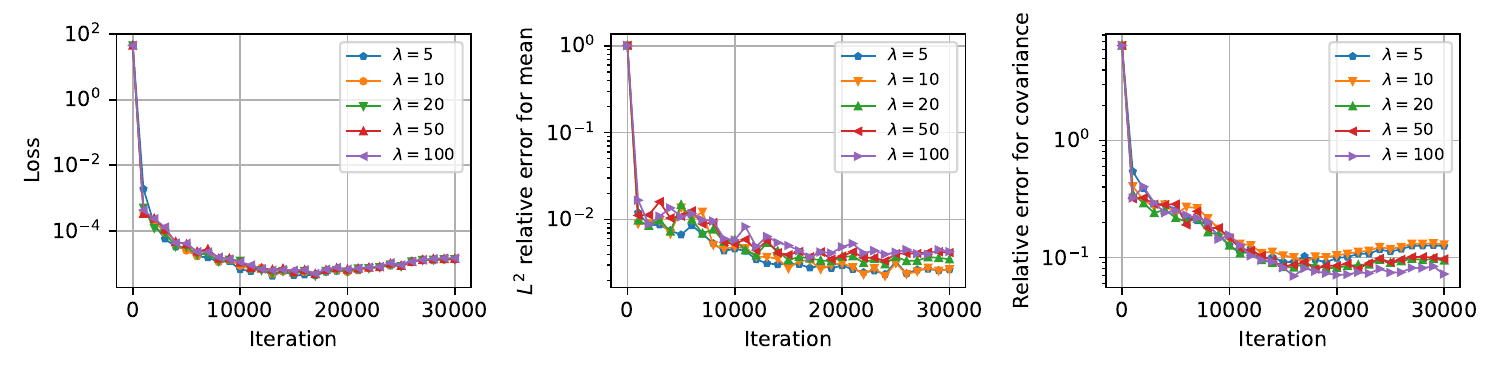}
    \includegraphics[width=0.9\linewidth]{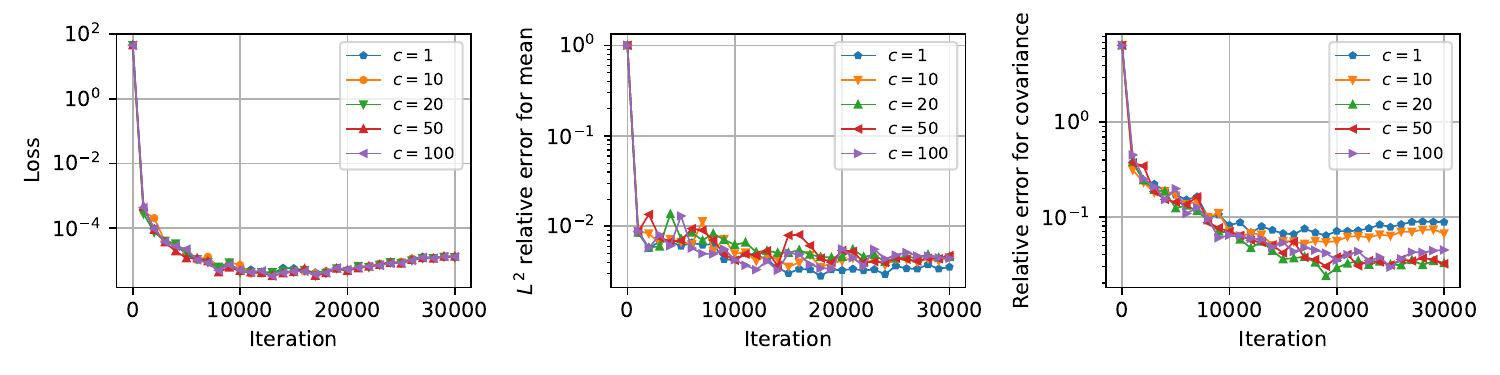}
    \includegraphics[width=0.9\linewidth]{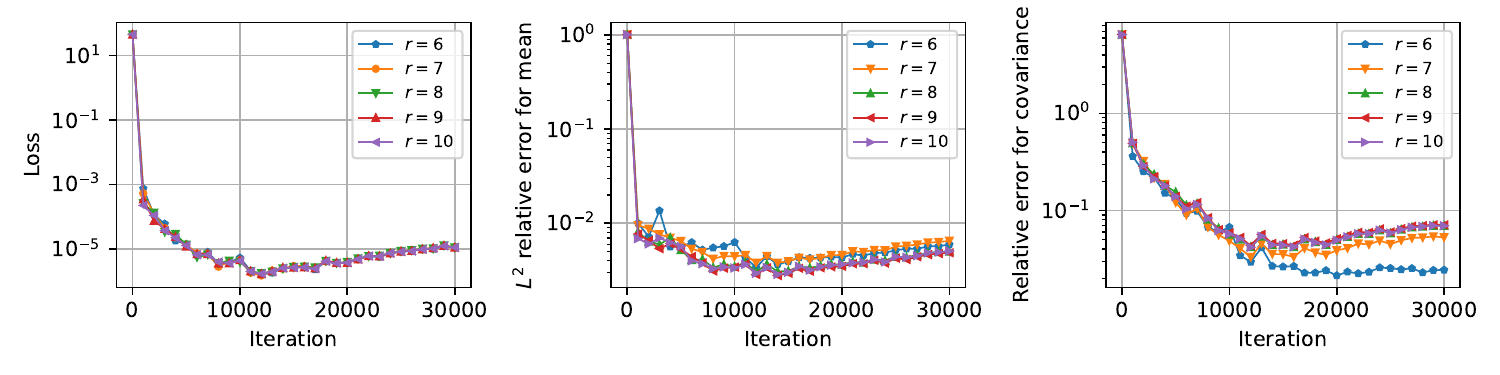}
    
    \caption{Comparison of different choices of $\lambda$ (top), $c$ (middle) and $r$ (bottom) for the boundary loss. The right panel shows the WGS loss versus the iterations, while the middle panel and the left panel present the relative error for the mean and the covariance matrix.}
    \label{fig:discussion_params}
\end{figure}

\bibliography{ref}
 
\bibliographystyle{plain} 

\end{document}